\definecolor{mydarkblue}{rgb}{0,0.08,0.45}
\definecolor{mydarkgreen}{rgb}{0,0.45,0.08}
\def\@fnsymbol#1{\ensuremath{\ifcase#1\or * \or w \or p \or mcp \or \dagger\or \mathsection\or
   \ddager\or \mathparagraph\or \|\or **\or \dagger\dagger
   \or \ddagger\ddagger \else\@ctrerr\fi}}
\title{Teaching Arithmetic to Small Transformers}
\author{%
  Nayoung Lee\thanks{Authors contributed equally to this paper.} \\
  University of Wisconsin-Madison\\
  \texttt{nayoung.lee@wisc.edu} \\
  \And
  Kartik Sreenivasan\footnotemark[1] \\
  University of Wisconsin-Madison\\
  \texttt{ksreenivasa2@wisc.edu} \\
  \And
  Jason D. Lee \\
  Princeton University\\
  \texttt{jasonlee@princeton.edu} \\
  \And
  Kangwook Lee \\
  University of Wisconsin-Madison\\
  \texttt{kangwook.lee@wisc.edu} \\
  \And
  Dimitris Papailiopoulos \\
  University of Wisconsin-Madison\\
  \texttt{dimitris@papail.io} \\
}
\def\eqref#1{equation~\ref{#1}}
\def\1{\bm{1}}
\def\vone{{\bm{1}}}
\def\mM{{\bm{M}}}
\def\mN{{\bm{N}}}
\DeclareMathAlphabet{\mathsfit}{\encodingdefault}{\sfdefault}{m}{sl}
\SetMathAlphabet{\mathsfit}{bold}{\encodingdefault}{\sfdefault}{bx}{n}
\def\gA{{\mathcal{A}}}
\def\gD{{\mathcal{D}}}
\def\gO{{\mathcal{O}}}
\definecolor{commentcolour}{rgb}{0.3,0.7,0.2}
\definecolor{backcolour}{rgb}{0.96,0.96,0.96}
\definecolor{lightgreen}{RGB}{222, 242, 216}
\def\SOUL@hlpreamble{%
    \setul{}{3.5ex}%
    \let\SOUL@stcolor\SOUL@hlcolor
    \dimen@\SOUL@ulthickness
    \dimen@i=-.75ex %
    \advance\dimen@i-.5\dimen@
    \edef\SOUL@uldepth{\the\dimen@i}%
    \let\SOUL@ulcolor\SOUL@stcolor
    \SOUL@ulpreamble
}
\lstdefinelanguage{markdown}{
    comment=[l]{\#},
    morestring=[s]{```}{```},
    commentstyle=\color{commentcolour}\bfseries,
    stringstyle=\color{blue},
    basicstyle=\scriptsize\ttfamily,
    showstringspaces=false,
    breaklines=true,
    breakautoindent=false,
    breakindent=0pt,
    backgroundcolor=\color{backcolour},
    escapeinside={(*@}{@*)},
}
\lstdefinelanguage{markdown2}{
    morestring=[s]{```}{```},
    commentstyle=\color{commentcolour}\bfseries,
    stringstyle=\color{blue},
    basicstyle=\scriptsize\ttfamily,
    showstringspaces=false,
    breaklines=true,
    breakautoindent=false,
    breakindent=0pt,
    backgroundcolor=\color{lightgreen},
    escapeinside={(*@}{@*)},
}
\lstdefinestyle{mystyle}{
    morekeywords={self},
    basicstyle=\scriptsize\ttfamily,
    keywordstyle=\color{blue},
    commentstyle=\color{commentcolour}\bfseries,
    breaklines=true,
    breakautoindent=false,
    showstringspaces=false,
    backgroundcolor=\color{backcolour},
    stringstyle=\color{red}
}
\lstdefinelanguage{PythonPlus}[]{Python}{
  morekeywords=[1]{,as,assert,nonlocal,with,yield,self,True,False,None,} %
  morekeywords=[2]{,__init__,__add__,__mul__,__div__,__sub__,__call__,__getitem__,__setitem__,__eq__,__ne__,__nonzero__,__rmul__,__radd__,__repr__,__str__,__get__,__truediv__,__pow__,__name__,__future__,__all__,}, %
  morekeywords=[3]{,object,type,isinstance,copy,deepcopy,zip,enumerate,reversed,list,set,len,dict,tuple,range,xrange,append,execfile,real,imag,reduce,str,repr,}, %
  morekeywords=[4]{,Exception,NameError,IndexError,SyntaxError,TypeError,ValueError,OverflowError,ZeroDivisionError,}, %
  morekeywords=[5]{,ode,fsolve,sqrt,exp,sin,cos,arctan,arctan2,arccos,pi, array,norm,solve,dot,arange,isscalar,max,sum,flatten,shape,reshape,find,any,all,abs,plot,linspace,legend,quad,polyval,polyfit,hstack,concatenate,vstack,column_stack,empty,zeros,ones,rand,vander,grid,pcolor,eig,eigs,eigvals,svd,qr,tan,det,logspace,roll,min,mean,cumsum,cumprod,diff,vectorize,lstsq,cla,eye,xlabel,ylabel,squeeze,}, %
}
\newtcolorbox{AIbox}[2][]{aibox,title=#2,#1}
\newtcolorbox{mytextbox}[1][]{%
  sharp corners,
  enhanced,
  colback=white,
  height=10cm,
  attach title to upper,
  #1
}
\newtcolorbox{highlightedCodeBox}{
  colback=lightgreen,
  boxrule=0pt,
  arc=0pt,
  boxsep=0pt,
  enhanced jigsaw,
  breakable
}
\newcommand{\eg}{{\it e.g.}, }
\newcommand{\ie}{{\it i.e.}, }
\theoremstyle{definition}
\renewcommand{\arraystretch}{1.1} %
\appto\appendix{\addtocontents{toc}{\protect\setcounter{tocdepth}{0}}}
\begin{document}
\doparttoc %

\maketitle

\begin{abstract}
Large language models like GPT-4 exhibit emergent capabilities across general-purpose tasks, such as basic arithmetic, when trained on extensive text data, even though these tasks are not explicitly encoded by the unsupervised, next-token prediction objective. This study investigates how small transformers, trained from random initialization, can efficiently learn arithmetic operations such as addition, multiplication, and elementary functions like square root, using the next-token prediction objective.
We first demonstrate that conventional training data is not the most effective for arithmetic learning, and simple formatting changes can significantly improve accuracy. This leads to sharp phase transitions as a function of training data scale, which, in some cases, can be explained through connections to low-rank matrix completion. Building on prior work, we then train on chain-of-thought style data that includes intermediate step results. Even in the complete absence of pretraining, this approach significantly and simultaneously improves accuracy, sample complexity, and convergence speed.
We also study the interplay between arithmetic and text data during training and examine the effects of few-shot prompting, pretraining, and model scale. Additionally, we discuss length generalization challenges. Our work highlights the importance of high-quality, instructive data that considers the particular characteristics of the next-word prediction objective for rapidly eliciting arithmetic capabilities.\footnote{Our code is available at \url{https://github.com/lee-ny/teaching_arithmetic}}
\newpage
\tableofcontents
\clearpage

\end{abstract}

\section{Introduction}\label{sec:intro}

Large language models like GPT-3/4, PaLM, LaMDA~\citep{brown2020language, chowdhery2022palm, thoppilan2022lamda} have demonstrated general-purpose properties, often referred to as \emph{emergent abilities}~\citep{wei2022emergent}, for a wide range of downstream tasks like language and code translation, compositional reasoning, and basic arithmetic operations~\citep{webb2022emergent, nye2021show, wei2022chain, shi2022language, wang2022self, srivastava2022beyond, chen2023teaching}. 
What is perhaps surprising, is that these tasks are not explicitly encoded in the model's training objective, which typically is an auto-regressive, next-token-prediction loss. 

Prior research has delved into exploring these capabilities and how they emerge as the scale and of training compute, type of data, and model size vary~\citep{wei2022emergent, chung2022scaling, tay2022transcending}. Untangling the factors, however, remains challenging due to the data complexity and the variety of tasks examined.
Driven by the curiosity to understand the factors that elicit these capabilities in next-token predictors, we set out to pinpoint the key contributors that accelerate the emergence of such abilities. These contributors may include the format and scale of data, model scale, the presence of pre-training, and the manner of prompting. 

To provide a more precise examination of these factors, our study is conducted in a controlled setting: we focus on teaching arithmetic to small transformer models, such as NanoGPT and GPT-2, when trained from random init. Starting with a model of $10.6$ million parameters and scaling up to $124$ million parameters, we use the standard autoregressive next-token prediction loss. Our objective is to understand how these models can efficiently learn basic arithmetic operations like addition, subtraction, multiplication, square root, and sine, thereby providing us with a clearer lens through which to view the elicitation of emergent abilities. Below, we summarize our findings.

 \textbf{Data format and sampling matters. } We first observe that teaching a model addition (or any other operation) using standard addition samples, \ie `$\mathsf{A_3A_2A_1+B_3B_1B_1= C_3C_2C_1}$', is suboptimal, as it requires the model to evaluate the most significant digit $\mathsf{C_3}$ of the result first, which depends globally on all the digits of the two summands. By training on samples with reversed results, \ie `$\mathsf{A_3A_2A_1+B_3B_1B_1= C_1C_2C_3}$', we enable the model to learn a simpler function, significantly improving sample complexity. Additionally, balanced sampling of different ``variations'' of addition, based on the number of carries and digits involved, further enhances learning. Even in this simple setting, we observe relatively sharp phase transitions from 0 to 100\% accuracy as a function of the size of the training data. Although this may seem surprising, we observe that learning an addition map on $n$ digits from random samples is equivalent to completing a low-rank matrix. This connection allows us to offer a reasonable explanation for such phase transitions.

\textbf{Chain-of-thought data during training.} 
Building on these findings, we then explore the potential benefits of chain-of-thought (CoT) data during training. This format includes step-by-step operations and intermediate results, allowing the model to learn the individual components of complex tasks. This format is directly borrowed from related literature, e.g., ~\citep{ling2017program, nye2021show, wei2022chain, zhou2022least, anil2022exploring, zhou2022teaching}. We found that CoT-type training data significantly improved learning in terms of both sample complexity and accuracy in agreement with CoT fine-tuning literature~\citep{nye2021show, chung2022scaling}, though our observation holds {\it even in the absence of language pretraining.}
 We conjecture that this is because breaking down the required compositional function to be learned into individual components allows the model to learn a higher-dimensional but easier-to-learn function map. In Figure~\ref{fig:input_formatting}, we provide examples of the four data formatting methods explored in our work.

\textbf{Training on text and arithmetic mixtures and the role of few-shot prompting.} We also explore the interplay between arithmetic and text data during training, as LLMs are trained on massive amounts of data scraped from the internet~\citep{bubeck2023sparks, openwebtext}, where it is impractical to carefully separate different types of data. We observe how the model's perplexity and accuracy vary with the ratio of text to arithmetic data. We find that learning all arithmetic operations discussed earlier (from addition to square root) can improve the individual performance of each task, and that going from zero-shot to 1-shot prompting (showing one arithmetic example) yields a large accuracy improvement, but there is no significant improvement in accuracy by showing more examples.

\paragraph{The role of pre-training and model scale.}
 We also investigate the role of pretraining by fine-tuning models like GPT-2 and GPT-3 (\texttt{davinci}) and observe that while the zero-shot performance on arithmetic operations is poor, the prior ``skills'' acquired during pretraining facilitate reasonable performance on some basic arithmetic  tasks even with a small number of finetuning samples.
 However, finetuning with non-standard formatting, such as reverse formatting, can interfere with the model's performance when pretrained on standard-formatted operations, leading to decreased accuracy.
 Finally, we conduct studies on how performance in arithmetic changes with scale, and although we find that scale does indeed aid in learning arithmetic operations, it is not a necessary trait.

\textbf{Compositional and length generalization.} One might question if our trained models truly grasp arithmetic. Our findings present a nuanced answer. We find length generalization beyond trained digit lengths difficult. For instance, if a model is trained on all $n$-digit lengths, excluding a specific length, it struggles to compensate and accurately calculate this missing digit length. Consequently, the models achieve high accuracy within trained digit lengths but struggle significantly beyond this range. This suggests that the models learn arithmetic not as a flexible algorithm, but more as a mapping function constrained to trained digit lengths. While this surpasses mere memorization, it falls short of comprehensive arithmetic ``understanding''. 

\textbf{Novelty over prior work.} Our approach heavily builds upon prior work that uses instructive data to enhance model performance, and we do not claim novelty in the style of training data employed. What sets our work apart is the primary focus on randomly initialized models and extensive ablation studies on various sampling/data formatting and model scale settings to isolate the factors that contribute to the fast emergence of arithmetic capabilities. Furthermore, our work offers a few simple but perhaps insightful theoretical justifications of some of the phenomena we observe.

\begin{figure}[t] 
\centering
\includegraphics[width=\textwidth]{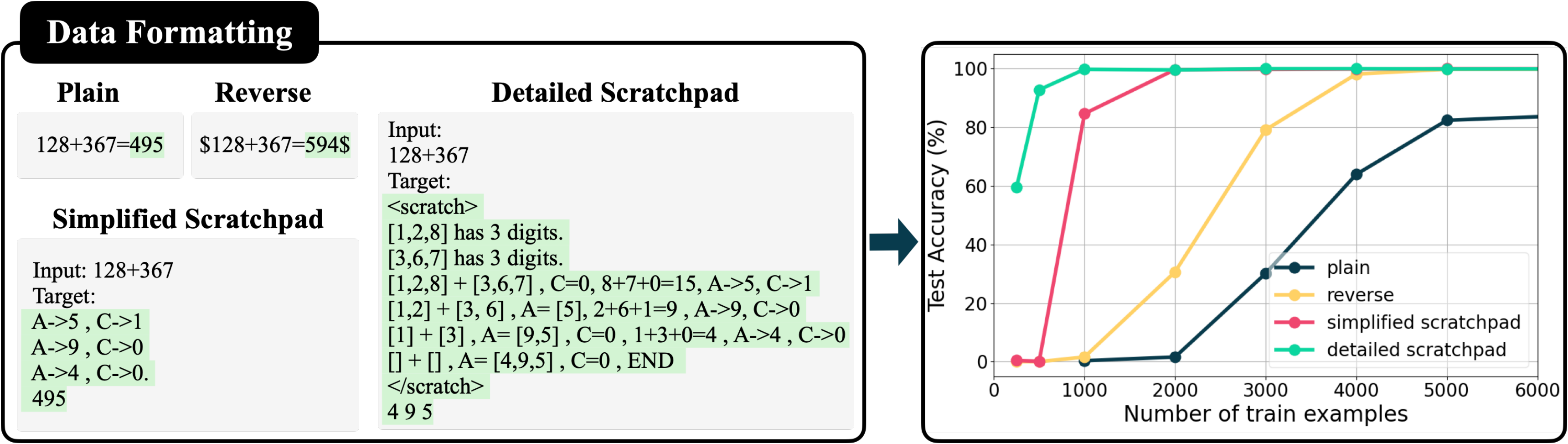}
    \caption{%
    The four data formatting methods investigated in this work: (i) Plain: standard addition formatting (Section~\ref{sec:plain_reverse}), (ii) Reverse: reversing the output (Section~\ref{sec:plain_reverse}), (iii) Simplified Scratchpad: recording the digit-wise sum and carry-ons (Section~\ref{sec:scratchpad-and-ar}), and (iv) Detailed Scratchpad: providing detailed intermediate steps of addition (Section~\ref{sec:scratchpad-and-ar}). We train small transformer models from scratch using data transformed with these various formatting methods for addition. The results (shown on the right) highlight the crucial role of data formatting in performance and sample efficiency. Plain never reaches $100\%$ accuracy and the sample complexity for the remaining methods to learn addition perfectly steadily reduces as we increase the level of detail in the data format.
    } 
\label{fig:input_formatting}
\vspace{-4mm}
\end{figure}

\section{Related Works}\label{sec:related_work}

\textbf{Instructional data/chain-of-thought. } The idea of using detailed reasoning training data predates Transformers~\citep{vaswani2017attention}. \citet{ling2017program, cobbe2021training, nye2021show} use natural language to generate reasoning steps while \citet{roy2016solving, reed2015neural, chen2017towards, cai2017making} show that symbolic reasoning may suffice. \citet{nogueira2021investigating} note that large number of samples with small digits is important for arithmetic tasks~\citep{yuan2023well}. \citet{razeghi2022impact} observe a correlation between the frequency of numbers in the dataset and the performance involving them whereas we find that transformers can learn to add numbers that were not seen during training. Chain-of-thought~\citep{wei2022chain} refers to the model's improved performance when prompted to produce rationale. \citet{zhou2022teaching} show that this can be achieved by providing sufficiently informative exemplars as a few-shot prompt~\citep{brown2020language}. \citet{zhou2022least} showed that \emph{least-to-most} prompting can help GPT-3 solve problems that can be decomposed into simpler sub-problems. Least-to-most prompting consists of first decomposing a complex problem into easier subproblems, and then sequentially solving these subproblems. We extend this notion to simple addition and show that asking the model to output the least significant bit first has a similar effect. \citet{kojima2022large} shows that very often even just prompting the model with \emph{``let's think step by step''} is sufficient to achieve competitive zero-shot accuracy on several benchmark datasets.

\textbf{Arithmetic using Transformer models. } Our work focuses on decoder-only models since they are well-suited for text generation and are widely used in LLMs~\citep{brown2020language, touvron2023llama, mosaicml2023Introducing}. However, encoder-decoder models have also been extensively studied in the literature in the context of learning arithmetic~\citep{kim2021have, wang2021exploring}. \citet{qian2022limitations, lightman2023let, uesato2022solving} explore techniques to improve the arithmetic abilities of pretrained LLMs. \citet{wallace2019nlp} on the other hand, focus on the impact of the learned embeddings. Most results that show Turing-completeness or the universal approximation typically rely on encoder models~\citep{yun2019transformers, perez2021attention, wei2022statistically, giannou2023looped}. \citet{ontanon2021making} study the problem of compositional generalization extensively on benchmark datasets such as SCAN~\citep{lake2018generalization, drozdov2022compositional} and conclude that design changes like relative position encoding~\citep{shaw2018self} can improve performance significantly. \citet{charton2022my, charton2021linear} show that Transformers can learn linear algebra operations with carefully chosen encodings. \citet{hanna2023does} use mechanistic interpretability techniques to explain the limited numerical reasoning capabilities of GPT-2.

\textbf{Beyond Transformers. } While we focus our attention on GPT-like models, there is a rich literature studying other sequence-to-sequence models such as recurrent neural networks (RNNs)~\citep{bowman2013can, bowman2014recursive, zaremba2014learning}. \citet{zaremba2014learningtoexec} show that RNNs can learn how to execute simple programs with for-loops provided they are trained with curriculum learning. \citet{sutskever2014sequence} show that LSTMs show improved performance on text-based tasks such as translation when the source sentences are reversed, which is closely related to what we observe in addition. \citet{kaiser2015neural} propose Neural GPUs which outperform prior RNNs on binary arithmetic tasks and even show length generalization \ie they can perform arithmetic on inputs of lengths that were unseen during training. This is yet to be seen even in modern pre-trained models~\citep{bubeck2023sparks} and therefore it is interesting to see if we can leverage some of these techniques and apply them to existing modern architectures. \citet{dehghani2018universal} propose Universal Transformers (UTs) which introduce a recurrent transition function to apply recurrence over revisions of the vector representation at each position as opposed to the different positions in the input. They show that on the tasks from \citet{zaremba2014learningtoexec}, UTs outperform traditional Transformers and RNNs.

\textbf{Data-centric AI. } More recently, there has been increasing interest in \emph{Data-Centric AI} which emphasizes techniques to improve datasets in order to ensure better performance~\citep{motamedi2021data, hajij2021data}. \citet{gadre2023datacomp} propose a new benchmark where the training code is fixed and the only way to improve performance is to construct new training sets. Several works have also tried to see if the model's reasoning ability can be leveraged to generate explanations and leverage it to solve complicated reasoning tasks~\citep{rajani2019explain, talmor2020leap, zelikman2022star, huang2022large}.

\section{Preliminaries and Experimental Setup}\label{sec:exp}

In this section, we provide a detailed description of our experimental setup, including the model architecture and an overview of the different data formatting and sampling techniques that we employ and evaluate.

\paragraph{Model and Data.}
To examine the individual factors at play, we use NanoGPT~\citep{nanogpt}, a lightweight implementation of the GPT family of models, chosen primarily for its feasibility to train from random initialization under numerous settings. NanoGPT features a decoder-only transformer architecture with six self-attention layers, six heads, and an embedding dimension of $384$, resulting in approximately $10.6$ million parameters. Unless stated otherwise, we use character-level tokenization and absolute position encoding. We train the NanoGPT model from random initialization, which we refer to as training from `scratch', using the conventional next-token prediction objective.

To understand the effect of scale, we extend our experiments to GPT-2 and GPT-3 in Section~\ref{sec:finetuning_pretrained_models}.
We investigate teaching arithmetic from scratch as well as fine-tuning using a pretrained GPT-2. However, for GPT-3, we exclusively use supervised fine-tuning on a pretrained model. Refer to Appendix~\ref{appdx:model} for a more detailed description.

For arithmetic tasks like addition, subtraction, and multiplication, we define the training dataset for a binary operator $f(\cdot)$ as $\gD_\text{train}=\{(a_i, b_i), y_i\}_{i=1}^{N}$ where $y_i = f(a_i, b_i)$. For unary operations such as the sine and square root functions, the training dataset is formulated as $\gD_\text{train}=\{a_i, y_i\}_{i=1}^{N}$, where $y_i = f(a_i)$. The test dataset $\gD_\text{test}$ is constructed by randomly sampling pairs of operands not included in $\gD_\text{train}$.
Throughout training and inference, we apply different \emph{data formatting} techniques on each data sample from the training dataset, creating the final sequence that serves as the model's input.

\paragraph{Data Formatting.}

In the following sections, we will delve into the detailed intuition, and results of the four data formatting approaches that we have deployed in our arithmetic experiments. For this section, we provide a high-level summary of these approaches, each progressively incorporating additional information to form a more comprehensive format. The scratchpad formats are largely adopted from the literature of chain-of-thought (CoT) training~\citep{nye2021show, zhou2022teaching}. 
See Figure~\ref{fig:data_format_examples} and Appendix~\ref{sec:prompt_examples} for detailed examples.

\begin{AIbox}{\bf{\large Different data formatting methods for addition}}
\vspace{5mm}
{\normalsize Four input formatting methods used for the addition task:\\
\textbf{(i) Plain}: standard formatting of addition\\
\textbf{(ii) Reverse}: flips the order of the output and encapsulates each data sample with the`\$' symbol at the start and end.\\
\textbf{(iii) Simplified Scratchpad}: provides carry and digit-sum information for each step of addition, from the LSB to the MSB\footnotemark.\\
\textbf{(iv) Detailed Scratchpad}: provides explicit details of intermediate steps of addition.\\}

\begin{minipage}[t]{0.39\linewidth}
\centering
\textbf{Plain}
\begin{lstlisting}[language=markdown]
128+367=495
\end{lstlisting}

\textbf{Reverse}
\begin{lstlisting}[language=markdown]
$128+367=594$
\end{lstlisting}

\textbf{Simplified Scratchpad}
\vspace{0.6mm}
\begin{lstlisting}[language=markdown]
Input: 128+367 
Target: 
A->5 , C->1 
A->9 , C->0 
A->4 , C->0. 
495
\end{lstlisting}

\end{minipage}
\begin{minipage}[t]{0.61\linewidth}
\centering
\textbf{Detailed Scratchpad}
\begin{lstlisting}[language=markdown, showlines=true,]
 Input: 
 128+367 
 Target: 
 <scratch> 
 [1,2,8] has 3 digits. 
 [3,6,7] has 3 digits. 
 [1,2,8] + [3,6,7] , C=0, 8+7+0=15, A->5, C->1 
 [1,2] + [3, 6] , A= [5], 2+6+1=9 , A->9, C->0 
 [1] + [3] , A= [9,5] , C=0 , 1+3+0=4 , A->4 , C->0 
 [] + [] , A= [4,9,5] , C=0 , END  
 </scratch> 
 4 9 5


\end{lstlisting}
\end{minipage}
\end{AIbox}
\noindent\begin{minipage}{\textwidth}
\captionsetup{type=figure}

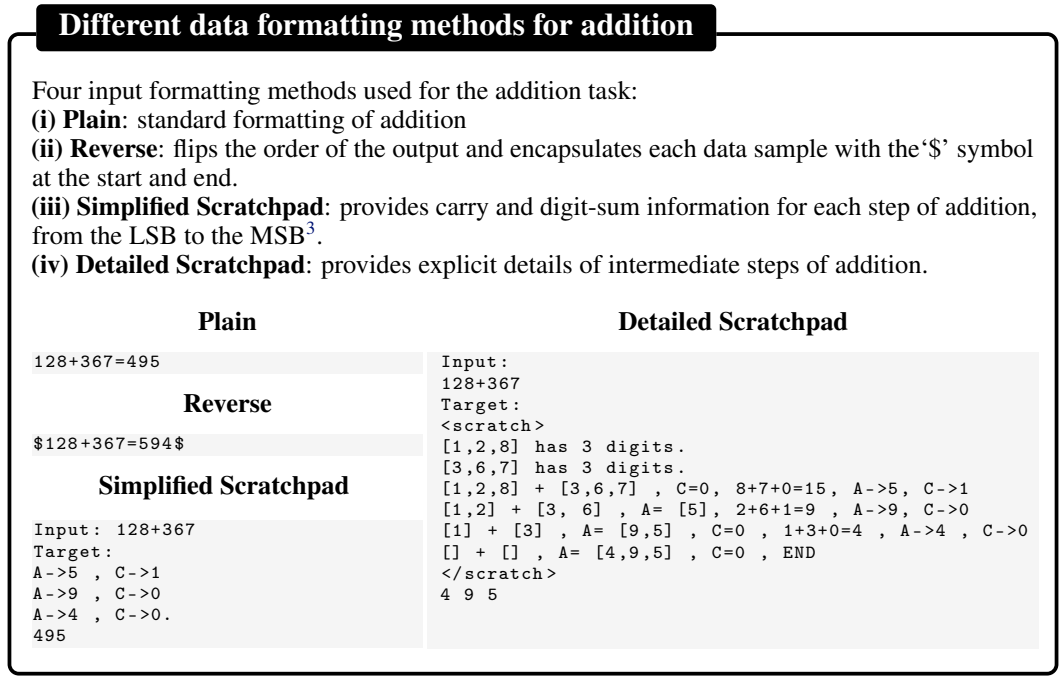
\captionof{figure}{The four input formatting methods used for the addition task. We progressively increase the amount of detail with each format.}\label{fig:data_format_examples}
\end{minipage}
\footnotetext{We deviate from the strict definition of ``most significant bit'' (MSB) and ``least significant bit'' (LSB), typically associated with binary numbers, and reinterpret them for the purpose of this paper as the most significant ``digit'' and least significant ``digit'', respectively.}

Note that we wrap each data sample in the reverse format with the `\$' symbol at the beginning and end as a delimiter. We originally observed improved performance in both the plain and reverse formats when the operands and outputs were zero-padded to a fixed length (\eg $3$ and $4$ digits, respectively, for $3$-digit addition). But later realized that a single symbol can effectively replace zero-padding. While we maintain the original plain format without padding as a baseline -- emphasizing the necessity for improved data formatting for efficient emergence -- we incorporate the `\$'-encapsulation in our modified reverse format.  For further details, refer to Appendix~\ref{sec:appdx_zeropad_delimiter}.

In Section~\ref{sec:plain_reverse}, we explore the limitations of the conventional plain-format data and demonstrate how a simple reversal of the output order can lead to substantial performance improvements and enhanced sample efficiency. We introduce two Lemmas to support and explain these findings. Additionally, in Section~\ref{sec:scratchpad-and-ar}, we present results on the simplified and detailed scratchpad formats, highlighting significant enhancements in sample efficiency for learning addition. We also emphasize the importance of carefully designing the intermediate steps in the detailed scratchpad method.

\paragraph{Structured Data Sampling.}\label{sec:data_sampling}

While data formatting plays a crucial role, we also discover that selecting the appropriate samples for inclusion in the training data is also essential. When sampling operands for $n$-digit addition uniformly at random between $1$ to $10^n-1$, the dataset becomes highly skewed in terms of the number of samples with (i) operands containing a specific number of digits and (ii) operands resulting in a certain number of \emph{carry-on}\footnote{In this paper, we adopt the definition that a carry-on operation involves transferring information from one digit position to another position of higher significance. Therefore, we refer to the ``borrow'' operation in subtraction as a carry operation.} operations. For instance, in the case of 3-digit addition, random sampling results in a meager $0.01\%$ probability of selecting a $1$-digit number. Additionally, $1$ or $2$ carry-on operations are more likely to occur than $0$ or $3$. To address this \emph{imbalance}, we employ a structured sampling approach. Specifically, we aim to \textbf{(i) balance digits} by assigning higher weights to lower-digit numbers during the sampling process as in~\citet{nogueira2021investigating} and \textbf{(ii) balance carry-ons} by ensuring an equal distribution of examples with $0, 1, \ldots, n$ carry-on operations.

\begin{wrapfigure}{r}{0.46\textwidth} %
\vspace{-4mm}
\centering
\includegraphics[width=0.45\textwidth]{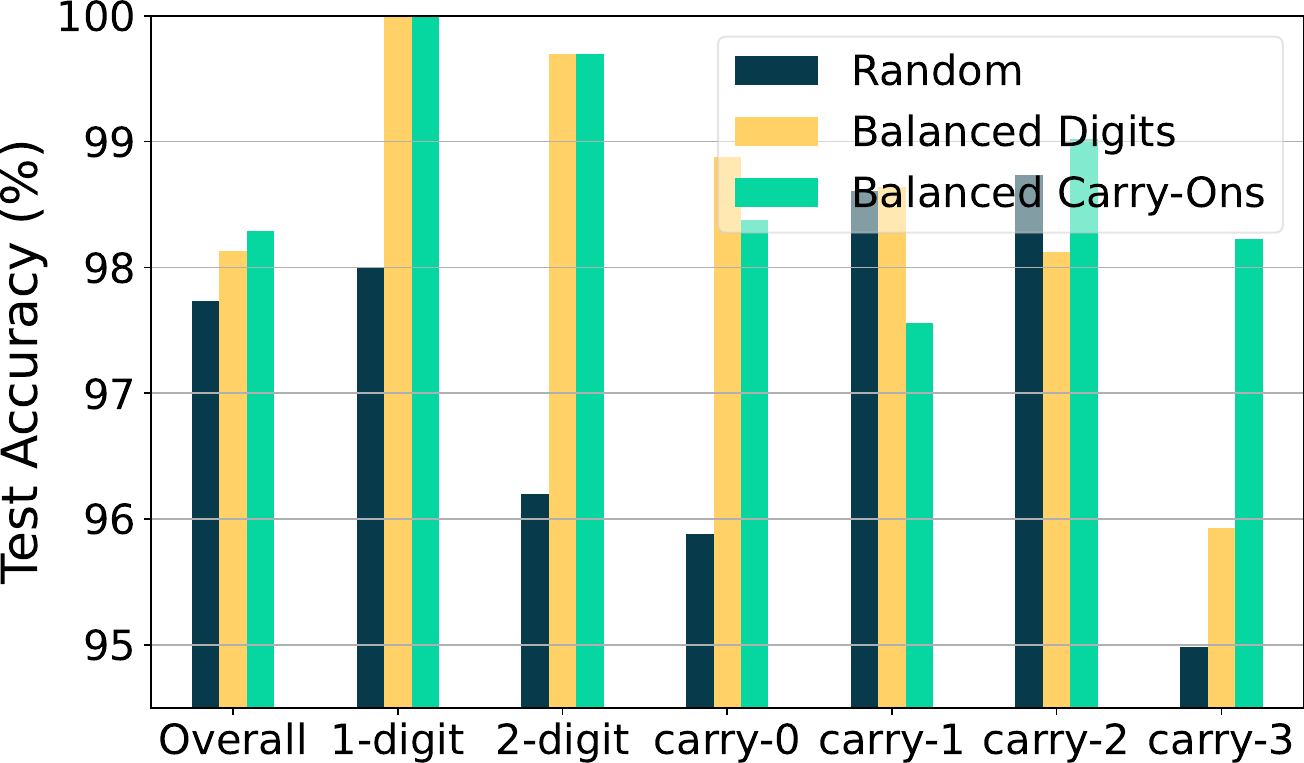}
    \caption{Performance of 3-digit addition on various data sampling methods used: \textbf{(i) Random}: uniform sampling of operands; \textbf{(ii) Balanced digits}: assigning higher sampling weights to operations involving $1$ and $2$-digit numbers; \textbf{(iii) Balanced carry}: balancing the dataset to contain an equal number of carry-on operations. Experiments on addition with zero-padding both operands and output to have 3 and 4 digits respectively.}
\label{fig:data_sampling}
\vspace{-4mm}
\end{wrapfigure}

When sampling $10,000$ examples of 3-digit addition, we include all possible $100$ $1$-digit additions, $900$ $2$-digit samples and $9000$ $3$-digit samples. Note that while the number of samples increase, the fraction of all possible $k-$digit additions that we sample for $k=2, 3$ decreases due to the inherent skew.
The split was chosen heuristically to ensure we saw a ``reasonable'' fraction of all possible $k-$digit samples for all $k$.
Similarly, we ensure that the number of samples with $0, 1, 2$, or $3$ carry-on operations are all approximately $2500$.

Figure~\ref{fig:data_sampling} reveals the importance of ``balancing''. We observe improvements in accuracy across the board while using \emph{Balanced} data when compared to random sampling. Further, random sampling performs relatively poorly even for the simple task of $2-$digit addition. We conjecture that this is due to the fact that the model has not seen enough of these examples. For the remaining experiments, we set the default dataset for addition to be one that has both balanced digits and carry-ons.

\section{Learning Addition in Small Models}\label{sec:plain_reverse}

\begin{wrapfigure}{r}{0.42\textwidth}
\vspace{-12mm}
\centering
\includegraphics[width=0.38\textwidth]{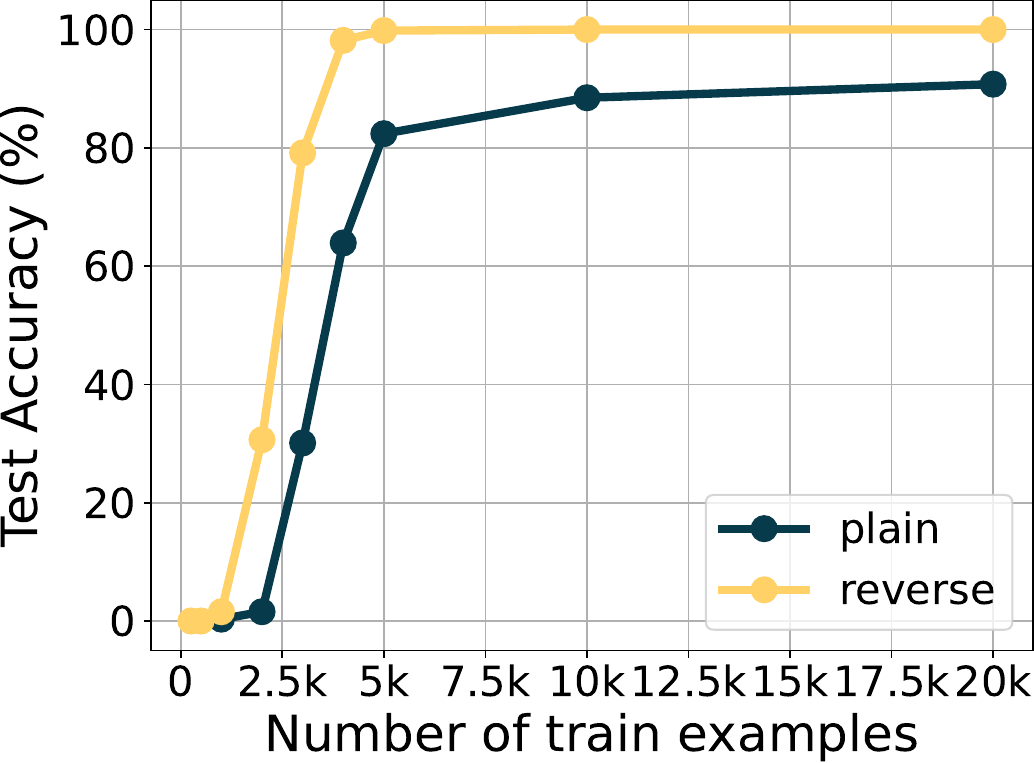}
    \caption{Comparison of NanoGPT model performance on the addition task, trained on plain and reverse formatted data. The conventional plain format exhibits suboptimal performance, even with a larger number of addition examples, whereas a distinct \emph{phase transition} is observed for the reverse format around 2500 train samples where it learns addition perfectly.} 
\label{fig:plain_vs_reverse}
\vspace{-12mm}
\end{wrapfigure}

We start by examining one of the most basic arithmetic tasks: \emph{addition}. Initially, we concentrate on the $3$-digit addition, where the two operands have at most $3$ digits ($999$). Later, in Section~\ref{sec:higher_digits}, we demonstrate that our findings can be applied to larger digits. We assess whether NanoGPT can learn addition from training data of various sizes. As we will soon discover, learning addition may not be as straightforward as one might anticipate.

\subsection{Training on Conventional Data}
We begin by training NanoGPT on conventional addition data in the form of `$\mathsf{A_3A_2A_1+B_3B_1B_1= C_3C_2C_1}$', which we denote as the \emph{plain} data format. However, as shown in Figure~\ref{fig:plain_vs_reverse}, this leads to fairly poor performance. We believe that this is because the next-token prediction objective is not optimized for generating the most significant digit (MSB) first.

The following lemma clarifies the necessity to access all operand digits in order to output the MSB first:

\begin{restatable}{lemma}{lemmaOne}\label{lemma:left-to-right}
Let $A$ and $B$ be two $n$-digit numbers, and let $C=A+B$. Suppose an algorithm $\gA$ outputs the digits of $C$ in decreasing order of significance, then $\gA$ must have access to all digits of $A$ and $B$ starting from the first digit that it outputs.
\end{restatable}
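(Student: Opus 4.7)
My plan is to argue by contradiction using carry propagation. Suppose, toward a contradiction, that $\gA$ outputs the digits of $C$ in decreasing order of significance on every valid input, but that on some input $(A, B)$ it emits its first (most significant) output digit without having queried some digit of the input, say $a_i$ of $A$ (the case of a missing digit of $B$ is symmetric). I will exhibit a companion input $(A', B')$ that agrees with $(A, B)$ in every digit except $a_i$, yet whose sum has a different MSB from $A + B$. Because $\gA$ has not read the one position where the two inputs differ, it must emit the same first output token on both inputs, and this will contradict correctness on at least one of them.

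The heart of the plan is the adversarial construction. For $i < n$, I will fix the higher positions $j = i+1, \ldots, n$ so that $a_j + b_j = 9$ (for instance $a_j = 4,\ b_j = 5$, which keeps the top digits nonzero and therefore preserves the hypothesis that $A$ and $B$ are $n$-digit numbers), and fix the lower positions $j < i$ so that no carry can arrive at position $i$ (for instance $a_j = b_j = 0$). I will then pick two values for $a_i$ with the same $b_i$, one producing a carry-out at position $i$ and one not (e.g., $b_i = 5$ with $a_i = 4$ versus $a_i' = 5$). In the no-carry case, the sums at all positions $j \ge i$ are exactly $9$, so $C$ is an $n$-digit number with MSB $9$; in the carry case, each of those positions collapses to $0$ and a new digit $1$ appears at position $n+1$, so the MSB is $1$. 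The two MSBs therefore differ. For the boundary case $i = n$, a direct construction such as $B = 10^{n-1}$ and toggling $a_n$ between $1$ and $9$ again produces two inputs with different MSBs (e.g., $2$ at position $n$ versus $1$ at position $n+1$).

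To close the loop, I will formalize ``having access to'' in the natural query-model sense: the first output of $\gA$ is a function only of the input positions that $\gA$ has queried up to that moment, so any two inputs agreeing on those positions yield identical first outputs. Under this convention the construction above immediately produces the required contradiction. I expect the main item needing care to be routine bookkeeping: ensuring, in each case of the construction, that the swap does not violate the $n$-digit hypothesis on $A$ and $B$; the specific digit choices above are designed so that $a_n, b_n \neq 0$ either automatically (for $i < n$) or by restricting the swapped value to $\{1, \ldots, 9\}$ (for $i = n$). No single conceptual obstacle stands in the way beyond this standard adversary argument, which is why I expect the write-up to be short.
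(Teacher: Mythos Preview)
Your proposal is correct and follows essentially the same adversary/carry-propagation argument as the paper's proof, which takes $B = 10^n - 1$ (all nines) and $A$ having a single nonzero digit at the unqueried position $k$, toggled between $0$ and $1$ so that $C_{n+1}$ changes. Your construction is a minor variant that is actually more careful than the paper's about keeping both operands genuine $n$-digit numbers (nonzero leading digit), but the underlying idea---an unread digit can flip a carry that cascades to the most significant output---is identical.
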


The lemma suggests that to train the model for addition and to output the most significant digit first, it is necessary for the model to learn a \emph{global} algorithm. Unlike the standard algorithm for addition which consists of computing digit-wise sums and carry-ons, approximating a global algorithm would necessitate learning a more complicated function than necessary. The increased complexity results in decreased accuracy, as observed throughout our experiments. \citet{liu2023exposing} refer to this phenomenon as \emph{attention glitches}.

\subsection{Reversing the Output}\label{sec:addition-plain-vs-reverse}
This leads us to ask, \emph{``is it possible to guide the model to learn a simpler algorithm for addition?''} We propose an intuitive approach to improve performance by training the model to generate the least significant digit (LSB) first, following the way humans typically perform addition. By starting with the LSB and progressing towards the most significant digit (MSB) from right to left, the model can learn a simpler algorithm that relies on just three inputs: the corresponding digits from the operands and the carry-on information (0 or 1) carried from the LSB to the MSB. This approach offers an advantage over the plain format, where generating the MSB first would necessitate the model to learn a more complex function involving all digits in the two operands.

We propose that using this \emph{reverse} format (`$\mathsf{\$A_3A_2A_1+B_3B_1B_1= C_1C_2C_3\$}$')
is more suitable for next-word prediction models. The rationale behind this is that when generating the sum by starting with the least significant digit (LSB), the model only needs to learn a local function of three inputs per digit -- the two relevant digits of the operands and the carry-on from the previous digit. This local operation simplifies the learning function. The following lemma substantiates this idea:

\begin{restatable}{lemma}{lemmaTwo}\label{lemma:right-to-left}
There exists an algorithm that computes $C=A+B$ for two n-digit numbers $A$ and $B$ and outputs its digits in increasing order of significance such that, at each position $i$, the algorithm only requires access to the $i^{\text{th}}$ digits of $A$ and $B$, as well as the carry-on from the previous position.
\end{restatable}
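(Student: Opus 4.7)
The statement is essentially a formalization of the grade-school addition algorithm, so my strategy is constructive: exhibit the algorithm explicitly and then verify by induction that (a) it is correct and (b) the information it consumes at each step is exactly as claimed. Write $A=\sum_{i=1}^{n} a_i \cdot 10^{i-1}$ and $B=\sum_{i=1}^{n} b_i \cdot 10^{i-1}$, where $a_i,b_i \in \{0,1,\dots,9\}$ are the digits of $A$ and $B$ at position $i$ (position $1$ being the LSB). The algorithm I would define produces a sequence of digits $c_1,c_2,\dots,c_{n+1}$ and carries $\gamma_0,\gamma_1,\dots,\gamma_n$ by the recurrence
\begin{align*}
\gamma_0 &= 0, \\
c_i &= (a_i + b_i + \gamma_{i-1}) \bmod 10, \quad &i=1,\dots,n, \\
\gamma_i &= \lfloor (a_i + b_i + \gamma_{i-1})/10 \rfloor, \quad &i=1,\dots,n, \\
c_{n+1} &= \gamma_n.
\end{align*}
The output is the sequence $c_1,c_2,\dots,c_{n+1}$, produced one token at a time in this order (which is increasing order of significance).

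The correctness claim I would prove is that $C = \sum_{i=1}^{n+1} c_i \cdot 10^{i-1}$. I would do this by induction on the position $k$, showing the invariant
\[
\sum_{i=1}^{k} (a_i + b_i) \cdot 10^{i-1} \;=\; \sum_{i=1}^{k} c_i \cdot 10^{i-1} + \gamma_k \cdot 10^{k},
\]
which follows immediately from the defining relation $a_k+b_k+\gamma_{k-1} = c_k + 10\gamma_k$ and the inductive hypothesis. Taking $k=n$ and adding $\gamma_n \cdot 10^n = c_{n+1}\cdot 10^n$ to both sides yields $A+B = \sum_{i=1}^{n+1} c_i \cdot 10^{i-1}$, as desired.

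The locality claim is immediate from the recurrence: the only inputs used to compute $c_i$ (and the carry $\gamma_i$ passed forward) are $a_i$, $b_i$, and $\gamma_{i-1}$. In particular, no digit $a_j$ or $b_j$ with $j \neq i$ is accessed when producing the $i$-th output digit. There is no real obstacle here; the only things to be slightly careful about are (i) handling the final carry $\gamma_n$, which I fold in by defining an $(n{+}1)$-th output digit (and which is consistent with the lemma since it only depends on position-$n$ information), and (ii) being explicit about the indexing convention (LSB = position $1$) so that ``increasing order of significance'' matches the order in which the algorithm emits tokens. The proof is therefore essentially a verification of the standard carry-propagation algorithm, aligned to the emission order that the reverse data format uses.
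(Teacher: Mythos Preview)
Your proposal is correct and takes essentially the same approach as the paper: both exhibit the standard grade-school addition algorithm via the recurrence $c_i = (a_i+b_i+\gamma_{i-1}) \bmod 10$, $\gamma_i = \lfloor (a_i+b_i+\gamma_{i-1})/10\rfloor$, and observe that each step accesses only $a_i$, $b_i$, and the previous carry. If anything, your version is more careful than the paper's, since you supply an explicit inductive invariant for correctness and handle the possible $(n{+}1)$-th output digit, whereas the paper simply asserts correctness is ``easy to see.''
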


Lemma~\ref{lemma:right-to-left} directly follows from the \emph{standard} algorithm for addition, which performs the sum and carry-on operations digit by digit.  The implications of these two lemmas are evident in our experiments when comparing training NanoGPT on \emph{plain} and \emph{reverse} samples. As shown in Figure~\ref{fig:plain_vs_reverse}, the accuracy of \emph{plain} addition plateaus at slightly over $85\%$ even with $10,000$ samples. In contrast, simply training the model on reversed output significantly enhances the performance. Additionally, we observe that the reverse format requires considerably fewer training data to achieve good performance, further reinforcing that the reverse format's associated function has less complexity than the plain format.
What is particularly remarkable is the occurrence of a notable \emph{phase transition} between $1000$ and $4000$ samples for reverse. At this point, the model rapidly transitions from being unable to perform addition to being capable of perfectly adding two $3$-digit numbers. This leads to an important question: 

\begin{quotation}
\begin{center}
    \hspace{-5mm} \noindent \it Why does addition rapidly emerge as the number of training examples increases?
\end{center}
\end{quotation}

\section{Connection to Low-Rank Matrix Completion}\label{sec:lrmc}
Although the rapid phase transition observed in the previous section may initially seem surprising, closer examination reveals a fascinating equivalence: learning an addition map on $n$ digits from random samples can be considered as completing a rank-2 matrix. This equivalence offers a compelling explanation for the phenomenon we observed.
In this section, we delve into the intricate details of this connection and elucidate how learning the addition map can be formulated as low-rank matrix completion (LRMC). Establishing this connection provides meaningful insights into the observed phenomenon. Further, our investigation goes beyond that and highlights the enhanced capabilities of Transformer models. We demonstrate that Transformers possess expanded capabilities that surpass what traditional LRMC algorithms can do.

\subsection{Addition Tables are Rank-2 Matrices}
\begin{figure}[ht]
\vspace{-4mm}
\centering
\subfloat[Matrix Completion of Addition Matrix]{\includegraphics[width=0.45\textwidth]{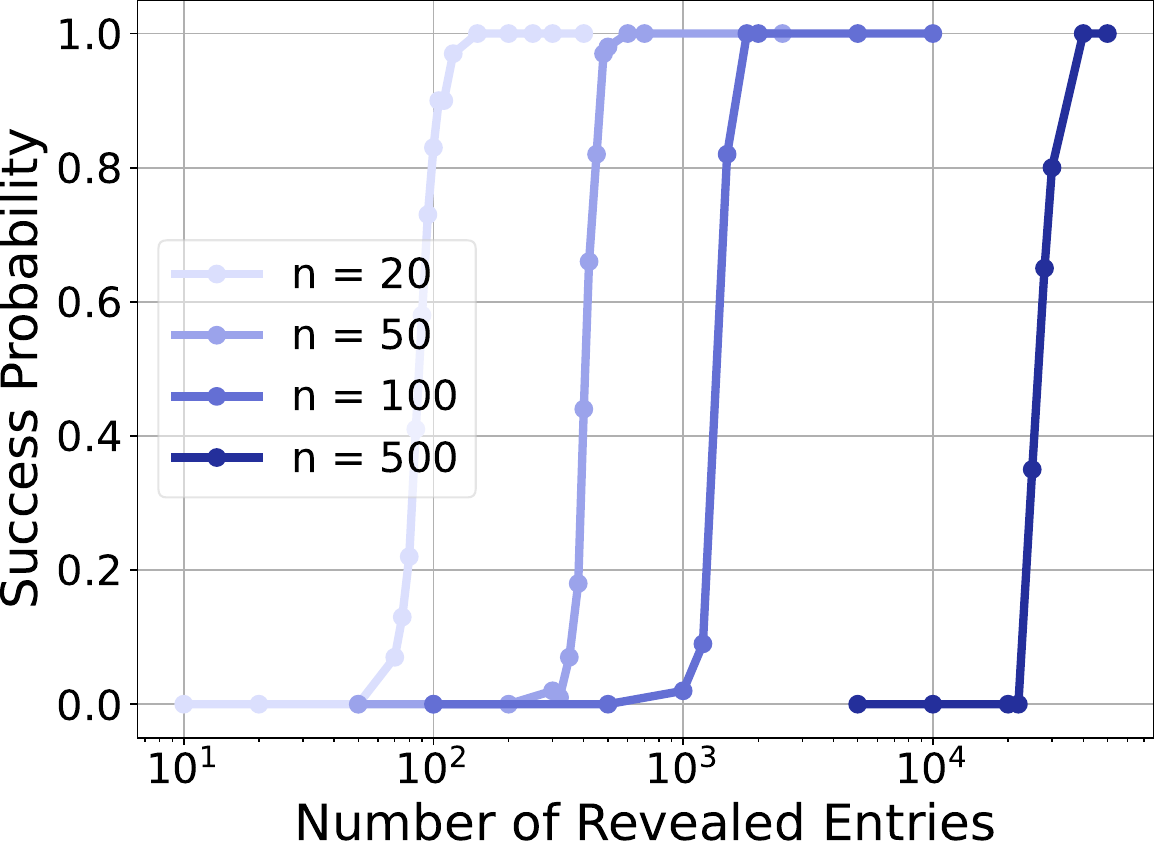}\label{fig:matrix_completion}}
\hspace{0.5cm}
\subfloat[Comparing LRMC and NanoGPT]{\includegraphics[width=0.45\textwidth]{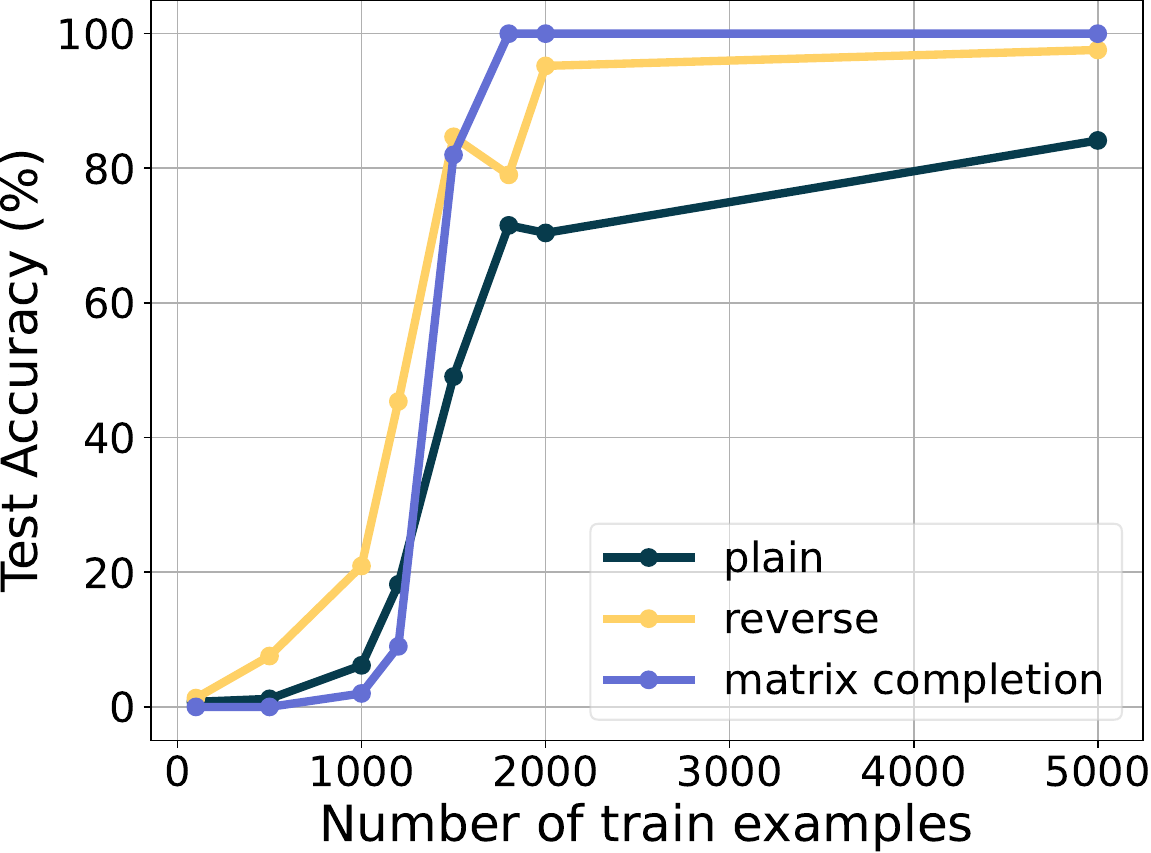}\label{fig:nanogpt_vs_mc}}
    \caption{(a) We run Algorithm~\ref{alg:iterative_2x2_mc}~\citep{kiraly2015algebraic}, a simple iterative algorithm for 2-rank matrix completion for the addition matrix $(n=20,50,100,500)$ and report the success probability over multiple random trials while varying the number of revealed entries. As anticipated, a sharp phase transition occurs when approximately $\gO(n)$ entries are revealed. (b) We compare the performance of a NanoGPT model trained on a dataset containing $n=100$ samples (\ie $2$-digit addition) to that of the corresponding LRMC problem using the same sample set. Notably, the phase transition at around $1500$ samples, where both NanoGPT and Algorithm~\ref{alg:iterative_2x2_mc} begin learning addition almost flawlessly, is remarkably similar.} 
\end{figure}

Learning addition from samples can be formulated as a rank-2 Matrix Completion (MC) problem involving an $n \times n$ matrix $\mM$, where the $(i,j)$-th entry $M_{i,j}$ represents the output of the addition `$i+j$'. Such $\mM$ can be decomposed into the sum of two rank-one matrices, $\mN \vone^T$ + $\vone \mN^T$, where $\mN$ is a column vector with entries $\{1,\dots n\}$, and $\vone$ is a vector of $n$ ones. Thus, learning addition from samples can be viewed as solving the MC problem in which only the entries corresponding to those samples are revealed. When the underlying matrix is noiseless and of rank-2, \citet{kiraly2015algebraic} demonstrates that a simple iterative algorithm (Algorithm~\ref{alg:iterative_2x2_mc} in Appendix~\ref{sec:appdx_lrmc}) is optimal.
As depicted in Figure~\ref{fig:matrix_completion}, a sharp \emph{phase transition} occurs at $\gO(n)$. This aligns with Theorem-2 from \citet{recht2011simpler} which states that the exact convex relaxation to the MC problem has a unique solution as long as $\widetilde{\gO}(n)$ samples are observed.

The sharp phase transition observed in LRMC bears a resemblance to what we notice in NanoGPT. To further investigate this phenomenon, we focus on $2$-digit addition ($n=100$) as shown in Figure~\ref{fig:matrix_completion}. We evaluate the performance of learning addition through NanoGPT in comparison to LRMC by constructing a training dataset consisting of the matrix's revealed entries in either plain or reverse format. It is important to note that the training dataset is no longer ``\emph{balanced}'', as the revealed entries are randomly and uniformly sampled for the LRMC experiments. The comparison between NanoGPT and LRMC results is presented in Figure~\ref{fig:nanogpt_vs_mc}. Remarkably, both NanoGPT and LRMC exhibit a similar phase transition at approximately $1500$ samples, where they both start to learn addition almost perfectly. This observation regarding LRMC offers an explanation for the rapid emergence of addition in NanoGPT.

\subsection{NanoGPT Generalizes better than Matrix Completion solutions}
We noted above that there are some striking similarities between the addition map learned by NanoGPT and LRMC. However, we now delve deeper and find that this map exhibits capabilities beyond LRMC. A well-known limitation of LRMC is its inability to generalize when entire rows or columns are empty. Therefore, we intentionally hide certain numbers in the training dataset or specific digit positions, and examine whether our model can still learn addition.

\paragraph{Generalizing to unseen numbers.} In order to further investigate the connection with LRMC, we exclude an increasing fraction of the numbers from the training data and evaluate the model's ability to learn addition. As shown in Table~\ref{tab:hiding_numbers}, the answer to this question is a resounding \emph{Yes!} The model achieves almost perfect accuracy even when excluding half of all possible $3-$digit numbers. More precisely, we randomly choose $100/200/500$ numbers and exclude them from the training data. We then evaluate the trained models two metrics: (i) \textbf{Overall accuracy:} which measures the accuracy over a random set of $10,000$ examples and (ii) \textbf{Exclusion accuracy:} which measures the accuracy only over the excluded set. 
Remarkably, excluding numbers from the training data sometimes leads to improved performance. We conjecture that this may be due to the effect of regularization, similar to random masking or cropping images in vision tasks. Note that these results indicate that the model is not simply performing LRMC. In the LRMC setting, even a single missing number corresponds to an empty row or column, which cannot be recovered. Hence, the ability of the NanoGPT model to generalize to missing numbers signifies its distinct capabilities beyond LRMC.

\begin{table}[ht!]
\vspace{-2mm}
  \caption{Impact of excluding numbers on addition task: NanoGPT models trained with $100/200/500$ excluded operands show no significant drop in accuracy and in some cases, the performance even improves. Note that models trained with \emph{reverse} data remain consistently at $100\%$ accuracy.}
  \label{tab:hiding_numbers}
  \vspace{1mm}

  \centering
  \small
    \setlength{\tabcolsep}{4pt} %
    \renewcommand{\arraystretch}{1.0}
		 {
\begin{tabular}{ccccccrcr}
\toprule
\multicolumn{1}{l}{} & \multicolumn{2}{c}{No Exclusion}                      & \multicolumn{2}{c}{\shortstack{Excluding\\100 numbers}} & \multicolumn{2}{c}{\shortstack{Excluding\\200 numbers}}                    & \multicolumn{2}{c}{\shortstack{Excluding\\500 numbers}}                    \\ \midrule
                     & Plain          & Rev                           & Plain             & Rev            & Plain                       & \multicolumn{1}{c}{Rev} & Plain                       & \multicolumn{1}{c}{Rev} \\ \midrule
\shortstack{Overall Accuracy}     & 87.18\%        & 99.97\%                           & 87.94\%           & 100.00\%           & \multicolumn{1}{r}{87.24\%} & 99.99\%                     & \multicolumn{1}{r}{88.15\%} & 99.99\%                     \\
\shortstack{Exclusion Accuracy}      & - & - & 92.55\%           & 100.00\%           & \multicolumn{1}{r}{92.15\%} & 99.95\%                     & \multicolumn{1}{r}{90.85\%} & 100\%                       \\ 
\bottomrule
\end{tabular}
}
\vspace{-2mm}
\end{table}

\paragraph{Generalizing to unseen digits.} Building upon the model's robustness to excluded numbers, we further investigate its ability to handle excluded digits.
Intuitively, this should be even more challenging since excluding a digit means the model cannot learn directly how to operate in that position. Instead, it would have to generalize and infer that digits act similarly across all positions. We construct datasets with the number $\mathbf{5}$ excluded in $1$st (LSB), $2$nd, and $3$rd (MSB) positions, and train separate models on each of these datasets. We compare the resulting models by evaluating \textbf{overall accuracy} on a test set of $10,000$ randomly sampled numbers, as well as their accuracy specifically on samples with $\mathbf{5}$ in each position which we call \textbf{exclusion accuracy}.

The results presented in Table~\ref{tab:hiding_5} indicate that the model is not as robust to excluding digits compared to excluding numbers. However, it still achieves more than $66\%$ accuracy on every test and maintains an overall accuracy above $85\%$. Moreover, it appears that excluding a number in the least significant position yields the worst performance. This can be attributed to the fact that learning addition in this position is transferable to other positions since it is unaffected by carry-on operations. Failing to learn addition in this position, however, will have a detrimental impact on other positions as well.

\begin{table}[ht!]
\vspace{-2mm}
\center
  \caption{Impact of excluding digits on addition task: We investigate whether GPT-based models can infer addition on an excluded digit in a specific position from training data on other positions. We compare NanoGPT models trained with and without an excluded digit and find that excluding digits is harder to learn but not entirely impossible, with the worst performance observed when excluding the least significant digit.}
  \label{tab:hiding_5}
  \vspace{1mm}
\centering
\small
\setlength{\tabcolsep}{4pt} %
    \renewcommand{\arraystretch}{1.0}
\begin{tabular}{c|c|cccc}
\toprule
\multicolumn{1}{l|}{Excluded position}                              & \multicolumn{1}{l|}{\shortstack{Input\\format}} & Overall Acc & \shortstack{``5'' in the \\1st (LSB) digit} & \shortstack{``5'' in the \\2nd digit} & \shortstack{``5'' in the \\3rd (MSB) digit} \\ \midrule
\multirow{2}{*}{No exclusion}                         & Plain                 & 87.18\%     & 87.50\%                    & 88.65\%              & 91.80\%                    \\
                                                   & Reverse               & 99.97\%     & 99.90\%                    & 99.95\%              & 100\%                      \\
\multirow{2}{*}{\shortstack{1st (LSB) digit}} & Plain                 & 85.05\%     & \textbf{76.70\%}           & 85.80\%              & 88.35\%                    \\
                                                   & Reverse               & 93.31\%     & \textbf{66\%}              & 94.80\%              & 94.45\%                    \\
\multirow{2}{*}{\shortstack{2nd digit}}       & Plain                 & 85.44\%     & 84.55\%                    & \textbf{78.50\%}     & 90.15\%                    \\
                                                   & Reverse               & 98.85\%     & 98.85\%                    & \textbf{94.20\%}     & 99.50\%                    \\
\multirow{2}{*}{\shortstack{3rd (MSB) digit}} & Plain                 & 85.70\%     & 85.35\%                    & 87.35\%              & \textbf{83.45\%}           \\
                                                   & Reverse               & 97.18\%     & 97.25\%                    & 97.35\%              & \textbf{85.45\%}           \\ 
\bottomrule
\end{tabular}
\vspace{-2mm}
\end{table}

\paragraph{The distinct learning mechanism of NanoGPT.} The phase transition of LRMC offers significant insights into NanoGPT's learning process. Nevertheless, further experiments clearly demonstrate that NanoGPT's mechanism for learning addition is fundamentally different from LRMC. It can successfully learn addition even when numbers or digits are intentionally excluded from the training data, thereby exhibiting generalization capabilities that far exceed that of typical LRMC algorithms.

\section{The power of Chain-of-Thought: Incorporating Intermediate Steps in Training Data}\label{sec:scratchpad-and-ar}

So far, we observed that utilizing the straightforward method of reversing the output can result in remarkable performance, exceeding that of LRMC in learning addition. Nonetheless, it may be possible to expedite the emergence of addition by further enhancing the data format. As addition is a multi-step process, we further explore the idea of incorporating additional information about each step. We adopt a Chain-of-Thought (CoT) style approach, where we guide the model to learn addition step-by-step. In the subsequent sections, we assess the effect of incorporating these intermediate steps on the performance of small models. We demonstrate that this results in a substantial improvement in sample complexity of learning addition and carefully analyze how the level of detail offered for each step impacts the model's performance.

\subsection{Training on Chain-of-Thought Data}

In the following experiments, we evaluate if training on scratchpad data further improves the learning of addition. As described briefly in Section~\ref{sec:exp}, scratchpad data incorporates step-by-step instructions in varying amounts of detail into the samples. This approach aims to help the model learn addition as a compositional function. We explore two levels of detail in the provided instruction steps: \textbf{Simplified Scratchpad} format offers minimal information -- the sum and carry information for each digit/step. \textbf{Detailed Scratchpad} provides comprehensive information on how to execute each step in the addition process in natural language. By comparing the performance of the model trained with these different levels of detail, we can analyze its impact on the model's ability to learn addition effectively.

\begin{wrapfigure}{r}{0.4\textwidth}
\vspace{-4mm}
\centering
\includegraphics[width=0.38\textwidth]{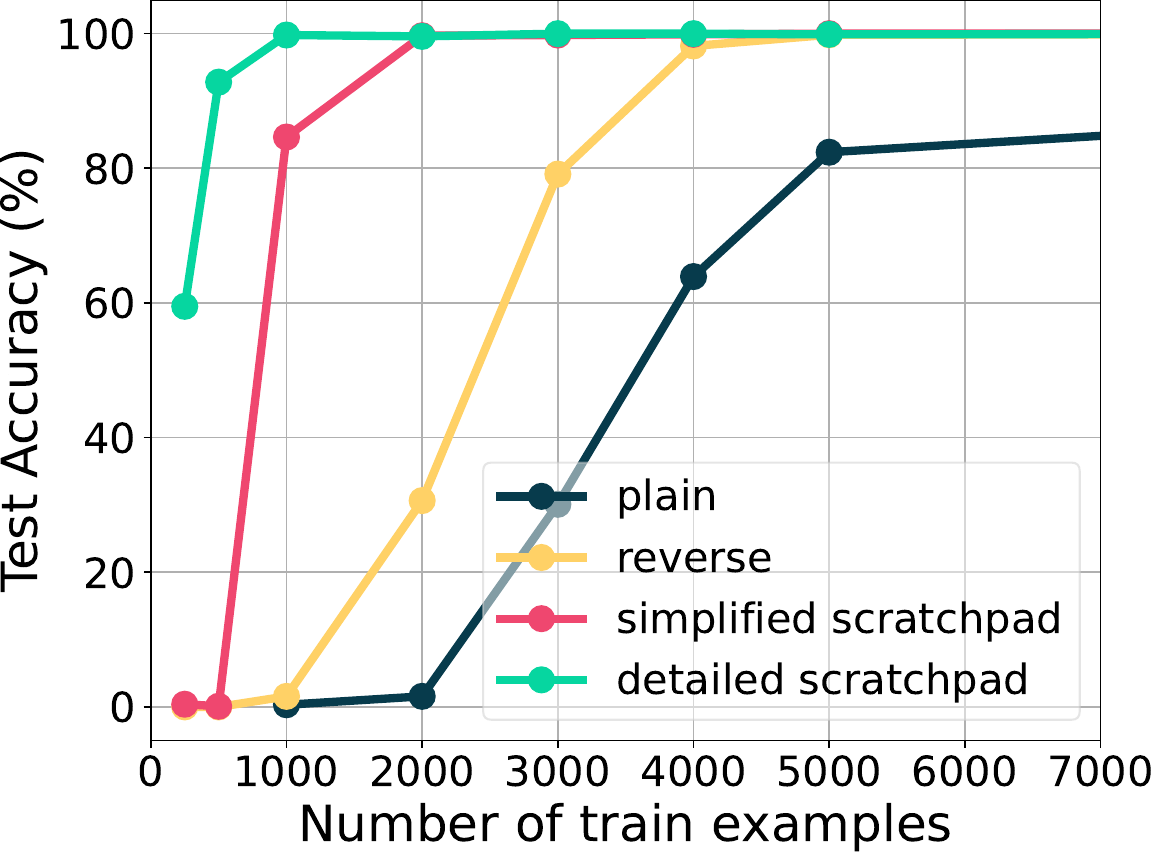}
    \caption{Comparison of sample efficiency: evaluating performance on training datasets with different numbers of addition samples. While all modified methods (reverse, simplified scratchpad, and detailed scratchpad) achieve 100\% test accuracy, they exhibit varying requirements in terms of the number of addition examples in the training dataset to reach optimal performance.} 
\label{fig:nanogpt_sample_efficiency}
\vspace{-8mm}
\end{wrapfigure}

The results presented in Figure~\ref{fig:nanogpt_sample_efficiency} demonstrate the effectiveness of different data formats for training addition.
The model trained on \textbf{Simplified Scratchpad} data achieves $100\%$ accuracy with only $2000$ samples, whereas the \textbf{Reverse} format requires more than double the number of samples. Furthermore, the \textbf{Detailed Scratchpad} format, which provides even more detailed information, achieves perfect addition with just $1000$ samples.
This indicates that incorporating more information enables the model to learn addition more efficiently, requiring fewer examples. 
We conjecture that this is because breaking down the required compositional function to be learned into individual components allows the model to learn a higher-dimensional but easier-to-learn function map.
We note that while CoT-style training enhances sample efficiency, it may not necessarily be the most ``token-efficient'' approach. We delve into this aspect in more detail in Section~\ref{sec:cost_analysis}.
In summary, incorporating scratchpad data and decomposing the addition task into steps offer a promising strategy to improve the performance and efficiency of small models in learning addition from scratch.

\subsection{The Importance of Intermediate Step Design: Subtraction}\label{sec:appendix_subtraction_detailed_scratchpad}

In this section, we underscore the significance of \emph{meticulously designing the intermediate steps in a Chain-of-Thought manner}. Specifically, we focus on the \textbf{subtraction} task and conduct experiments to compare two different versions of the detailed scratchpad for this operation (see examples in Figure~\ref{fig:subtraction_ar}). These trials shed light on the importance of decomposing the subtraction task into \emph{simpler intermediate steps}. Unlike addition, subtraction behaves differently depending on whether the first operand ($a$) is greater than the second operand ($b$) or vice versa.

\begin{AIbox}{\bf{\large Detailed scratchpad formatting for different arithmetic tasks}}
\vspace{5mm}
{\footnotesize Examples of two variations of detailed scratchpad formatting for subtraction, considering the scenario where the first operand $a$ is greater than the second operand $b$, and vice versa. In Version 1, a result processing step is included in the final stage to handle negative outputs. In Version 2, the operands are compared at the beginning, and if $b$ is larger, their order is reversed.\\}

\textbf{Prompt (Case 1. $a-b\geq0$) :}\newline
{\tt \footnotesize Input:\\
367-128\\
Target:}

\begin{minipage}[t]{0.5\linewidth}
\centering
\textbf{Version 1. }
\begin{lstlisting}[language=markdown]
...
<scratch>
[3,6,7] has 3 digits.
[1,2,8] has 3 digits.
[3,6,7] - [1,2,8] , A=[] , C=0 , 7-8-0+10=9 , A->9 , C->-1
[3,6] - [1,2] , A=[9] , C=-1 , 6-2-1=3 , A->3 , C->0
[3] - [1] , A=[3,9] , C=0 , 3-1-0=2 , A->2 , C->0
[] - [] , A=[2,3,9]
200+39=239 , END # result processing
</scratch>
2 3 9
\end{lstlisting}
\end{minipage}
\begin{minipage}[t]{0.5\linewidth}
\centering
\textbf{Version 2.}
\begin{lstlisting}[language=markdown]
...
<scratch>
[3,6,7] has 3 digits.
[1,2,8] has 3 digits.
367>=128 # comparison of two operands
[3,6,7] - [1,2,8] , A=[] , C=0 , 7-8-0+10=9 , A->9 , C->-1
[3,6] - [1,2] , A=[9] , C=-1 , 6-2-1=3 , A->3 , C->0
[3] - [1] , A=[3,9] , C=0 , 3-1-0=2 , A->2 , C->0
[] - [] , A=[2,3,9] , END
</scratch>
2 3 9
\end{lstlisting}
\end{minipage}

\textbf{Prompt (Case 2. $a-b < 0$) :}\newline
{\tt \footnotesize Input:\\
128-367\\
Target:}

\begin{minipage}[t]{0.5\linewidth}
\centering
\textbf{Version 1.}
\begin{lstlisting}[language=markdown]
...
<scratch>
[1,2,8] has 3 digits.
[3,6,7] has 3 digits.
[1,2,8] - [3,6,7] , A=[] , C=0 , 8-7-0=1 , A->1 , C->0
[1,2] - [3,6] , A=[1] , C=0 , 2-6-0+10=6 , A->6 , C->-1
[1] - [3] , A=[6,1] , C=-1 , 1-3-1=-3 , A->-3 , C->-1
[] - [] , A=[-3,6,1]
-300+61=-239 , END # result processing
</scratch>
-2 3 9
\end{lstlisting}
\end{minipage}
\begin{minipage}[t]{0.5\linewidth}
\centering
\textbf{Version 2.}
\begin{lstlisting}[language=markdown]
...
<scratch>
[1,2,8] has 3 digits.
[3,6,7] has 3 digits.
128<367 : 128-367=-(367-128) # comparison
[3,6,7] - [1,2,8] , A=[] , C=0 , 7-8-0+10=9 , A->9 , C->-1
[3,6] - [1,2] , A=[9] , C=-1 , 6-2-1=3 , A->3 , C->0
[3] - [1] , A=[3,9] , C=0 , 3-1-0=2 , A->2 , C->0
[] - [] , A=[2,3,9] , END
</scratch>
-2 3 9
\end{lstlisting}
\end{minipage}
\end{AIbox}
\noindent\begin{minipage}{\textwidth}
\captionsetup{type=figure}
\captionof{figure}{Two versions of detailed scratchpad formatting for subtraction.}\label{fig:subtraction_ar}
\end{minipage}

The first strategy (Version 1 in Figure~\ref{fig:subtraction_ar}) involves performing digit-wise subtraction starting from the least significant bit (LSB) and considering borrows when necessary. However, this strategy produces incorrect results when the first operand is smaller than the second operand. In such cases, we subtract the number in the most significant bit (MSB) position multiplied by 10 to the power of (number of digits in the output - 1) from the remaining digits in the output. An example illustrating this approach is shown in Version 1, Case 2.
Alternatively, we can adopt a more familiar strategy. If the first operand is smaller than the second, we swap the operands and compute the negation of the subtraction of the swapped operands: $a - b = -(b - a)$ (referred to as Version 2).

The results in Figure~\ref{fig:subtraction_ar_result} indicate that Version 2, which involves comparing two operands, performs considerably worse than Version 1. In Version 1, each intermediate step only requires the simpler 1-digit subtraction, along with addition in the final result processing step. Upon analyzing the failure cases of Version 2, we observe that the majority of errors stem from incorrectly identifying which of the two operands is larger, while the intermediate steps are handled correctly. This finding underscores the significance of breaking down arithmetic operations into \emph{simpler intermediate steps}. Unless otherwise specified, we use Version $1$ in all detailed scratchpad experiments. 

\begin{figure}[ht]
\vspace{-6mm}
\centering
\includegraphics[width=0.38\textwidth]{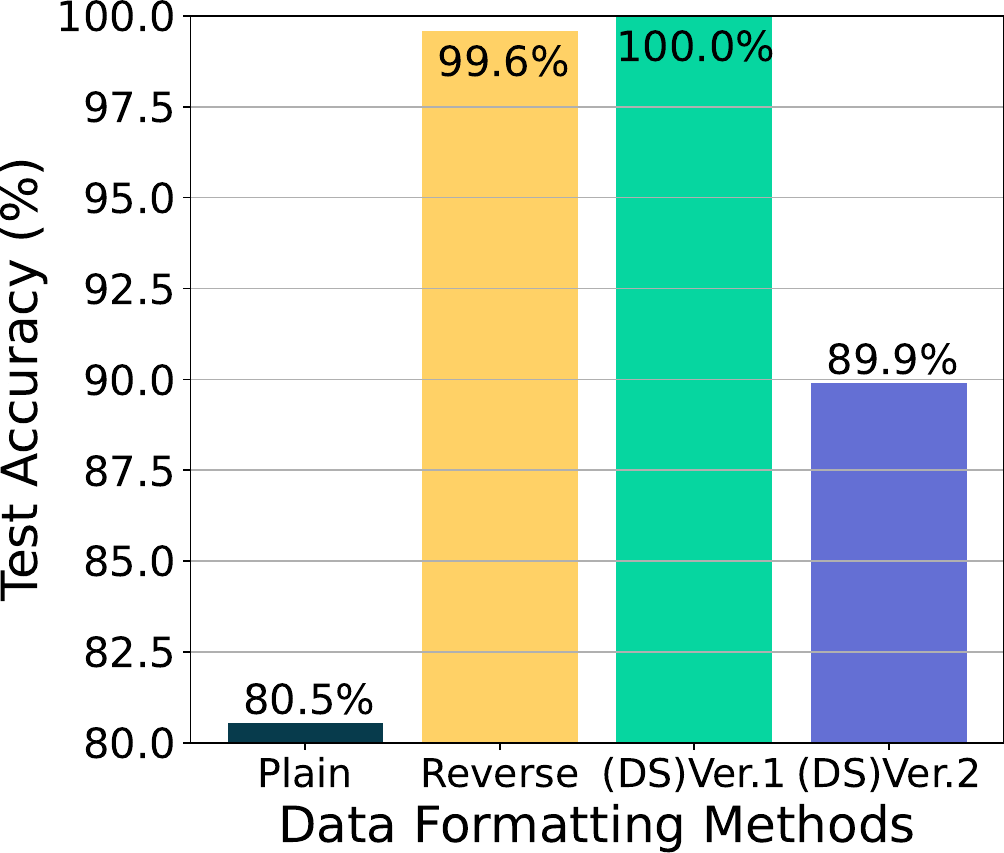}
    \caption{Comparison of performance among various data formatting approaches (plain, reverse, and two versions of detailed scratchpad (DS)) for the subtraction task. The experiments were conducted on a NanoGPT model trained on a dataset of 10,000 examples. Version 2, which incorporates operand comparison, exhibits significantly lower performance compared to Version 1. This observation highlights the substantial impact of the construction of intermediate steps on the model's performance.} 
\label{fig:subtraction_ar_result}
\end{figure}
\vspace{-6mm}

\subsection{The Effect of Noisy Inputs on Accuracy}\label{sec:ablation_noisy}

\paragraph{Noisy intermediate steps in the scratchpad data. }

\begin{figure}[ht]
\vspace{-4mm}
\centering
\subfloat[{Test accuracy on Addition}]{\includegraphics[width=0.45\textwidth]{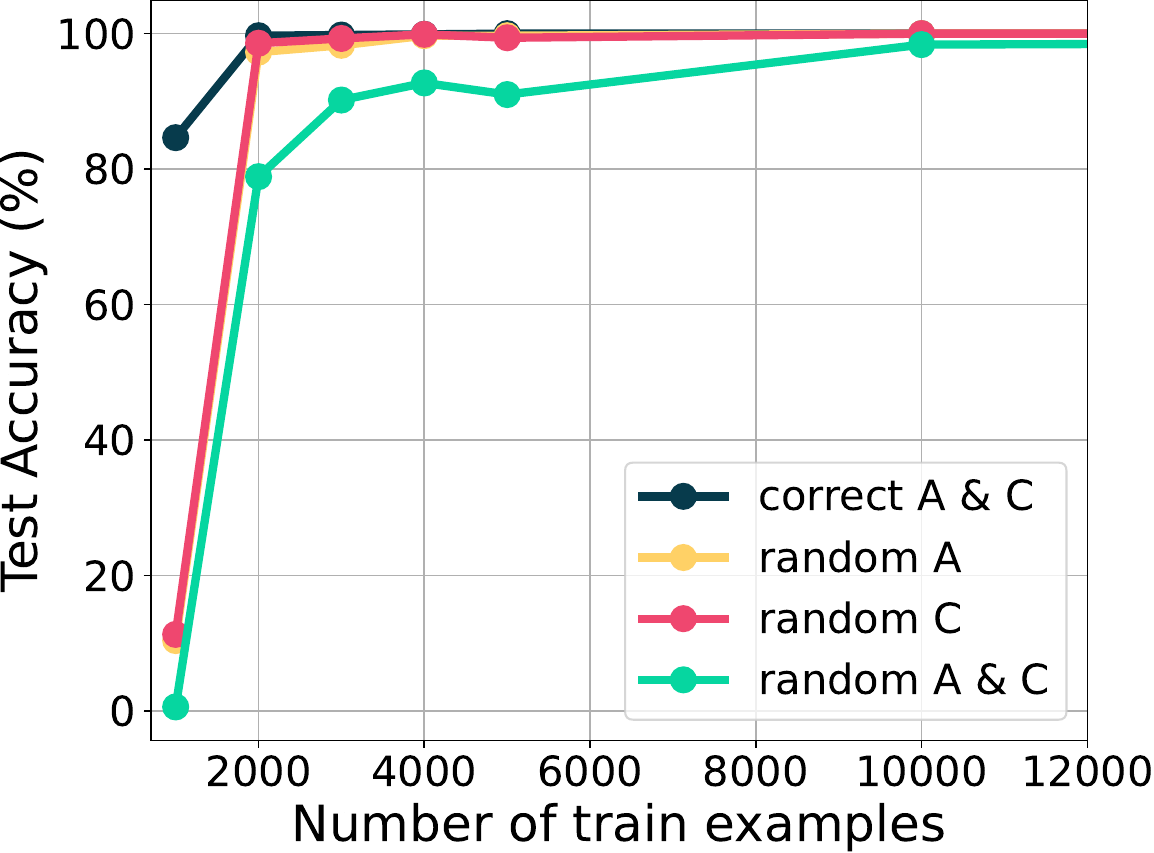}}
\hspace{6mm}
\subfloat[{Test accuracy on Subtraction}]{\includegraphics[width=0.45\textwidth]{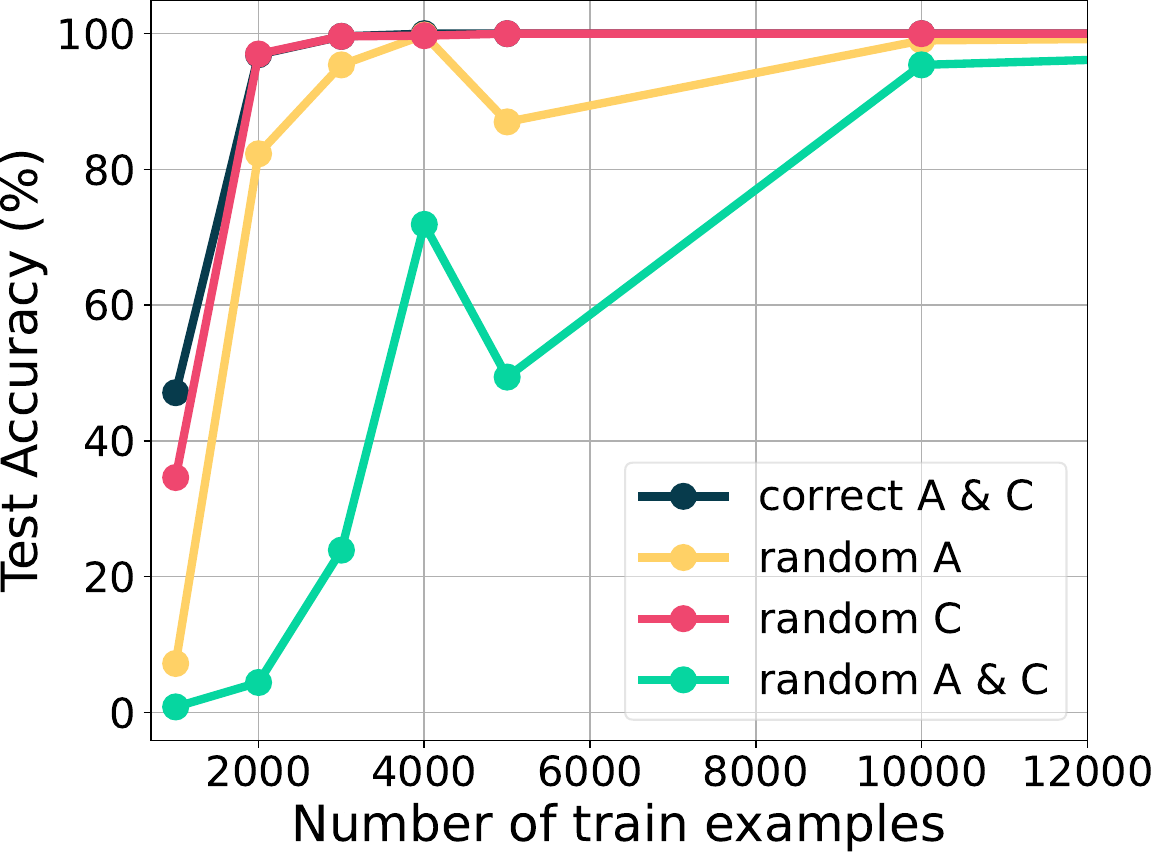}}
    \caption{Comparison of training with simplified scratchpad formatting using correct A and C information with formatting using random A/C and their effect on sample efficiency and accuracy. Results show that noisy labels degrade sample efficiency, but with sufficient training data, the model eventually reaches full accuracy.} 
\label{fig:simplified_scratchpad}
\end{figure}

We further investigate the significance of providing accurate intermediate steps in the scratchpad during the training process. While this was inspired by the findings of \citet{min2022rethinking}, it is inherently different. \citet{min2022rethinking} show that using random labels in ICL demonstrations caused minimal degradation when compared to the gold labels. However, those models were trained on gold labels and then evaluated on multiple downstream tasks. In our setting, the model is trained and evaluated on a single arithmetic task. Further, the final result(or label) is left untouched as the correct answer to the arithmetic operation. We only replace the intermediate steps. The goal of this study is to verify whether the model actually learns to reason using the given intermediate steps or merely uses the scratchpad to improve its expressivity. We compare the performance of training with our simplified scratchpad formatting, which includes accurate $A$ (digit sum) and $C$ (carry) information, with formatting that includes random $A$, random $C$, or random $A$ and $C$ for each intermediate step, as depicted in Figure~\ref{fig:input_formatting}.

The results in Figure~\ref{fig:simplified_scratchpad}, demonstrate that the inclusion of noisy labels can impede sample efficiency. However, with enough samples, the model ultimately achieves full accuracy. This suggests that while the model is capable of leveraging the information contained in the intermediate steps, it can also gradually learn how to perform addition while disregarding the presence of noisy intermediate steps.

\paragraph{Model robustness to noise in the auto-regressive output. }

In this analysis, we explore the robustness of models trained on plain or reverse formatted data (without noise) when exposed to noise during an auto-regressive generation process. In particular, we aim to unravel how much the learned mapping of the $i$-th output relies on the operands and preceding tokens in the addition result, given that transformer models generate tokens sequentially in an autoregressive manner, making them prone to error propagation.

For this experiment, we focus on $3$-digit addition. We train models on either plain or reverse format data and evaluate the accuracy of next-token predictions when the output sequence contains noise. Specifically, in the plain format setting, we expect a well-performing model to generate the correct output tokens $\mathsf{O_3}$, $\mathsf{O_2}$, $\mathsf{O_1}$ sequentially, where $\mathsf{O_3=C_3}$, $\mathsf{O_2=C_2}$, $\mathsf{O_1=C_1}$, and $\mathsf{C_3C_2C_1}$ represents the correct answer. We consider two types of perturbation: (i) \textbf{random} perturbation, where we modify the first two output tokens $\mathsf{O_3O_2}$ to random numbers different from $\mathsf{C_3C_2}$, and (ii) \textbf{precise} perturbation, where we perturb only the second output token $\mathsf{O_2}$ by 1. The second case is particularly relevant since a common error case is where the model misses a digit by $1$.
We provide the model with an expression of the form ``$\mathsf{A_3A_2A_1+B_3B_1B_1= O_3O_2}$'', where $\mathsf{O_3O_2}$ can be either (i) a random incorrect number, \ie $\mathsf{O_3O_2 \neq C_3C_2}$, or (ii) $\mathsf{O_2}=\mathsf{C_2}\pm 1 \mod 10$, and observe the next token generated by the model.
A corresponding process is deployed for the reverse format, introducing a noisy sequence to models trained on reverse format data.

To evaluate the performance, we define two accuracy criteria for $\mathsf{O_1}$: \textbf{exact accuracy}, reckoning $\mathsf{O_1}$ as accurate only when $\mathsf{O_1=C_1}$, and \textbf{relaxed accuracy}, considering $\mathsf{O_1}$ correct if it deviates from the original output $\mathsf{C_1}$ by at most 1. In other words, $\mathsf{C_1=O_1}$, $\mathsf{C_1 = O_1+1 \mod 10}$ or $\mathsf{C_1 = O_1 - 1 \mod 10}$.

\begin{table}[th!]
\vspace{-4mm}
\center
  \caption{
  Prediction accuracy for the third digit output under different types of noise in the preceding output tokens. \textbf{Random} perturbation, applies random flips whereas \textbf{precise} perturbation shifts the preceding output tokens by $1$. \textbf{Relaxed accuracy}, allows for a $\pm1$ deviation from the true output whereas \textbf{Exact accuracy} is strict. Reverse consistently outputs a number that is at most $1$ different from the true output, even in the presence of noise. The plain format has high exact accuracy in the presence of precise perturbation, as the noise in the output token has a lower impact on predicting the next token, which is of lower significance. However, with completely random noise, the plain format shows poor performance, suggesting a strong dependence on all digits. (See Lemma~\ref{lemma:left-to-right} and ~\ref{lemma:right-to-left}).
  }
  \label{tab:noise_output}
  \vspace{2mm}
\centering
\small
\begin{tabular}{l|cc|cc}
\toprule
Perturbation Type  & \multicolumn{2}{c|}{Random} & \multicolumn{2}{c}{Precise} \\ \cline{2-3}\cline{4-5}
            & Plain       & Reverse      & Plain        & Reverse      \\ \midrule
Exact Acc   & 49.88\%     & \textbf{81.26\% }     & \textbf{99.85\%}      & 90.47\%      \\
Relaxed Acc & 61.55\%     & \textbf{100\%}        & \textbf{100\%}        & \textbf{100\% }       \\
\bottomrule
\end{tabular}
\vspace{-2mm}
\end{table}

The results presented in Table~\ref{tab:noise_output} reveal intriguing findings. We observe that the reverse format consistently outputs a result that \emph{deviates by no more than 1 from the true answer}, regardless of whether the preceding outputs $\mathsf{O_3O_2}$ are subjected to random or precise perturbation. This consistency can be explained by Lemma~\ref{lemma:right-to-left}, indicating that the reverse format only requires learning a straightforward function of digit-wise addition for each corresponding position, along with the carry-on (0 or 1). Therefore, even with noise in the preceding tokens, the model accurately performs digit-wise addition, albeit with occasional carry-on prediction errors.
With an exact accuracy of 81.26\% even in the presence of random perturbation, the reverse format demonstrates the model's ability to rely less on the preceding output tokens, indicating a robust learned output mapping.

On the contrary, models using the plain format have to decipher a more intricate function drawing from all digits within the sequence, as described by Lemma~\ref{lemma:left-to-right}. Given that in addition, carry operations transition from right to left (\ie least to most significant digit), the introduction of precise perturbation on preceding output tokens, which possess higher significance, has a minor impact on the output (which has less significance). As a result, models trained using the plain format attain an exact accuracy rate of 99.85\% and a relaxed accuracy of 100\% for cases involving precise perturbation. Interestingly, under purely random perturbation, the plain format struggles, leading to a reduced relaxed accuracy of 61.55\% and exact accuracy of 49.88\%. This suggests that the output mapping learned by the plain format is not merely a function of the two operands but rather enmeshed in complex dependencies on preceding output tokens.

\section{Extending to Longer Digit Addition}\label{sec:higher_digits}

In this section, we extend our experiments beyond 3-digit addition and explore longer-digit settings, ranging up to $10$ digits. Our aim is to investigate whether our previous findings regarding the sample efficiency of reverse and scratchpad formats hold true for larger numbers of digits.

We begin by observing that the phase transition behavior observed in previous sections also applies to longer-digit addition.  Furthermore, we discover that the advantages of using reverse and scratchpad formats become even more pronounced as the number of digits increases.
Next, we examine the number of training samples required to learn $k+1$ digit addition when fine-tuning a pretrained model trained on $k$ digit addition. We find that while the number of samples needed to further learn $k+1$ digit addition remains relatively consistent for reverse and scratchpad formats, the plain format requires an increasing number of samples.

\paragraph{Experimental setup and data generation.}
To explore the performance of the model in higher-digit addition scenarios, we extend the experimental setup described in Section~\ref{sec:exp}. We adopt a balanced sampling approach for training data with $D$ digits, ensuring an equal number $d$ of all combinations of digits for both operands as follows:

We begin by sampling all $100$-digit additions. For the remaining number of digits, ranging from $2$ to $D$, we generate addition examples of the form ``$\textsf{A + B = C}$''. The two operands, $\textsf{A}$ and $\textsf{B}$, are randomly sampled $d=\lfloor (N-100) / (D(D+1)/2 -1 ) \rfloor$ times for every $D$, where $N$ is the total number of training examples. Operand $\textsc{A}$ is sampled between $[10^{k_1-1}, 10^{k_1} -1]$ and operand $\textsc{B}$ is sampled between $[10^{k_2-1}, 10^{k_2} -1]$, for all $1 \leq k_1 \leq k_2 \leq D$, excluding the case where $k_1=k_2=1$. After sampling the two operands, we randomly interchange them to cover cases where $\textsc{A}$ has fewer digits than $\textsc{B}$ and vice versa.

\subsection{Training from Random Initialization}\label{sec:higher_digits_scratch}

\begin{figure}[ht]
\vspace{-4mm}
\centering
\subfloat[{5-digit Addition}]{\includegraphics[width=0.32\textwidth]{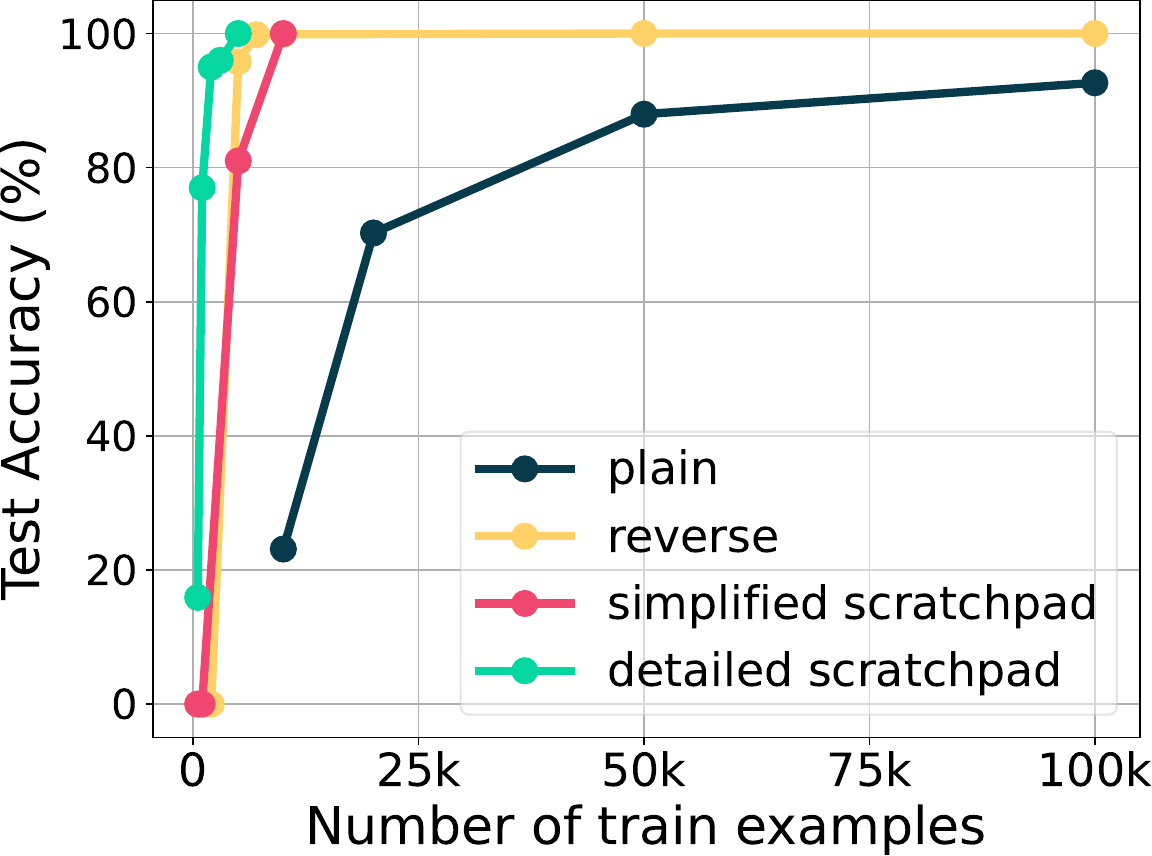}}
\hspace{1mm}
\subfloat[{7-digit Addition}]{\includegraphics[width=0.32\textwidth]
{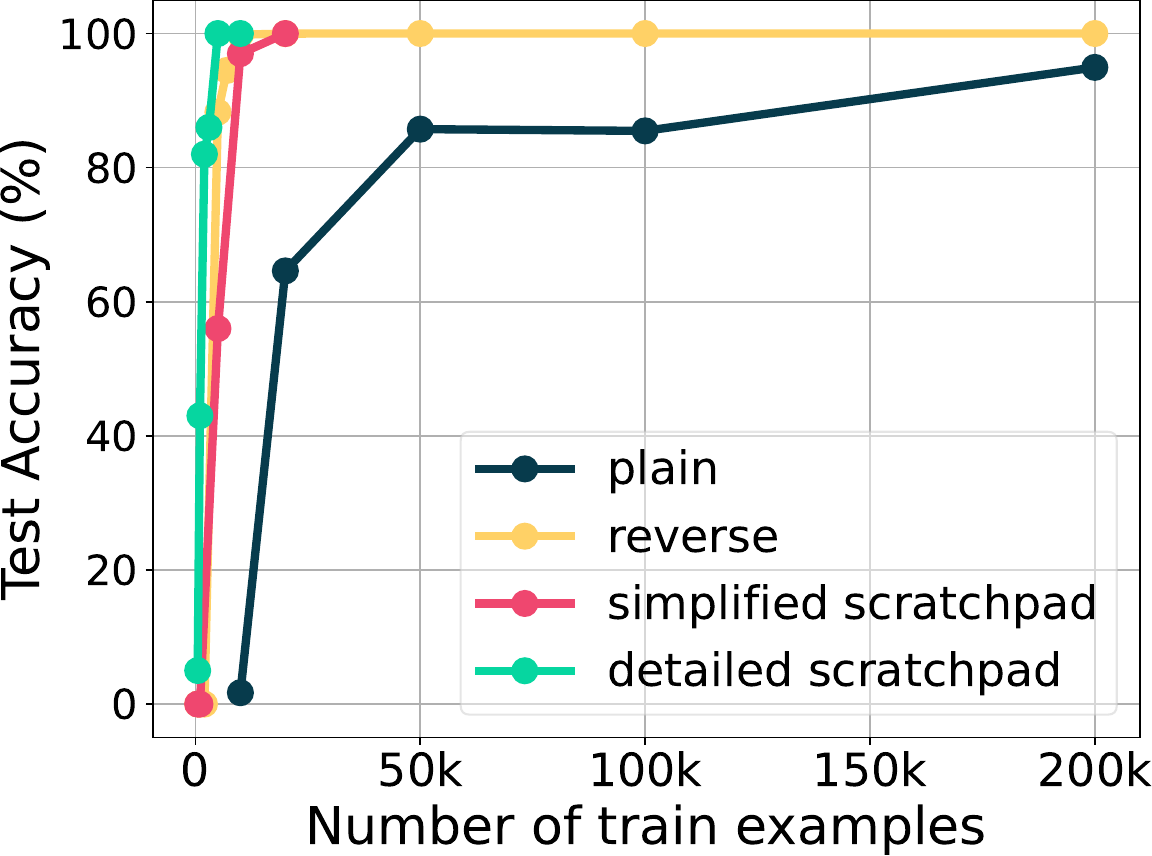}}
\hspace{1mm}
\subfloat[{10-digit Addition}]{\includegraphics[width=0.32\textwidth]{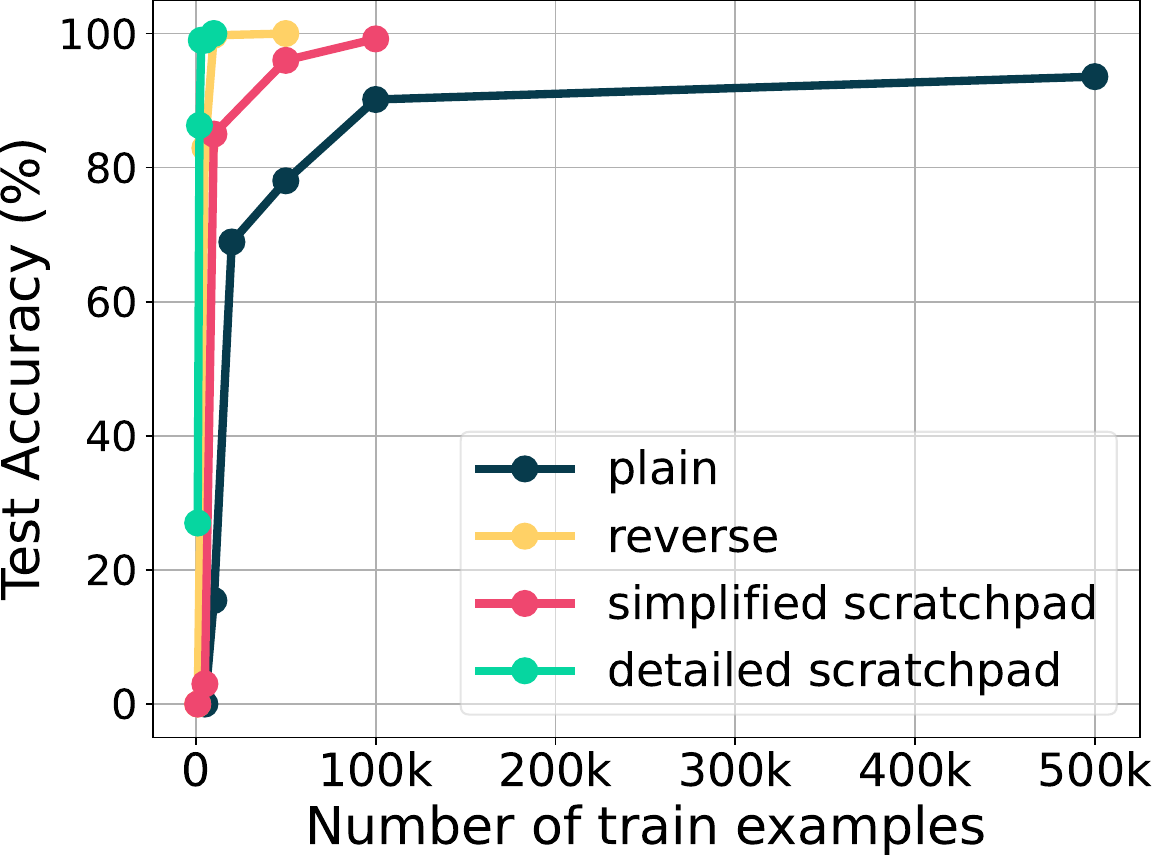}}
    \caption{Comparison of sample efficiency for 5, 7 and 10-digit additions: performance of models trained with varying numbers of addition samples on each data format. The plain format data requires an increasing number of training examples for higher digits, while the number of samples required for other methods remains relatively consistent.}
\label{fig:multidigit_scratch}
\vspace{-2mm}
\end{figure}

We repeat the experiment from Section~\ref{sec:exp} on nanoGPT with longer digits. The results shown in Figure~\ref{fig:multidigit_scratch} demonstrate a similar behavior to the findings observed in Figure~\ref{fig:nanogpt_sample_efficiency} for 3-digit addition. This indicates that our previous observations generalize to longer sequence lengths.
Notably, the performance gap between the modified formats (reverse, simplified scratchpad, and detailed scratchpad) and the plain format becomes even more significant in the context of higher digits. While the plain format requires an increasing number of training examples to learn higher-digit additions, the reverse or scratchpad formats exhibit a more consistent requirement in terms of the number of training examples.

This prompts us to explore the differences between each format in a fine-tuning setting. Specifically, we ask whether a model trained on reverse or scratchpad-formatted $k$ digit addition data would find it easier to learn $k+1$ digit addition compared to a model trained with plain format addition.

\subsection{Fine-Tuning from Pretrained Models}

In this section, we investigate the generalization ability of transformer models, specifically focusing on their capacity to learn higher-digit additions based on their knowledge of lower-digit additions. Additionally, we explore how the choice of data format affects the number of samples required to learn higher-digit additions.

\paragraph{Forgetting of $k$-digit addition when trained on $k+1$-digit addition. }\mbox{}

We begin by fine-tuning a model that was initially trained on 3-digit addition. We fine-tune this model using 4-digit addition training data, with each data format being used separately. To mitigate the ``catastrophic forgetting'' phenomenon, we experiment with different learning rates, gradually reducing the magnitude. We continue this process until the learning rate becomes too small for the model to effectively learn 4-digit addition.

\begin{figure}[ht] 
\vspace{-2mm}
\centering
\includegraphics[width=0.5\textwidth]{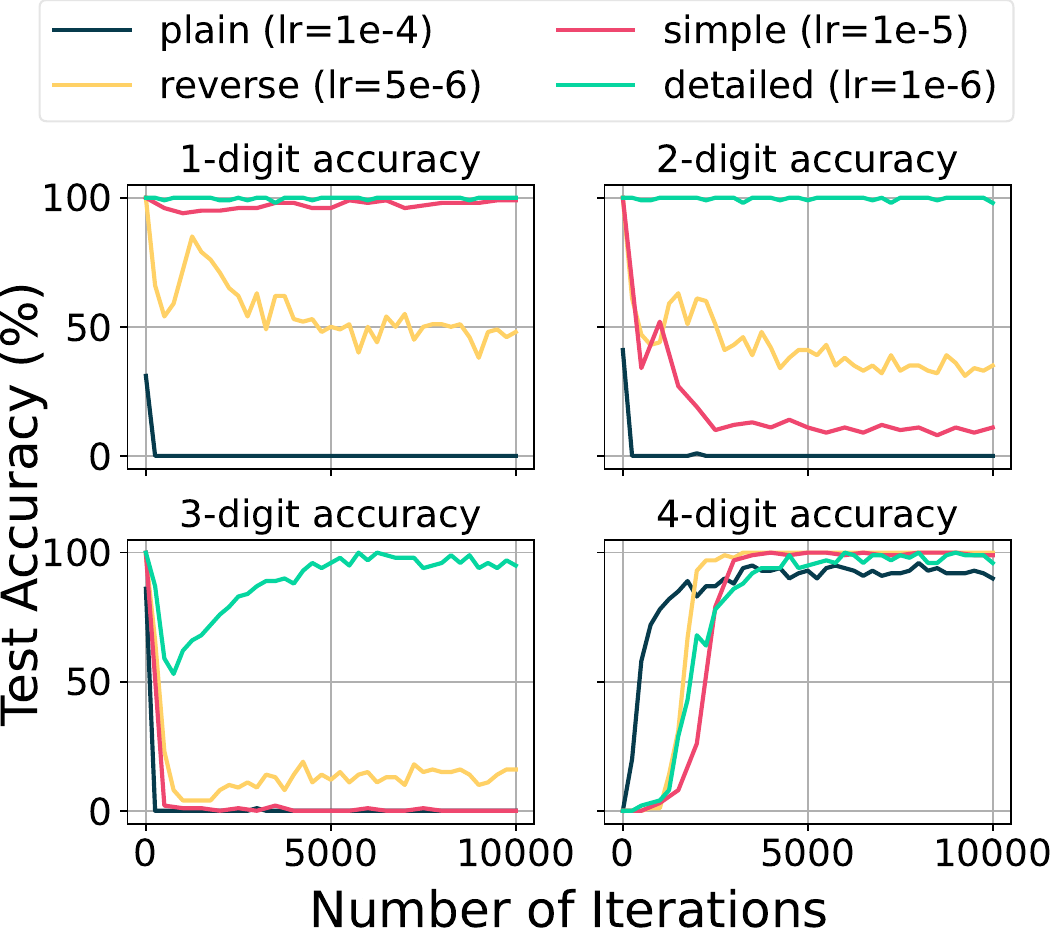}
    \caption{Accuracy of 1 to 4-digit additions during fine-tuning of a pretrained model on 3-digit additions using different data formats. The model is fine-tuned using only 4-digit addition data with corresponding formats. We observe that the plain format `forgets' 1 to 3-digit additions entirely when learning 4-digit addition. In contrast, the detailed scratchpad method successfully learns 4-digit addition while maintaining high performance on 1 to 3-digit additions.} 
\label{fig:ft_3to4digit}
\vspace{-2mm}
\end{figure}

The results depicted in Figure~\ref{fig:ft_3to4digit} reveal interesting insights about the fine-tuning process. When training the model using the plain format with only 4-digit addition data, there is an immediate drop in accuracy for 1 to 3 digit additions. This indicates that the model experiences significant forgetting of previously learned additions.
In contrast, the reverse and scratchpad methods exhibit a more favorable behavior. The model trained with these methods does not completely forget 1 or 2 digit additions while learning 4-digit addition. Remarkably, the detailed scratchpad method stands out by enabling the model to learn 4-digit addition without compromising its performance on 1 to 3 digit additions. Although there is a slight decrease in performance for 3-digit additions initially, the model quickly recovers and picks up the knowledge again as it trains on 4-digit additions.

This result can be explained by the hypothesis that learning a $k+1$ digit addition from a $k$-digit model is an incremental process for the detailed scratchpad method. The model already has a solid foundation in understanding the intermediate steps involved in addition, so it only needs to adapt to longer sequences. In contrast, for the plain format, learning higher-digit additions requires the model to establish new mappings to generate correct outputs, which is a more challenging task.

\paragraph{Sample efficiency of fine-tuning $k$-digit models with $k+1$-digit examples. }

Building upon our previous findings that fine-tuning a model solely on $k+1$-digit addition leads to a loss in performance for $k$-digit addition, we modify our approach to prevent the loss of performance in the $k$-digit addition task. Instead of training solely on $k+1$-digit examples, we construct a dataset that includes all addition tasks from 1-digit to $k+1$-digit, with the method described in the previous section. By doing so, we aim to maintain the performance of 1 to $k$-digit addition while enabling the model to learn $k+1$-digit addition during fine-tuning.

In this experiment, we investigate the number of $k+1$-digit training examples required for the model to effectively learn $k+1$-digit addition when fine-tuning a pretrained model on $k$-digit addition. It is important to note that this setting differs from the previous section (Section~\ref{sec:higher_digits_scratch}), where we focused on training models from random initialization. Here, we specifically focus on the fine-tuning process. We fine-tune individual models pretrained on each data format (using $k$-digit addition) and further train them using the same data format on a new dataset that includes all addition examples from 1-digit to $k+1$-digit.

\begin{figure}[ht]
\vspace{-4mm}
\centering
\subfloat[{Plain}]{\includegraphics[width=0.4\textwidth]{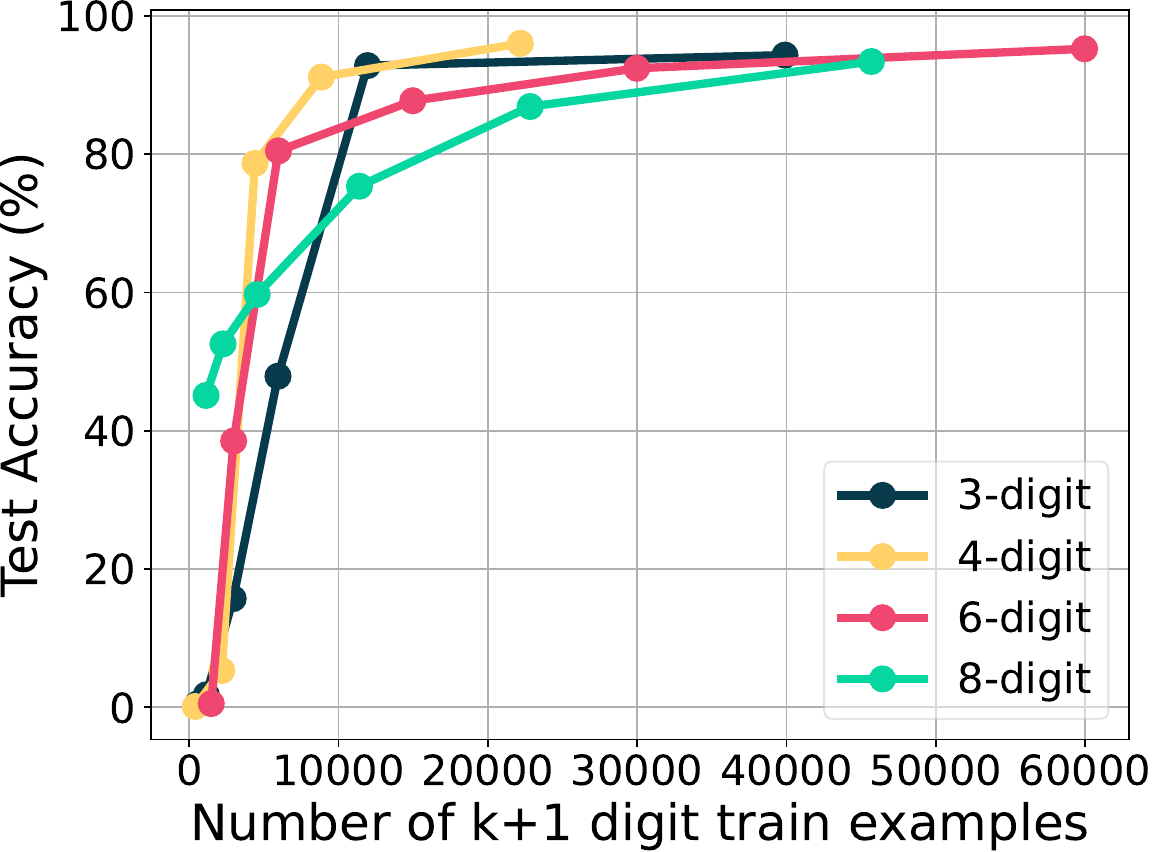}}
\hspace{1mm}
\subfloat[{Reverse}]{\includegraphics[width=0.4\textwidth]{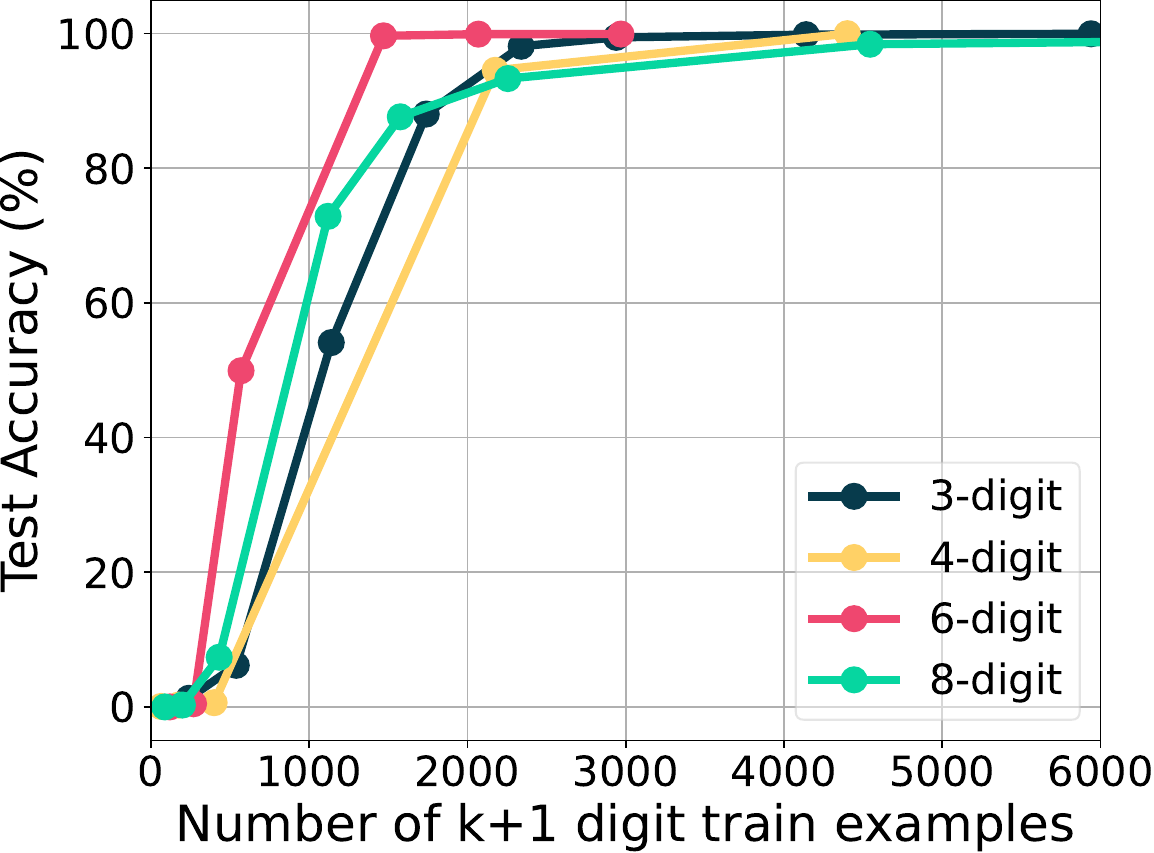}}
\vspace{0.1mm}
\subfloat[{Simplified Scratchpad}]{\includegraphics[width=0.4\textwidth]{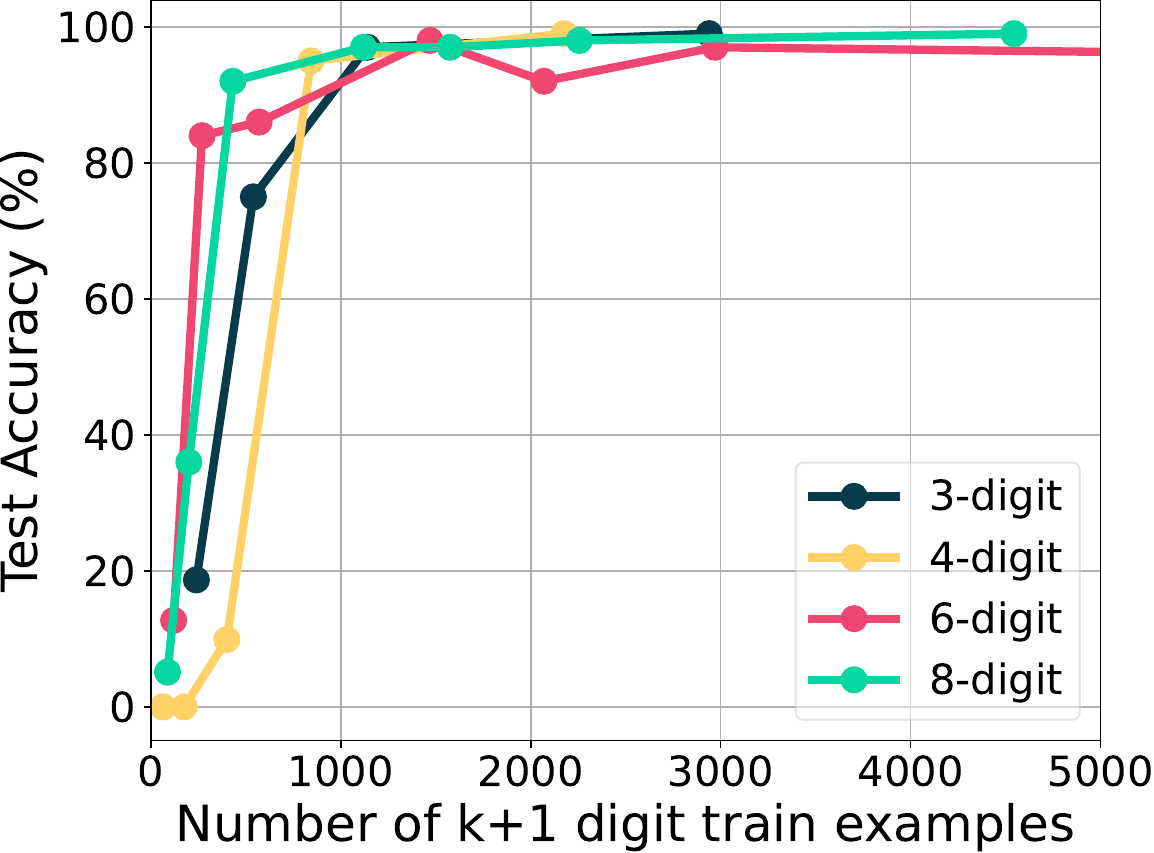}}
\hspace{1mm}
\subfloat[{Detailed Scratchpad}]{\includegraphics[width=0.4\textwidth]{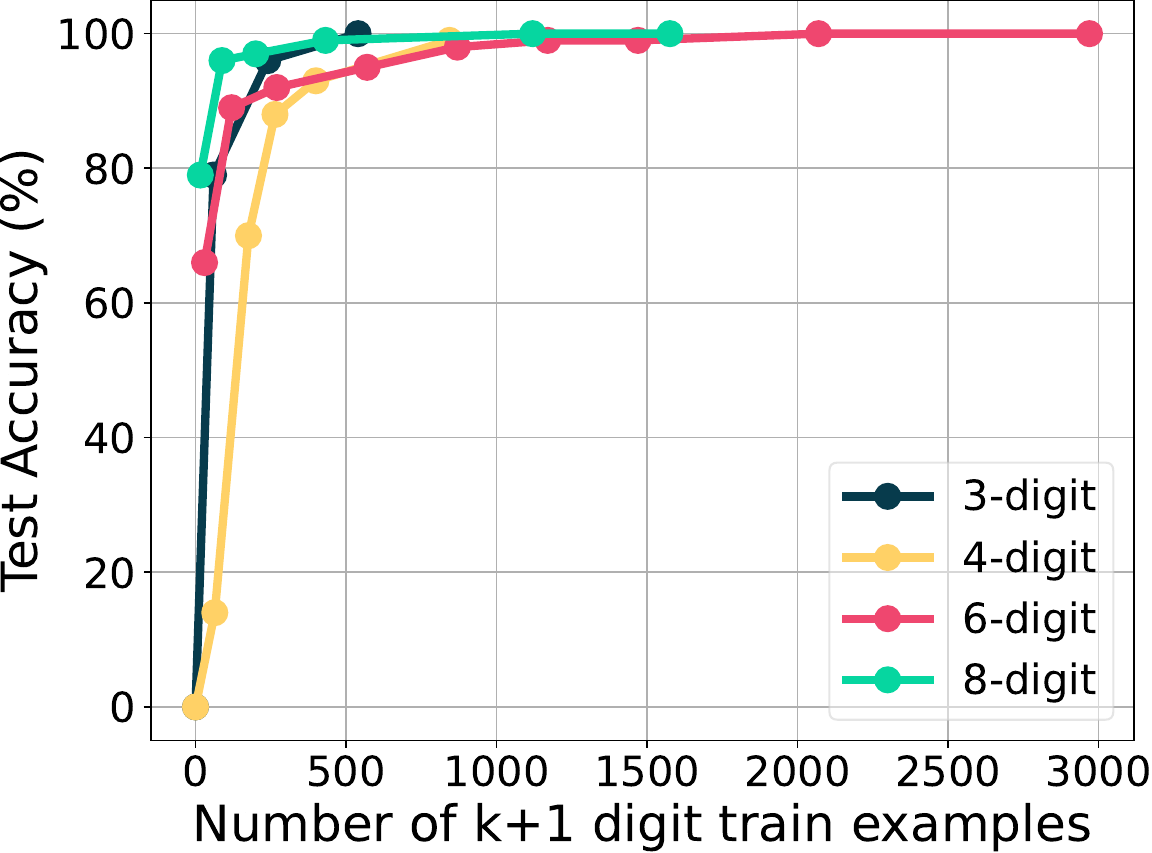}}
    \caption{
    Fine-tuning performance of pretrained $k$-digit models using varying numbers of $k+1$-digit examples, with corresponding data formats. The plain format requires an increasing number of $k+1$-digit examples as the number of digits ($k+1$) increases. In contrast, the modified formats (reverse, scratchpad) exhibit consistent performance across different numbers of digits, requiring a relatively consistent number of examples to learn the additional digit.
    } 
\label{fig:multidigit_ft}
\vspace{-4mm}
\end{figure}

The results in Figure~\ref{fig:multidigit_ft} demonstrate the number of $k+1$-digit addition samples required for a pretrained model capable of performing $k$-digit addition to learn the addition of $k+1$ digits. The findings reveal that modified formats (reverse, scratchpad) require a relatively small number of samples (between 1000 and 5000) to learn the addition of an extra digit. In contrast, the plain format necessitates a significantly larger number of training examples, with the requirement increasing as the number of digits grows.

This observation aligns with our previously established Lemma~\ref{lemma:right-to-left} and Lemma~\ref{lemma:left-to-right}, which suggest that learning higher-digit addition in the reverse format involves processing the $i$-th digit of the operands and carrying from the previous position. This operation \emph{remains consistent regardless of the number of digits being added}. As a result, the model primarily needs to learn how to handle longer digits to perform addition effectively.

In contrast, the plain addition format requires the model to learn a more complex function that incorporates all digits from both operands. As the number of digits increases, the complexity of this function grows as well. This highlights the greater difficulty faced by the plain format in accommodating additions with a larger number of digits.

\subsection{Impact of Formats on Fine-Tuning}
We delve deeper into the impact of different formats on the fine-tuning process. Specifically, we investigate whether training a model in one format helps in learning addition in another format, and vice versa. To conduct this analysis, we begin with a model trained on each data format using 3-digit addition examples. We then individually fine-tune these pretrained models using different data formats, on 4-digit addition examples.

\begin{wrapfigure}{r}{0.5\textwidth}
\vspace{-8mm}
\centering
\includegraphics[width=0.5\textwidth]{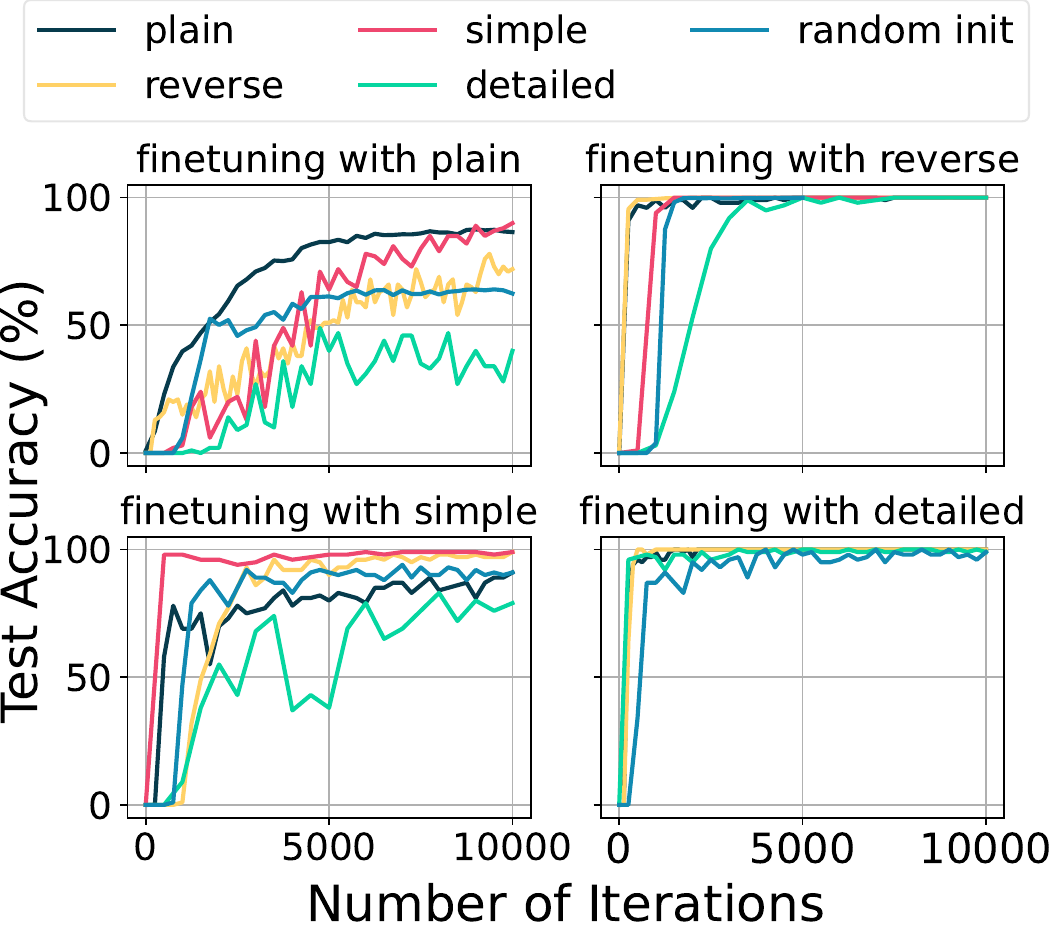}
    \caption{Performance of fine-tuning a 3-digit model trained on different data formats (plain, reverse, simple scratchpad, detailed scratchpad, and random initialization) individually with different data formats of 4-digit addition. The results demonstrate that fine-tuning yields the best performance when the pretrained model and the fine-tuning format are consistent. Notably, fine-tuning a detailed scratchpad format model shows suboptimal performance. We hypothesize that this is due to the need for the model to ``unlearn'' the rigid and verbose format and adapt to the new format.} 
\label{fig:ft_formats}
\vspace{-6mm}
\end{wrapfigure}

The results depicted in Figure~\ref{fig:ft_formats} highlight some interesting findings. Firstly, we observe that a model trained with the same format as the fine-tuning format exhibits faster learning in terms of the number of iterations. For instance, training a model with the plain format outperforms training a model pretrained with scratchpad formats. This suggests that the model benefits from the consistency and familiarity provided by the same format throughout the training process.

Additionally, we notice that fine-tuning a detailed scratchpad pretrained model on other formats proves to be more challenging. This observation can be attributed to the need for the model to ``unlearn'' the intricacies of the verbose detailed scratchpad format and adapt to the new format.  For example, the plain format does not involve the use of alphabet characters in the data, so a model pretrained with the plain format would have a low probability of generating alphabetic outputs. In contrast, a detailed scratchpad pretrained model would have encountered various alphabets and may have a tendency to output them. Therefore, adjusting to a new format requires additional effort for the model to ``unlearn'' the patterns specific to the previous format and effectively learn the new format it is being trained on.

These findings highlight the importance of considering format consistency during the fine-tuning process, as it can impact the efficiency and effectiveness of the learning process. We will delve further into this topic in the upcoming section~\ref{sec:finetuning_pretrained_models}, where we fine-tune pretrained GPT-3 models. Notably, we observe that fine-tuning with reverse or simplified scratchpad formats actually yields worse results compared to fine-tuning with plain formats. For a detailed exploration of these observations, please refer to the forthcoming section.

\section{Teaching Arithmetic Operations Beyond Addition}\label{sec:beyond_addition}
While this study has a primary focus on the \emph{addition} operation and aims to comprehend the significance of data sampling and formatting, its findings are applicable beyond the realm of addition alone. In this section, we expand our examination to include other arithmetic operations, thus demonstrating the broader applicability of our insights.
We consider a mix of arithmetic tasks, including binary operations like \emph{subtraction} and \emph{multiplication}, and unary operations such as \emph{sine} and \emph{square root}. Each operation entails its unique challenges and intricacies. For instance, subtraction introduces the concept of negative numbers, multiplication can generate significantly longer outputs, and sine and square root functions entail computations involving floating-point numbers, which are considered up to four digits of precision in our work.

We acknowledge that while our examination is detailed, it does not encompass all the fundamental arithmetic operations or the entire scope of floating-point arithmetic. Specifically, our focus is primarily on integer arithmetic for binary operations, considering a limited length of digits. Additionally, for unary operations, we confine ourselves to a restricted number of digits below the decimal point. 

In Section~\ref{sec:extended_arithmetic}, we delve into each arithmetic operation individually, exploring the impact of data formatting and determining the relevancy of our insights across disparate tasks. Further, in Section~\ref{sec:joint_arithmetic}, we perform an analysis of joint training across all five tasks, investigating the potential performance implications for each individual task.

\subsection{Extended Arithmetic Operations}\label{sec:extended_arithmetic}

In order to extend our analysis to arithmetic operations beyond addition, we consider the following tasks:
\paragraph{Subtraction ($-$).} We consider subtraction of positive numbers up to $3$ digits, written as $\mathsf{A_3A_2A_1 - B_3B_2B_1 = C_3C_2C_1}$ in (i) plain formatting, and $\mathsf{\$A_3A_2A_1 - B_3B_1B_1 = C_1C_2C_3\$}$ in (ii) reverse formatting. 
As with addition, scratchpad-based methods (iii, iv), present the intermediate steps of digit-wise subtraction and handling of carry-ons%
. These steps proceed from the least significant bit (LSB) to the most significant bit (MSB). If the final result after computing all the digit-wise subtractions is negative, we subtract the number in the most significant bit (MSB) position multiplied by 10 to the power of (number of digits in the output - 1) from the remaining digits in the output. In Section~\ref{sec:appendix_subtraction_detailed_scratchpad}, we present an alternative version of the detailed scratchpad formatting for subtraction. 

\paragraph{Multiplication ($\times$).} We consider multiplication of positive numbers up to 2-digits. (i) Plain formatting examples are formatted as $\mathsf{A_2A_1 * B_2B_1 = C_4C_3C_2C_1}$, while (ii) reverse formatting is formatted as $\mathsf{\$A_2A_1 * B_2B_1 = C_1C_2C_3C_4\$}$. The (iv) detailed scratchpad method simplifies each intermediate step by conducting a series of multiplications between the first operand and each digit of the second operand, starting from the least significant bit (LSB) and moving toward the most significant bit (MSB). For each step, we multiply the result by an exponentiation of 10 corresponding to the relative digit position.

\paragraph{Sine ($\sin{}$).} We consider decimal numbers within the range $[-\pi/2, \pi/2]$, truncated to 4-digit precision. (i) Plain formatting examples are formatted as $\sin(\mathsf{A_0.A_1A_2A_3A_4})=\mathsf{B_0.B_1B_2B_3B_4}$. For (iv) detailed scratchpad method, we include the Taylor series expansion steps for sine, which is represented as $\sin(x) = x - \frac{1}{3!}x^3 + \frac{1}{5!}x^5 - \frac{1}{7!}x^7 + \cdots $. These intermediate steps involve exponentiation, which may not be any easier to compute than the sine operation itself.

\paragraph{Square Root ($\sqrt{}$).}  We consider decimal numbers within $[1, 10)$, truncated to 4-digits of precision with the format, written as $\text{sqrt}(\mathsf{A_0.A_1A_2A_3A_4})=\mathsf{B_0.B_1B_2B_3B_4}$ for (i) plain formatting. For (iv) detailed scratchpad method, we enumerate each step of Newton's method to compute the square root function. The iterative formula is given by $x_n = \frac{1}{2} (x_{n-1} + \frac{x}{x_{n-1}})$, where $x_0$ is initialized as the floor of the square root value of the operand $x$. These intermediate steps involve a division operation, which can be as complex as the square root operation itself.

For evaluation of sine and square root, we classify the result $\hat{y_i}$ as correct if the absolute difference between $\hat{y_i}$ and the ground truth value $y_i$ is less than or equal to a predefined threshold $\epsilon \geq 0$. %

For each arithmetic task, we explore both the plain format and the detailed scratchpad format. The detailed scratchpad formatting for each task is illustrated in Figure~\ref{fig:ar_others} and Appendix~\ref{sec:prompt_examples}. For subtraction, the process involves breaking down the operation into intermediate steps of digit-wise subtraction, including carry-ons when necessary. Unlike addition, subtraction requires an additional step to handle cases where the first operand is smaller than the second. Further details on the detailed scratchpad for subtraction can be found in Section~\ref{sec:appendix_subtraction_detailed_scratchpad}.
For multiplication, each intermediate step carries out a $2$-digit $\times$ $1$-digit multiplication between the first operand and each separate digit of the second operand.
For sine and square root, we utilize a sequence of \emph{iterative approximations} instead of algorithmic explanations. Specifically, Taylor's series expansion steps for sine and Newton's method steps for square root are used.  
It is important to note that while addition, subtraction, and multiplication are broken down into simpler operations at each step, CoT for sine and square root functions requires intermediate steps involving operations like exponentiation or division, which might not be inherently simpler. 

\begin{AIbox}{\bf{\large Detailed scratchpad formatting for different arithmetic tasks}}
\vspace{5mm}
{\tt \footnotesize Examples of detailed scratchpad formatting for different arithmetic tasks:\\(1) Subtraction - includes borrows for intermediate steps, (2) Multiplication - decomposes the second operand for 2-digit $\times$ 1-digit multiplication at each step, (3) Sine - utilizes Taylor series expansion, and (4) Square root - employs Newton's method.
\\}

\begin{minipage}[t]{0.48\linewidth}
\centering
\textbf{Subtraction}
\begin{lstlisting}[language=markdown]
Input:
128-367
Target:
<scratch>
[1,2,8] has 3 digits.
[3,6,7] has 3 digits.
[1,2,8] - [3,6,7] , A=[] , C=0 , 8-7-0=1 , A->1 , C->0
[1,2] - [3,6] , A=[1] , C=0 , 2-6-0+10=6 , A->6 , C->-1
[1] - [3] , A=[6,1] , C=-1 , 1-3-1=-3 , A->-3 , C->-1
[] - [] , A=[-3,6,1]
-300+61=-239 , END
</scratch>
-2 3 9
\end{lstlisting}

\textbf{Multiplication}
\begin{lstlisting}[language=markdown, showlines=true,]
Input:
12*36
Target:
<scratch>
[1,2] has 2 digits.
[3,6] has 2 digits.
[1,2] * 6 , A=[7,2] , k=1 , B=[7,2] , C=0+72=72
[1,2] * 3 , A=[3,6] , k=10 , B=[3,6,0] , C=72+360=432 , END
</scratch>
4 3 2


\end{lstlisting}
\end{minipage}
\begin{minipage}[t]{0.52\linewidth}
\centering
\textbf{Sine}
\begin{lstlisting}[language=markdown, showlines=true,]
Input:
sin(1.5707) 
Target:
<scratch> 
x_0=1.5707 
x_1: x_0 - 1/3! * (x^3) , x_1=0.9247 
x_2: x_1 + 1/5! * (x^5) , x_2=1.0043 
x_3: x_2 - 1/7! * (x^7) , x_3=0.9996 
x_4: x_3 + 1/9! * (x^9) , x_4=0.9997 , END 
</scratch>
0.9997





\end{lstlisting}
\textbf{Sqrt}
\begin{lstlisting}[language=markdown]
Input: 
sqrt(2.7174)  
Target:
<scratch> 
x_0=1
x_1: 1/2*(1+2.7175/1)=1.8587, x_1=1.8587
x_2: 1/2*(1.8587+2.7175/1.8587)=1.6603, x_2=1.6603 
x_3: 1/2*(1.6603+2.7175/1.6603)=1.6485, x_3=1.6485
x_4: 1/2*(1.6485+2.7175/1.6485)=1.6484, x_4=1.6484 , END
</scratch> 
0.6484
\end{lstlisting}
\end{minipage}
\end{AIbox}
\noindent\begin{minipage}{\textwidth}
\captionsetup{type=figure}
\captionof{figure}{Examples of the detailed scratchpad format for different arithmetic tasks such as subtraction, sine, multiplication, and square root.}\label{fig:ar_others}
\end{minipage}

\begin{figure}[ht]
\vspace{-4mm}
\centering
\subfloat[{Subtraction}]{\includegraphics[width=0.4\textwidth]{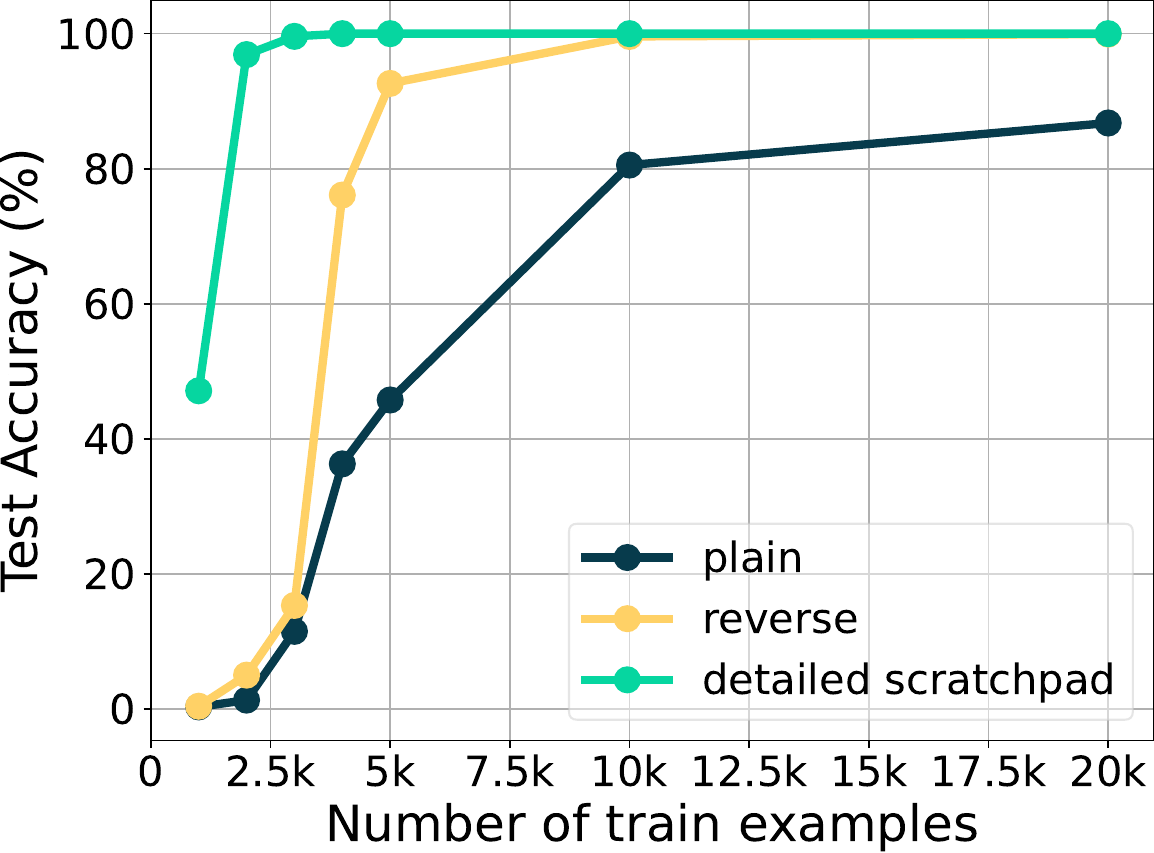}}
\hspace{6mm}
\subfloat[{Multiplication}]{\includegraphics[width=0.4\textwidth]{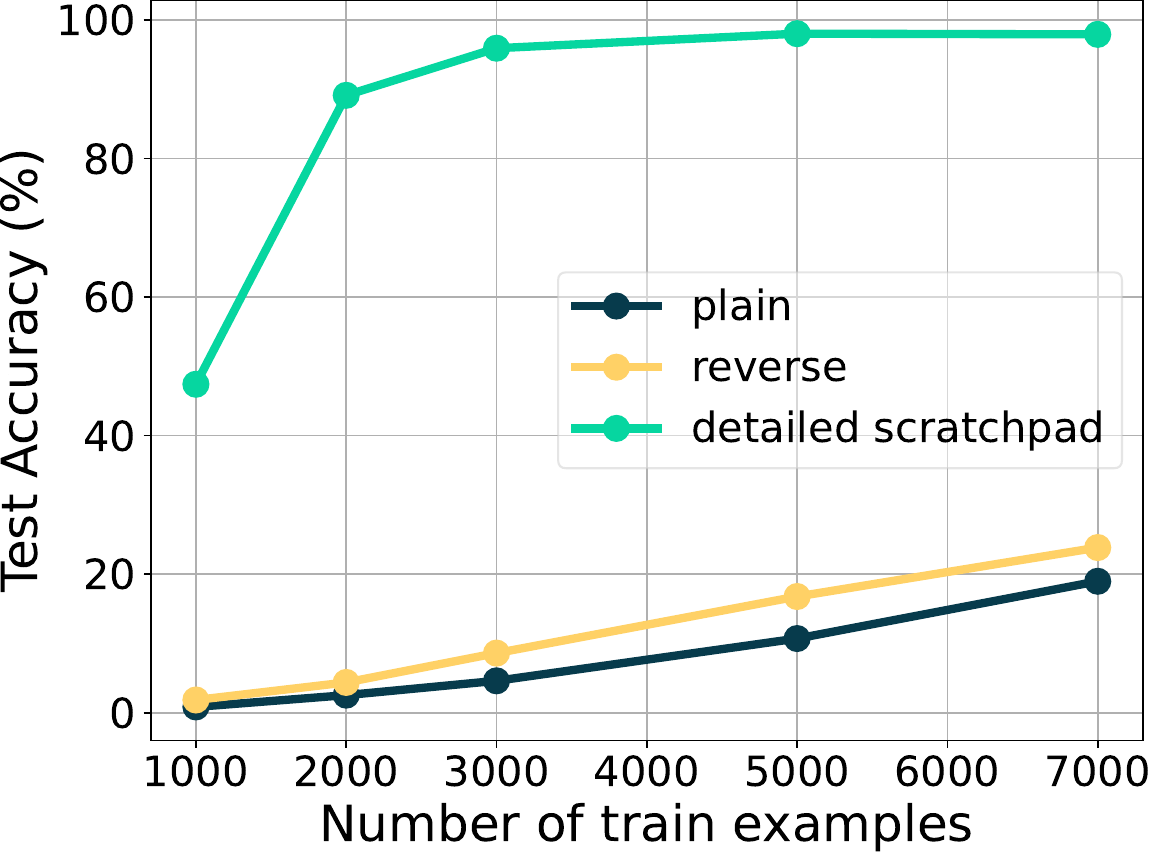}}
\vspace{0.1mm}
\subfloat[{Sine}]{\includegraphics[width=0.4\textwidth]{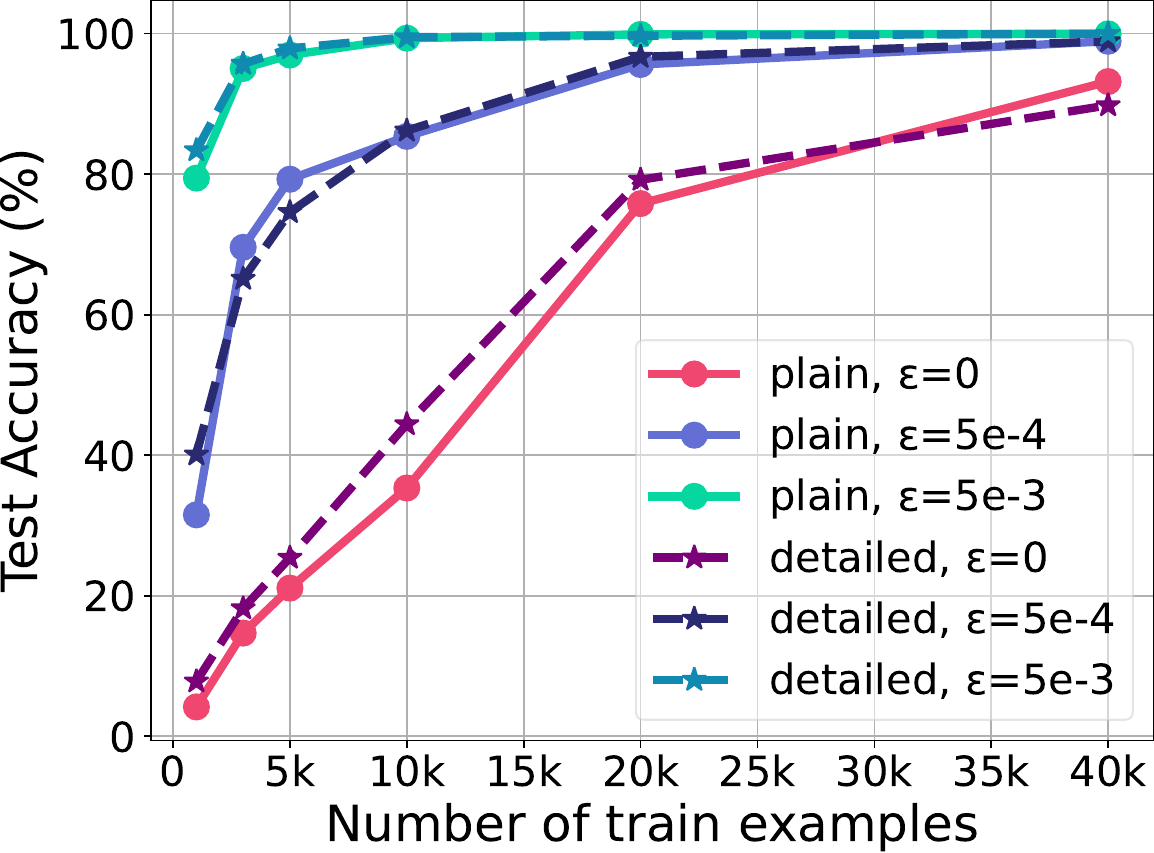}}
\hspace{6mm}
\subfloat[{Square Root}]{\includegraphics[width=0.4\textwidth]{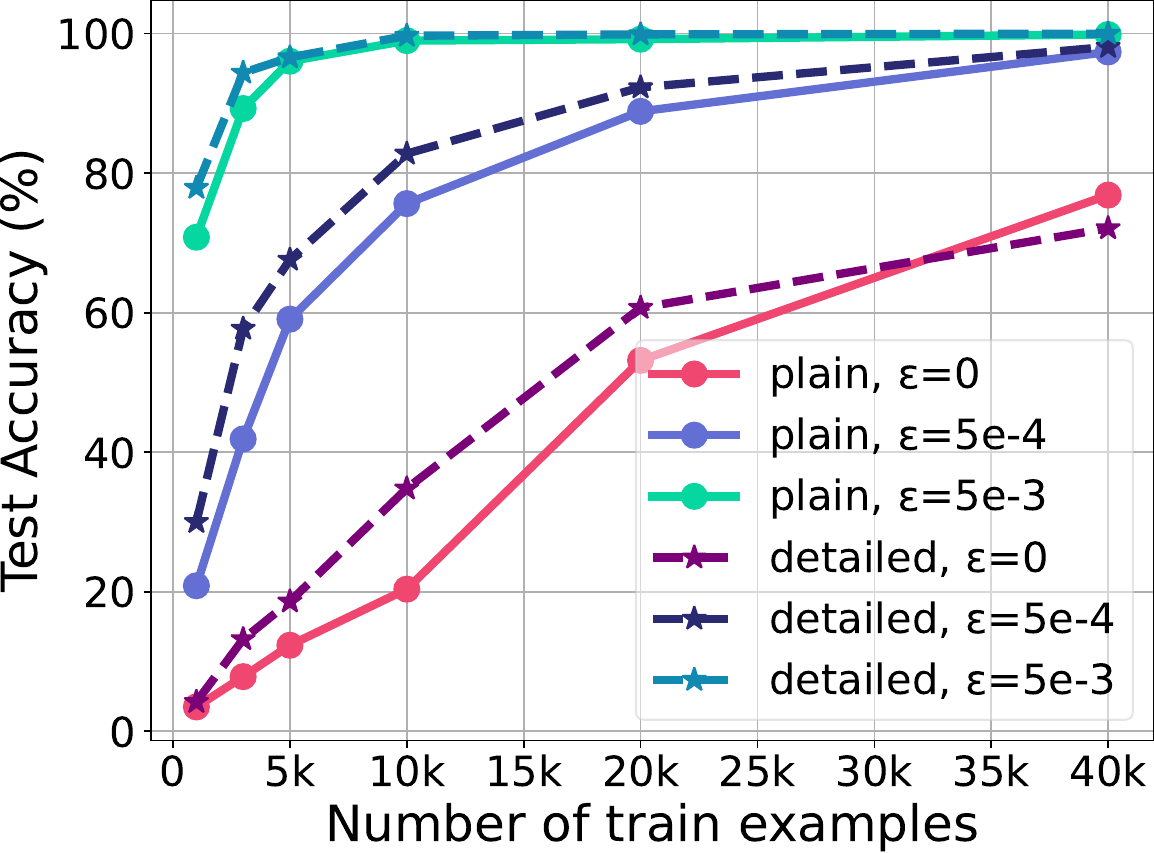}}
    \caption{Performance of $3-$digit subtraction, $2-$digit multiplication, $4-$digit precision sine and square root with varying data formats. As with addition, reverse always produces improved sample complexity and performance for all operations. For sine and square root, scratchpad formatting provides limited improvement. This discrepancy can be attributed to the complexity of the intermediate steps involved in the detailed scratchpad. 
    } 
\label{fig:sample_efficiency_sub_mul}
\vspace{-2mm}
\end{figure}

The results depicted in Figure~\ref{fig:sample_efficiency_sub_mul} indicate that similar to the findings of addition, the detailed scratchpad format significantly improves performance over \emph{plain} or \emph{reverse} formats and yields efficient results even with few samples for subtraction and multiplication tasks. Interestingly, we find \emph{reverse} is not particularly effective in multiplication. 
On the other hand, the detailed scratchpad format exhibits reduced efficiency for $\sin{}$ and $\sqrt{}$ compared to other operations ($+,-,\times$).  This discrepancy can be traced back to the complexity of the intermediate steps involved in the detailed scratchpad. While addition, subtraction, and multiplication are decomposed into simpler functions, sine and square root operations involve more intricate operations. For a broader analysis of the error profile, see Appendix~\ref{sec:sin_sqrt_analysis}.

\subsection{Jointly Training on All Five Arithmetic Tasks}\label{sec:joint_arithmetic}

So far, we only considered the problem of learning different arithmetic operations individually. In this section, we study the effect of jointly training on all five arithmetic tasks - addition, subtraction, multiplication, sine, and square root. We construct a single train dataset incorporating all task $\gD_\text{train}=\{\gD_\text{train}^{+},\gD_\text{train}^{-},\gD_\text{train}^{\times},\gD_\text{train}^{\sin{}},\gD_\text{train}^{\sqrt{}}\}$, and randomize the sequence of tasks in our train samples. For example, a randomly chosen segment of the training data may exhibit a task order such as $(+,-,\sin{}.-,\times,\times,\sqrt{},...)$. We consider $10,000$ training examples for each task of addition, subtraction, sine, and square root and $3,000$ for multiplication.

The model's performance, after training on our joint dataset $\gD_\text{train}$, is evaluated in both zero-shot and few-shot settings. These results are also compared with the performance of models that were trained separately on each dataset $(\gD_\text{train}^{+},\gD_\text{train}^{-},\gD_\text{train}^{\times},\gD_\text{train}^{\sin{}},\gD_\text{train}^{\sqrt{}})$, identical to those used to construct $\gD_\text{train}$. In the few-shot setting, each task is given examples from any of the five arithmetic tasks (not necessarily related to the test task under consideration) or prompt texts, followed by test queries specific to the task of interest. For further details on the few-shot prompting methods used, please refer to Section~\ref{sec:exp3}.

Table~\ref{tab:multiple_tasks} shows that joint training significantly enhances the zero-shot performance for multiplication and square root tasks, yet it slightly reduces the performance for subtraction.  Generally, few-shot prompting exhibits improved performance. Notably, the performance of few-shot prompting remains consistent regardless of whether the exemplars provided are from unrelated tasks or are task-specific. We propose that this consistency is due to our randomized task sequence during training, which presents the model with numerous instances where one task directly follows another, thus simulating few-shot prompting with different tasks.  Furthermore, we observe that text prompting performs similar to zero-shot. We conjecture that this is because the training data does not include text data and the model has never encountered text and therefore, text prompting serves as a random prefix attached to our test query.

\begin{table}[ht]
\center
\caption{Performance of models trained individually and jointly on five arithmetic tasks. The threshold $\epsilon$ for $\sin{}$ and $\sqrt{}$ functions is set to 0. For the models trained jointly on all five tasks, we evaluate their performance in both a zero-shot setting and a few-shot setting. In the few-shot setting, each task is presented with exemplars from one of the five arithmetic tasks or prompted with text, followed by task-specific test queries. The results show that few-shot prompting with any arithmetic operators (even unrelated to the test task) generally improves performance. However, text prompting shows performance similar to the zero-shot setting. }
\label{tab:multiple_tasks}
\vspace{1mm}
\centering
\setlength{\tabcolsep}{4pt} %

\begin{tabular}{c|c|ccccccc}
\toprule
         & \multirow{3}{*}{\shortstack{Trained on\\individual task}} & \multicolumn{7}{c}{Trained jointly on all 5 tasks}                 \\ \cline{3-9}
         &            & \multicolumn{1}{c}{\multirow{2}{*}{Zero-shot}} & \multicolumn{6}{c}{Few-shot exemplar format}            \\ \cline{4-9}
         &            &  \multicolumn{1}{c}{}          & +     & --    & $\times$ & sin   & sqrt  &  text     \\ \midrule
+        & 84.06      & 87.96     & 96.45 & 96.90 & 96.92    & 97.06 & 97.01 & 88.71 \\
--       & 79.97      & 72.83     & 81.28 & 79.59 & 81.39    & 81.84 & 81.74 & 68.91 \\
$\times$ & 4.58       & 14.28     & 18.86 & 18.96 & 15.43    & 19.20 & 19.59 & 15.48 \\
sin      & 35.03      & 34.74     & 34.35 & 34.31 & 34.34    & 32.64 & 33.42 & 33.96 \\
sqrt     & 19.85      & 27.37     & 26.65 & 26.74 & 26.70    & 25.60 & 25.61 & 26.02 \\ 
\bottomrule
\end{tabular}
\end{table}

\section{Mixing Shakespeare with Arithmetic Data}\label{sec:exp3}
Until now, our focus was primarily on models trained exclusively on arithmetic tasks. However, in practice, large language models (LLMs) utilize a combination of arithmetic and \emph{text} data for training. In this section, we broaden our scope by incorporating both addition samples and text into our pretraining data. We then evaluate the trained models with various few-shot prompts to analyze if the model is able to effectively identify the correct context.

\textbf{Experimental Setup. } We mix addition and text data in our experiment using the Shakespeare dataset~\citep{shakespeare} that includes $1,115,394$ tokens of text, $10,000$ plain addition examples ($120,027$ tokens), and $3,000$ detailed scratchpad formatted addition examples ($813,510$ tokens). We fix the number of detailed scratchpad examples and plain addition examples ($3,000$ and $10,000$ respectively) while varying the number of each example type in the training process. The Shakespeare text is segmented into dialogue chunks, with a random number of addition data inserted between them.
We use a character-level tokenizer with a vocabulary size of $80$, containing all characters present in the dataset, including alphabets, digits, and certain symbols like $+,=$ and \textit{\textbackslash n}.

\textbf{Few-shot prompting. } 
Given the mixed nature (arithmetic and text) of our dataset, introducing relevant examples seems an effective strategy to prime the model to generate the desired type of output. To assess the performance of such few-shot $(1/2/3-$shot) prompting, we provide task-specific exemplars as illustrated in Figure~\ref{fig:few-shot-prompting}. Plain addition formatted exemplars are used for testing plain addition inputs, while detailed scratchpad formatted exemplars are utilized for assessing performance on detailed scratchpad formatted inputs. Additionally, we experiment with demonstrating text (see Appendix~\ref{sec:appendix_text_prmopt}. for details) before querying addition (which we denote, Text-prompt). For each 1/2/3-shot and text prompting, average performance is reported over a fixed set of exemplars. Standard deviations of these prompts are denoted by shaded areas in the plots. The term ``few-shot'' refers to the reported mean of all 1/2/3-shot prompting results.

\begin{figure}[h] 
\centering
\includegraphics[width=0.95\textwidth]{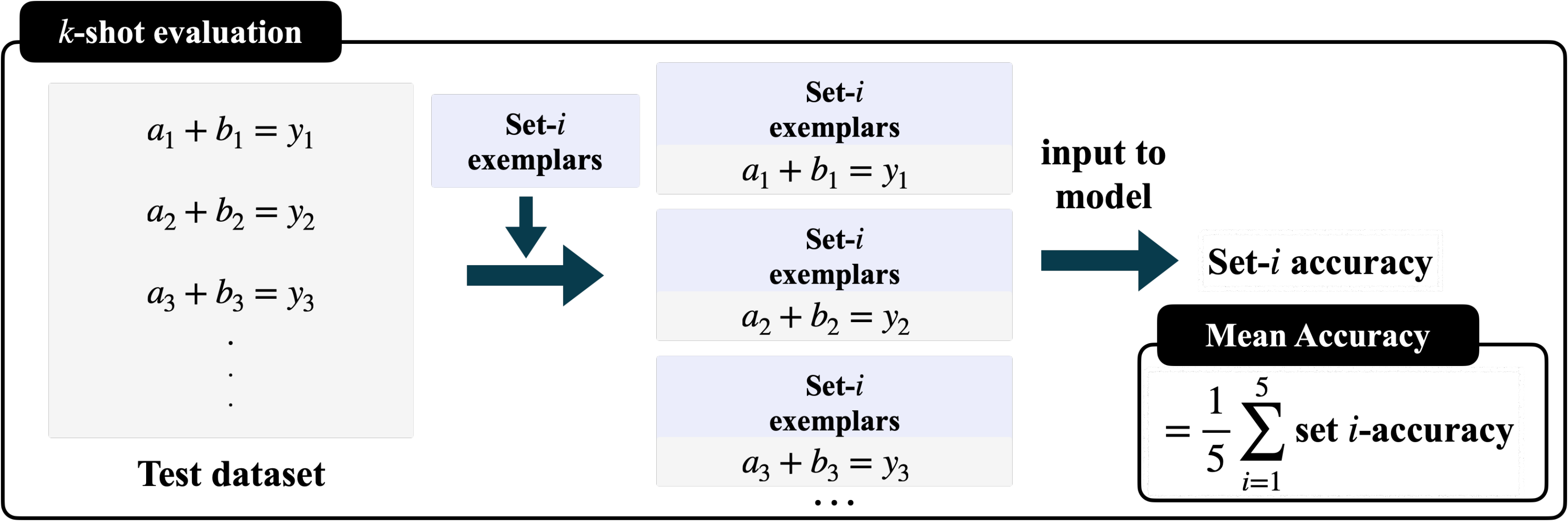}
    \caption{Few-shot prompting method. Few-shot prompting performance is evaluated by presenting relevant exemplars of addition and detailed scratchpad formatted inputs. Each 1/2/3-shot prompting is tested on a fixed five set of exemplars, and the accuracy is averaged over these evaluations.} 
\label{fig:few-shot-prompting}
\end{figure}

\begin{figure}[ht] 
\centering
\subfloat[{Test accuracy on plain addition}]{\includegraphics[width=0.40\textwidth]{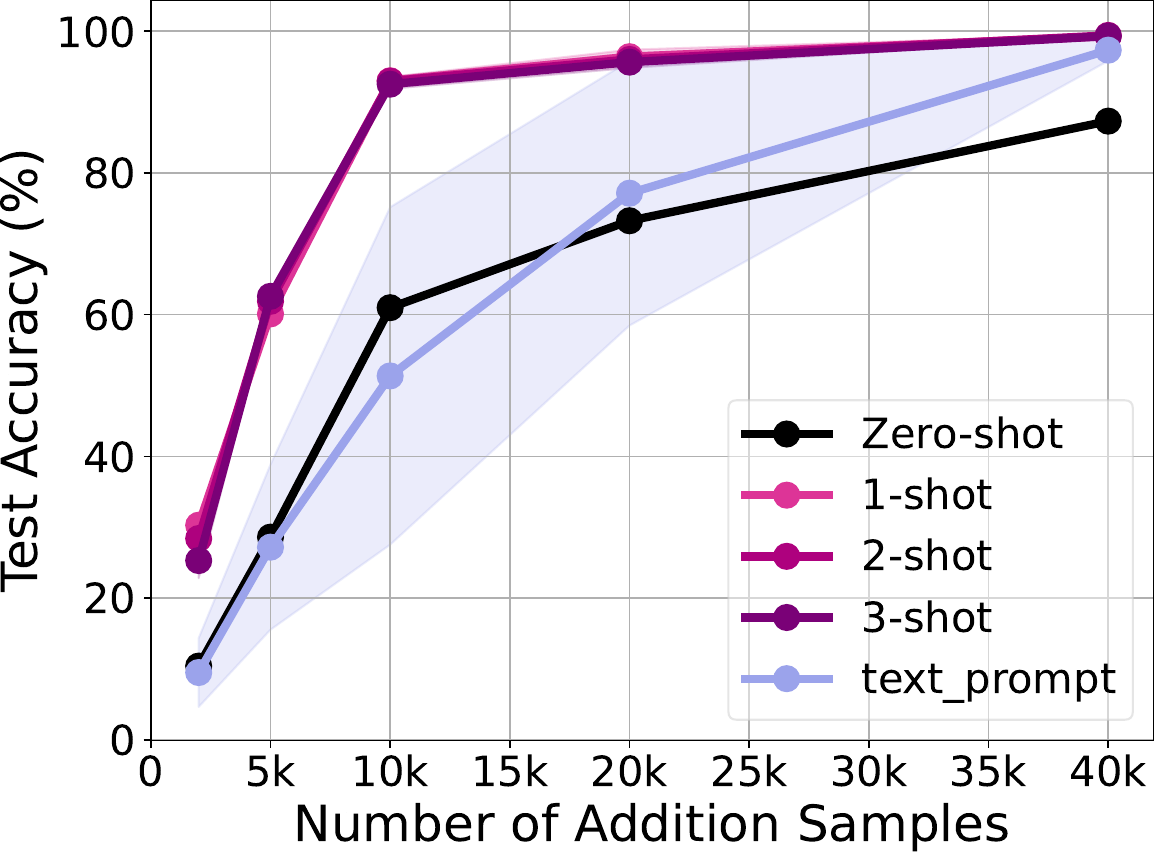}}
\hspace{0.1mm}
\hspace{6mm}
\subfloat[{Test accuracy on detailed scratchpad}]{\includegraphics[width=0.40\textwidth]{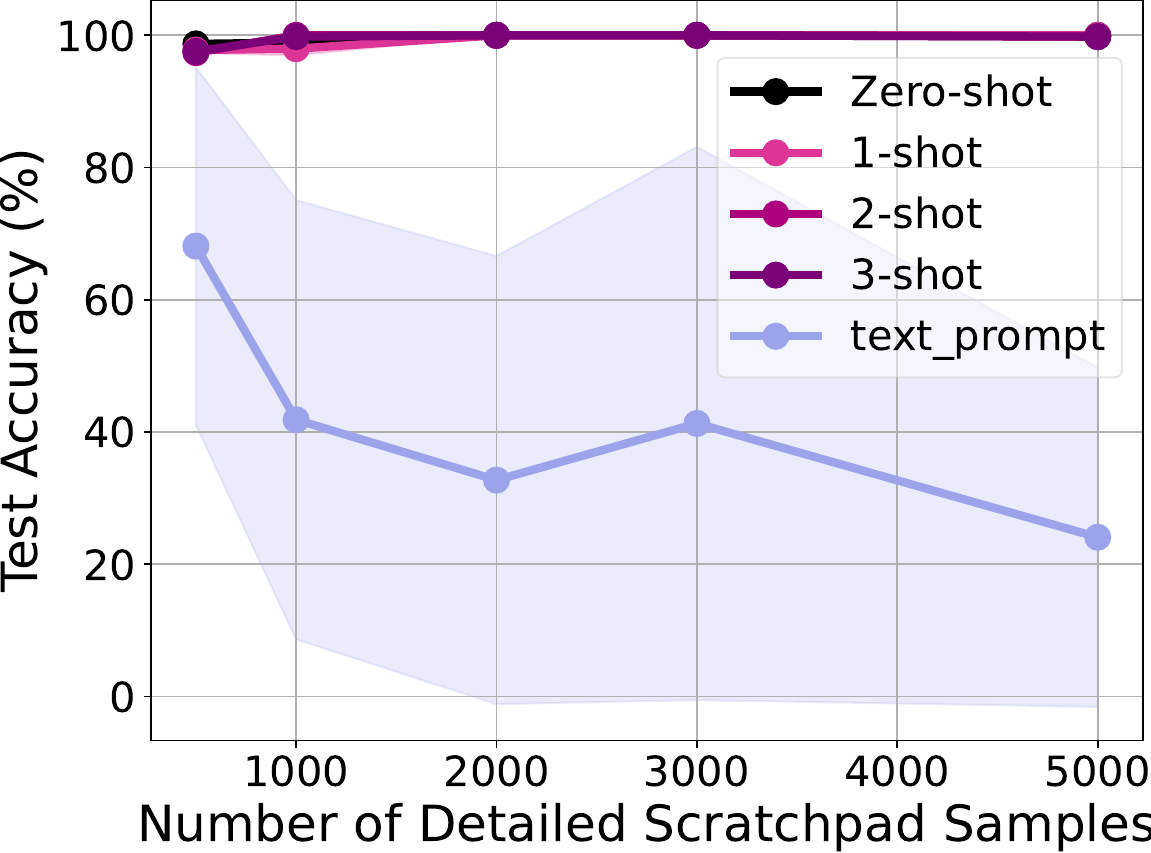}}
    \caption{Performance of NanoGPT model trained with the Shakespeare dataset, addition dataset in plain, and detailed scratchpad format. The number of plain (left) and detailed scratchpad (right) formatted addition samples are varied. Performance is evaluated on zero-shot, few-shot, and text prompts, with the shaded area representing the standard deviation across various prompt exemplar sets. The results indicate a consistent enhancement in model performance using few-shot prompting.} 
\label{fig:NanoGPT_mixed_performance}
\vspace{-2mm}
\end{figure}

Figure~\ref{fig:NanoGPT_mixed_performance} shows that few-shot prompting directs the enhancement of performance, thereby allowing plain addition to perform almost perfectly with 40,000 train samples. Intriguingly, performance remains high on plain addition even with the inclusion of a text prompt, given a substantial number of addition examples. We hypothesize that this is due to the structure of our mixed dataset where addition examples are interspersed within Shakespeare data. With the incorporation of more addition examples, instances where addition examples directly follow Shakespeare text increase, leading to a decrease in potential inconsistencies when text content is present during addition test queries.

\begin{wrapfigure}{r}{0.46\textwidth}
\centering
\includegraphics[width=0.40\textwidth]{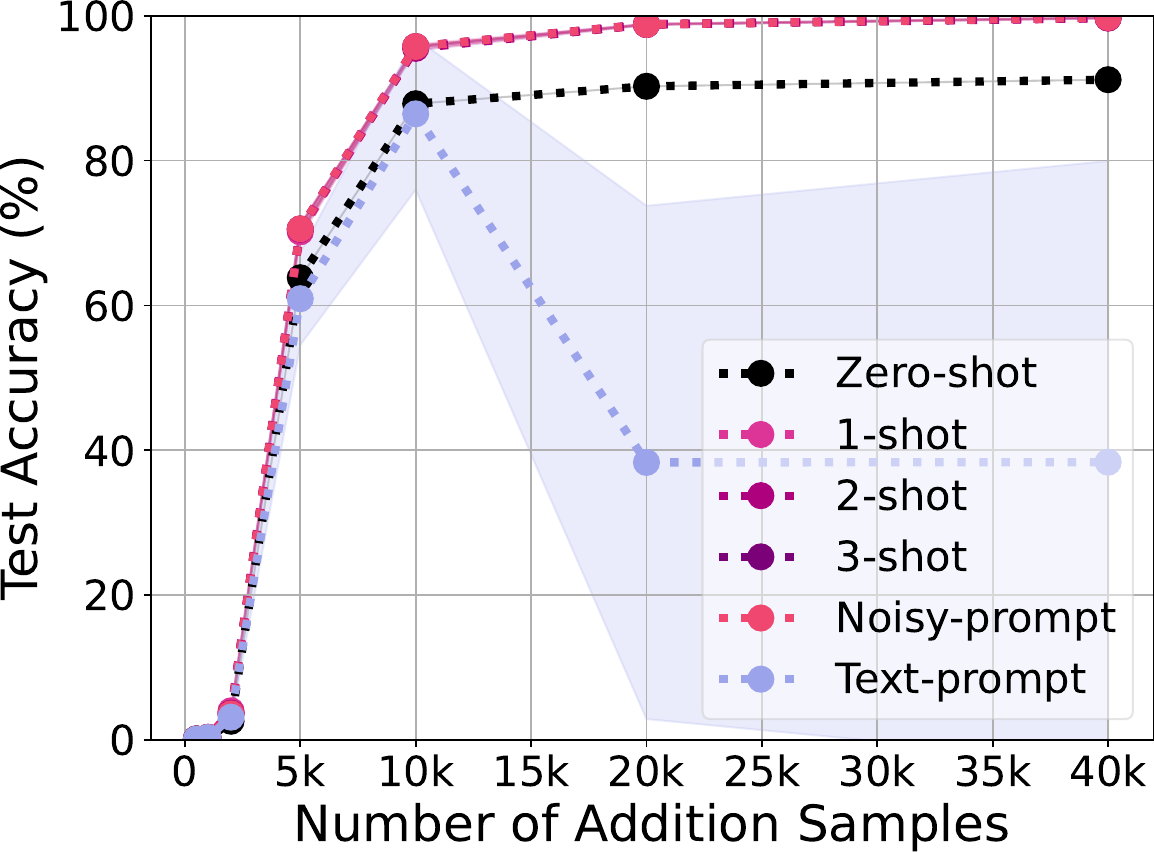}
\caption{Performance of NanoGPT model trained exclusively on plain addition, but with an extended vocabulary including both addition and alphabets (vocabulary size = 80). Few-shot prompting, using both correct addition examples (1, 2, 3-shot) and incorrect addition examples (noisy-prompt) leads to enhanced performance, while the use of text prompts results in a degradation of performance when the model is trained solely on addition.} 
\label{fig:nanogpt_only_addition}
\end{wrapfigure}

To disentangle the effects of the textual content in the training data, we train a model strictly on plain addition, utilizing an enlarged vocabulary that also includes alphabet characters, thereby enabling text prompting. (Note that previous experimental settings on plain formatted additions used a vocabulary size of 13, which only includes 10 numerals and 3 symbols - ``+'',``='',``\textbackslash n''). We introduce a variant of few-shot prompting, termed as \textbf{noisy-prompt}, which prompts the model with erroneous addition exemplars, \ie, $\mathsf{A + B = C}$, where $\mathsf{C \neq A+B}$.

Figure~\ref{fig:nanogpt_only_addition} shows that few-shot prompting contributes to performance enhancement even when the model is confined to training on a single plain addition task. Even in the presence of noisy prompting, simply providing the model with the $\textsf{A + B = C}$ format yields performance nearly identical to few-shot prompting, aligning with the result observed by \citet{min2022rethinking}. Conversely, we notice that text prompts negatively influence performance when the model is trained only on addition. This finding reinforces our earlier observation in Figure~\ref{fig:NanoGPT_mixed_performance} that the advantageous impact of text prompts originates from the combined text and addition data.

\section{Fine-tuning, Scaling, and Pretraining in Larger Models}\label{sec:finetuning_pretrained_models}

This section focuses on bridging the gap between our experiments on NanoGPT and the more realistic setting of larger language models like GPT-2 and GPT-3. We begin by comparing the performance of NanoGPT and GPT-2 models when trained from random initialization. This comparison highlights the improved performance achieved with the larger model scale, especially in the zero-shot setting. Subsequently, we delve into the impact of tokenization methods and model pretraining in GPT-2 models. Our exploration reveals the crucial role of pretrained models and the consistent tokenization of numbers (achieved by introducing spaces) during the training phase for arithmetic tasks. Building on these findings, we proceed to fine-tune a pretrained GPT-3 model on various arithmetic tasks, employing different data formats.

\paragraph{Comparing NanoGPT and GPT-2.}
To examine the impact of scale on arithmetic performance, we explore a larger GPT-2 model with $85$ million parameters, featuring twice as many self-attention layers, heads, and embedding size compared to the previously used NanoGPT model. We train the GPT-2 model from scratch using character-level tokenization, jointly on text and addition tasks, adopting both plain and detailed scratchpad formats; an approach mirroring the setting in Section~\ref{sec:exp3}. The results depicted in Figure~\ref{fig:nanogpt_vs_gpt2} demonstrate that the larger model outperforms in both plain and detailed scratchpad evaluations. For a comprehensive analysis of GPT-2, including few-shot learning and the influence of text prompts, refer to Figure~\ref{fig:mixed_text_prompt_performance} and Figure~\ref{fig:mixed_performance}.

\begin{figure}[ht] 
\vspace{-2mm}
\centering
\subfloat[{Test accuracy on plain addition}]{\includegraphics[width=0.4\textwidth]{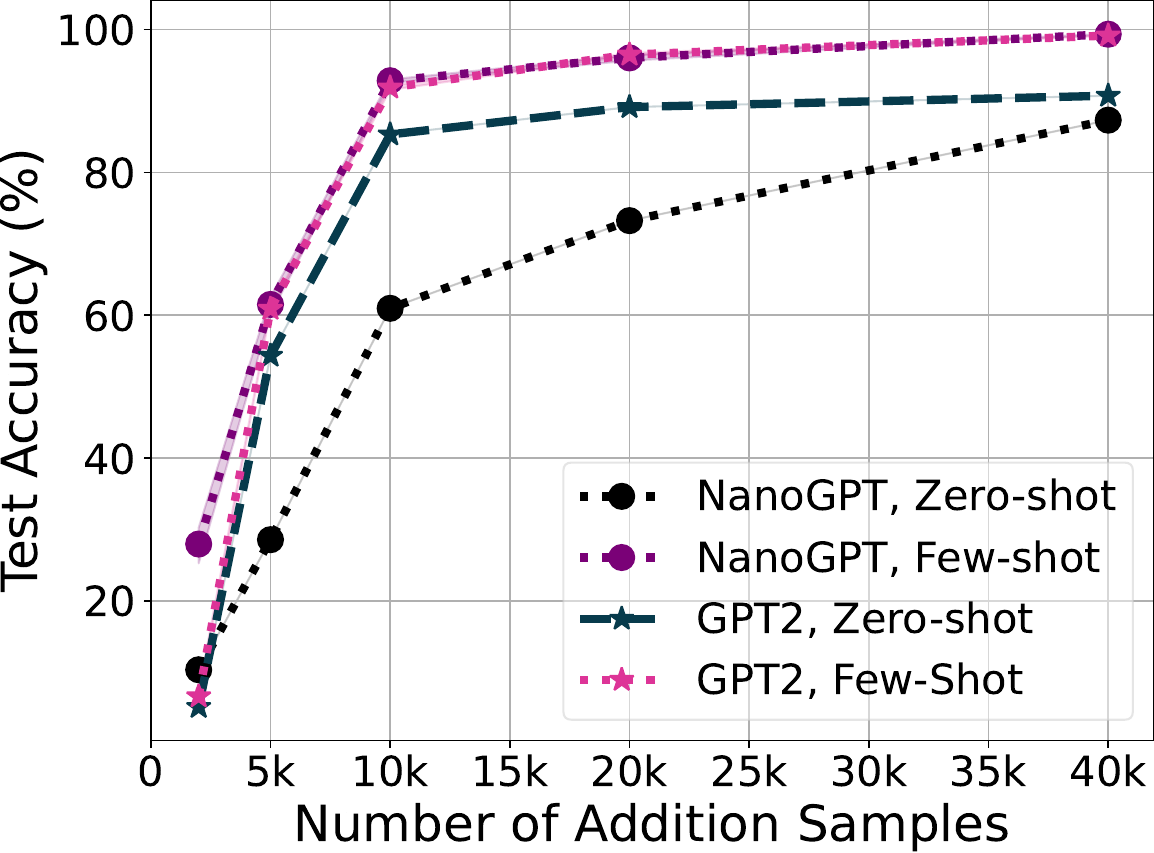}}
\hspace{6mm}
\subfloat[{Test accuracy on detailed scratchpad}]{\includegraphics[width=0.4\textwidth]{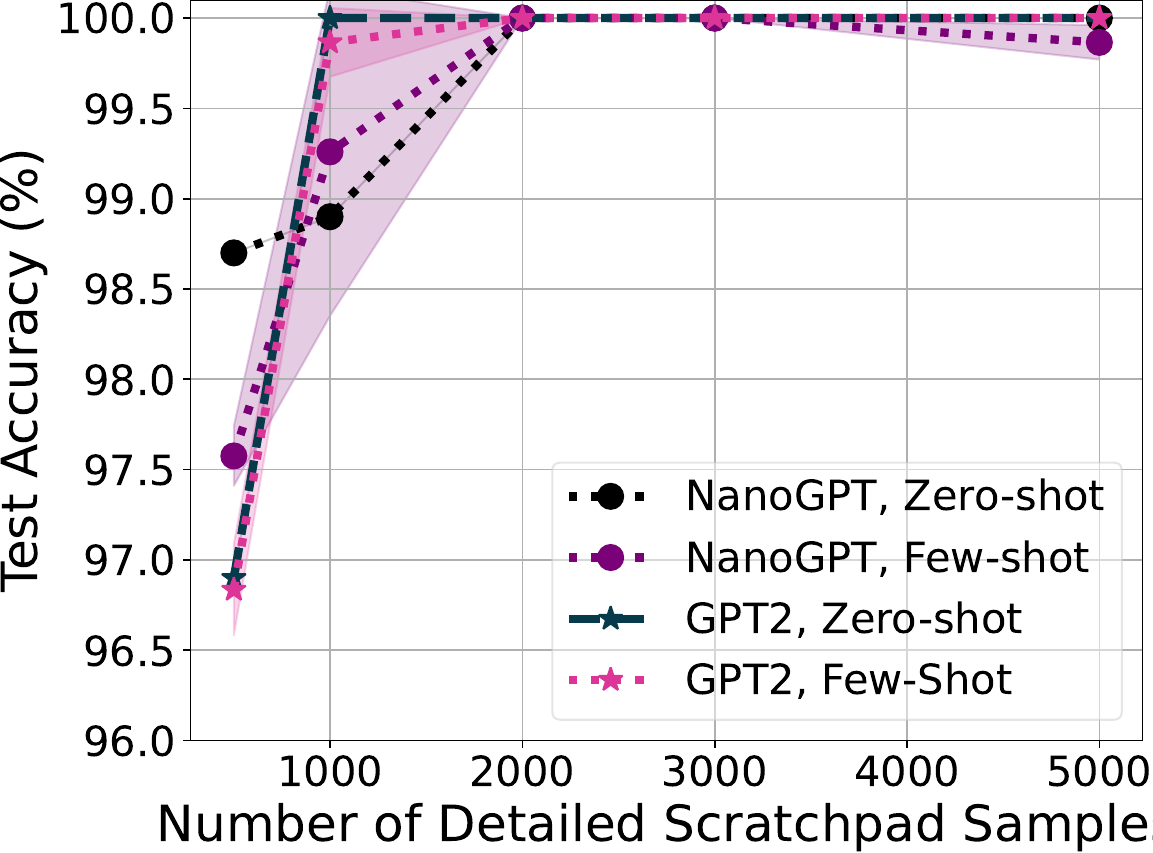}
}
    \caption{Comparing NanoGPT and GPT-2 on addition task.  We compare the performance of NanoGPT and GPT-2 models trained jointly on the Shakespeare dataset and addition tasks using plain and algorithmic reasoning formatting. The results indicate that larger models exhibit improved performance, and using few-shot prompting enhances performance as well. The left side shows results for plain data formatting, while the right side presents results for algorithmic reasoning data formatting.} 
\label{fig:nanogpt_vs_gpt2}
\vspace{-4mm}
\end{figure}

\paragraph{Going from character-level tokenization to BPE.}
The transition to a GPT-2 setup necessitates several modifications. Firstly, we shift to OpenAI's Tiktoken BPE tokenizer, which is the default tokenizer for the pretrained GPT-2 model, featuring a vocabulary size of 50,257. We also examined two different training approaches: training the model from random initialization (scratch) and fine-tuning the pretrained model sourced from Huggingface. To ensure uniform digit tokenization, alterations were made in data formatting to include spaces between numbers. This change aims to circumvent potential inconsistent tokenization of numbers while utilizing the Tiktoken tokenizer.  

Figure~\ref{fig:gpt2_tokenization} shows that GPT-2 demonstrates high performance in addition tasks with both character-level tokenization and Tiktoken with spaces between digits.  This aligns with the results by ~\citet{wallace2019nlp}, suggesting that character-level tokenization exhibits stronger numeracy capabilities compared to a word or sub-word methods. Furthermore, comparing the models trained from scratch and the models trained from the pretrained model, we observe that fine-tuning a pretrained model results in better performance compared to training a model from scratch.

\paragraph{GPT-3 experiments: Supervised fine-tuning.}
We extend our experiments to verify if our observations hold while fine-tuning larger pre-trained models. In the following, we consider three GPT-3 variants: Ada, Curie, and Davinci. Note that since we perform fine-tuning using the OpenAI APIs, by default only the completions are loss generating tokens. Therefore, these experiments are slightly different when compared to the previous settings. We fine-tune these models using the same four data formatting methods as our NanoGPT experiments: (i) \emph{plain} formatting, (ii) \emph{reverse} formatting, (iii) \emph{simplified scratchpad}, and (iv) \emph{detailed scratchpad}. These formats are identical to those from our NanoGPT experiments except for one aspect. We introduce spaces between numbers in \emph{plain} and \emph{reverse} formatting to ensure consistent tokenization.

Due to budget constraints, all experiments were conducted using a fine-tuning dataset of $1,000$ examples, and models were trained for 4 epochs. Performance evaluation was carried out on $1,000$ examples that were disjoint from the training dataset. Note that this training scale is significantly smaller than our experiments on NanoGPT, which employed $10,000$ training examples for $5,000$ iterations, with evaluations conducted on $10,000$ test examples. However, given these models' extensive pretraining on large data corpora, this scale can be deemed rational.

\begin{figure}[ht] 
\centering
\includegraphics[width=0.50\textwidth]{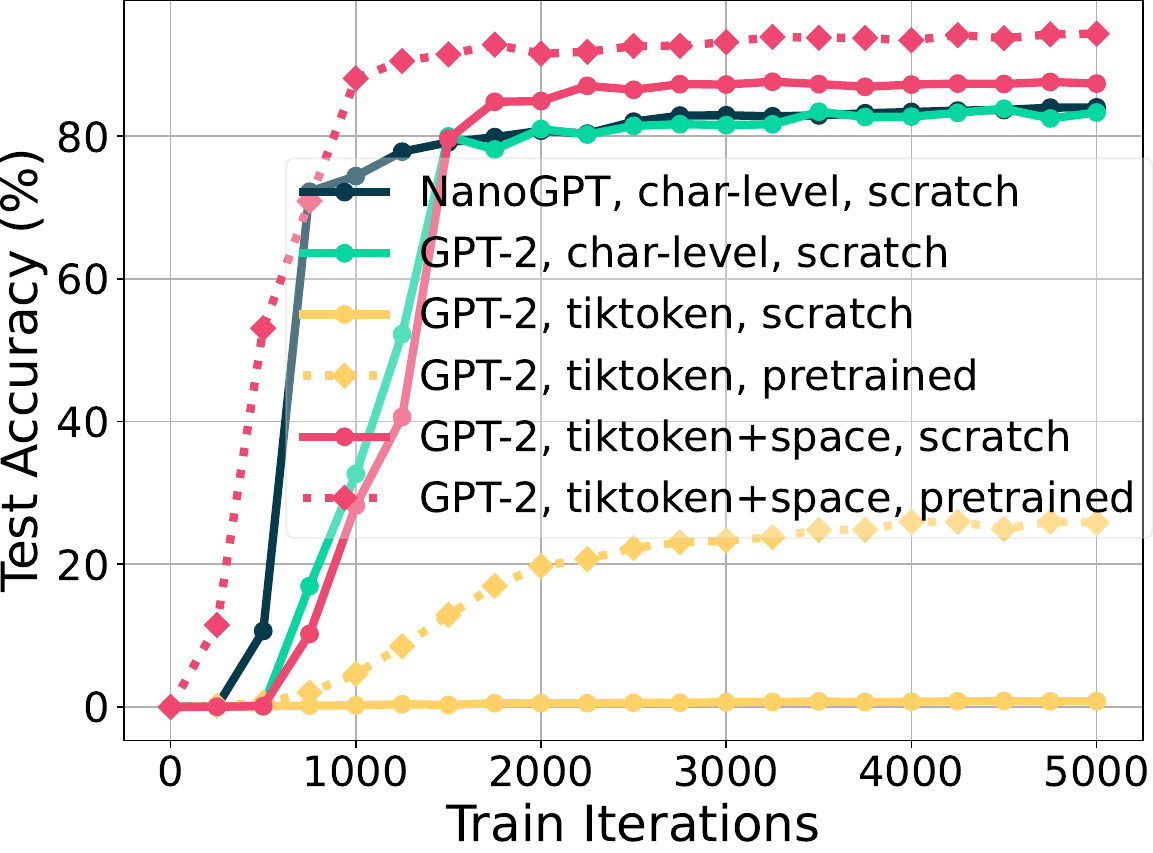}
    \caption{Performance of various configurations of the GPT-2 model on the addition task. We compare the effects of tokenization methods, specifically character-level tokenization versus Tiktoken (OpenAI's BPE tokenizer), training initialization (training from scratch versus training from a pretrained GPT-2 model), and the inclusion or exclusion of spaces between numbers. The results highlight the significance of utilizing pretrained models and incorporating spaces for consistent tokenization of numbers when training a model for arithmetic tasks.} 
\label{fig:gpt2_tokenization}
\vspace{-2mm}
\end{figure}

The results for addition and subtraction tasks are presented in Table~\ref{tab:gpt3_addition} and Table~\ref{tab:gpt3_subtraction}, respectively. We observed that initiating with a pretrained GPT-3 model significantly improves performance compared to training NanoGPT or GPT-2 models from random initialization with only 1000 samples. This indicates the utility of leveraging pretrained models for improved arithmetic performance. Interestingly, while reverse formatting and simplified scratchpad formats improve addition performance, they adversely affect subtraction performance. This observation is consistent with our earlier finding depicted in Figure~\ref{fig:ft_formats}, wherein transitioning from one data format to another often results in lower performance compared to initiating training from random initialization. We postulate that this discrepancy may be due to the pretrained GPT-3 model's requirement to adapt to the reversed approach and ``unlearn'' its knowledge of plain formatting arithmetic, thereby introducing additional complexity.  On the other hand, the detailed scratchpad method achieves excellent performance, albeit with increased training and inference costs due to higher token requirements.

\begin{table}[ht!]
\caption{Evaluation of addition performance for fine-tuned GPT-3 models: Davinci, Curie, and Ada.}
\label{tab:gpt3_addition}
\vspace{2mm}
\centering
\setlength{\tabcolsep}{4pt}

\begin{tabular}{c|ccccc}
\toprule
Addition & \multirow{2}{*}{\begin{tabular}[c]{@{}c@{}}pretrained\\ GPT-3\end{tabular}} & \multicolumn{4}{c}{Finetuned with 1000 samples}                 \\ \cline{3-6} 
         &                                                                             & Plain & Reverse & Simplified Scratchpad & Detailed Scratchpad \\ \midrule
Davinci  & 2\%                                                                         & 34\%  & 80.9\%  & 88.7\%                & \textbf{99.5\% }               \\
Curie    & 0.0\%                                                                       & 1.4\% & 12.3\%  & 10.7\%                & \textbf{99.7\%}                \\
Ada      & 0.0\%                                                                       & 0.3\% & 6.3\%   & 0.6\%                 & \textbf{99.8\%}                 \\ \bottomrule
\end{tabular}

\end{table}

\begin{table}[ht!]
\caption{Evaluation of subtraction performance for fine-tuned GPT-3 models: Davinci, Curie, and Ada.}
\label{tab:gpt3_subtraction}
\vspace{2mm}
\centering
\setlength{\tabcolsep}{4pt}

\begin{tabular}{c|ccccc}
\toprule
Subtraction & \multirow{2}{*}{\begin{tabular}[c]{@{}c@{}}pretrained\\ GPT-3\end{tabular}} & \multicolumn{4}{c}{Finetuned with 1000 samples}                  \\ \cline{3-6} 
            &                                                                             & Plain  & Reverse & Simplified Scratchpad & Detailed Scratchpad \\ \midrule
Davinci     & 0.1\%                                                                      & 84.8\% & 66.0\%  & 15.4\%                & \textbf{99.5\% }               \\
Curie       & 0.1\%                                                                       & 24.1\% & 6\%     & 3.8\%                 & \textbf{92.5\% }               \\
Ada         & 0.0\%                                                                       & 3.7\%  & 2.6\%   & 3.4\%                 & \textbf{81.5\%}                \\ \bottomrule
\end{tabular}
\end{table}

\begin{table}[ht!]
\caption{Evaluation of sine and square root performance for fine-tuned GPT-3 models: Davinci, Curie, and Ada.}
\label{tab:gpt3_sine_sqrt}
\vspace{0.2cm}
\centering
\small
\setlength{\tabcolsep}{4pt}
\begin{tabular}{cc|ccc|ccc}
\toprule
\multicolumn{1}{l}{}     & \multicolumn{1}{l|}{} & \multicolumn{3}{c|}{Sine}                                                                                                      & \multicolumn{3}{c}{Square Root}                                                                                               \\ \midrule
                         & \multirow{2}{*}{eps}  & \multirow{2}{*}{\begin{tabular}[c]{@{}c@{}}pretrained\\ GPT-3\end{tabular}} & \multicolumn{2}{c|}{Finetuned with 1000 samples} & \multirow{2}{*}{\begin{tabular}[c]{@{}c@{}}pretrained\\ GPT-3\end{tabular}} & \multicolumn{2}{c}{Finetuned with 1000 samples} \\ \cline{4-5} \cline{7-8} 
                         &                       &                                                                             & Plain           & Detailed Scratchpad          &                                                                             & Plain           & Detailed Scratchpad         \\ \midrule
\multirow{3}{*}{Davinci} & 0                     & 0\%                                                                         & \textbf{11.0\% }         & 10.3\%                         & 0\%                                                                         & 0.7\%           & \textbf{4.6\% }                        \\
                         & 5e-4                  & 0\%                                                                         & \textbf{35.9\%}          & 29.7\%                         & 0\%                                                                         & 7.5\%           & \textbf{17.2\% }                       \\
                         & 5e-3                  & 0.4\%                                                                       & \textbf{85.5\% }         & 72.8\%                         & 0\%                                                                         & 59\%            & \textbf{60.5\%  }                      \\
\multirow{3}{*}{Curie}   & 0                     & 0.0\%                                                                       & \textbf{8.6\% }          & 1.2\%                          & 0.0\%                                                                       & 0.7\%           & \textbf{2.1\% }                        \\
                         & 5e-4                  & 0.4\%                                                                       & \textbf{32.7\% }         & 5.4\%                          & 0.1\%                                                                       & \textbf{6.5\% }          & 6.0\%                         \\
                         & 5e-3                  & 0.9\%                                                                       & \textbf{80.8\% }         & 15\%                           & 0\%                                                                         & \textbf{52.7\%  }        & 30.2\%                        \\
\multirow{3}{*}{Ada}     & 0                     & 0.0\%                                                                       & \textbf{5.8\% }          & 4.3\%                          & 0.0\%                                                                       & 0.3\%           & \textbf{2.7\% }                        \\
                         & 5e-4                  & 0.0\%                                                                       & \textbf{21.4\% }         & 9.1\%                          & 0.0\%                                                                       & 3.8\%           & \textbf{11.9\% }                       \\
                         & 5e-3                  & 0.3\%                                                                       & \textbf{67.8\% }         & 25.2\%                         & 0.0\%                                                                       & 32.2\%          & \textbf{45.8\% }                       \\ \bottomrule
\end{tabular}
\end{table}

For the more complex sine and square root tasks as shown in Table~\ref{tab:gpt3_sine_sqrt}, we found that training with only 1000 samples is insufficient to generate exact answers (eps=0). The GPT-3 model, fine-tuned with 1,000 samples, performs worse than the NanoGPT model trained with 10,000 samples. Further experiments with larger training datasets are necessary for deeper insights and improved performance on these tasks.

It is worth mentioning that while few-shot prompting notably improves the performance of all three GPT-3 models, their zero-shot performance is quite poor (as shown in the leftmost column of the tables). However, post-training, few-shot prompting becomes less effective as OpenAI's fine-tuning process trains the model on individual prompts and desired completions serially, rather than in concatenation with multiple examples like in our NanoGPT experiments. Consequently, our comparisons primarily focus on the \textbf{zero-shot performances} of each task.

\section{Token Efficiency Across Data Formats}\label{sec:cost_analysis}
\begin{wrapfigure}{r}{0.425\textwidth}
\vspace{-6mm}
\centering
\includegraphics[width=0.42\textwidth]{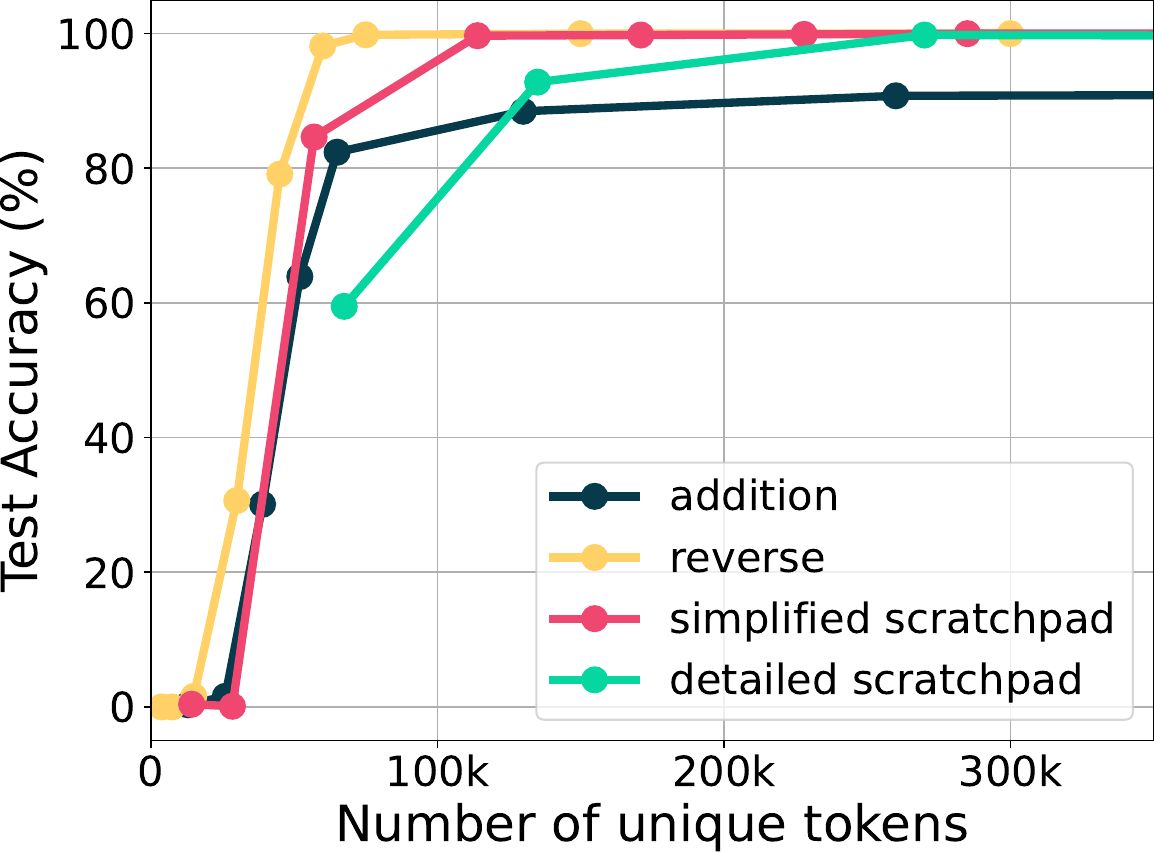}
    \caption{Number of unique tokens required for training addition on NanoGPT using different data formatting methods. The number of unique tokens is calculated by multiplying the number of training samples by the number of tokens per sample. The results demonstrate that the reverse format is the most efficient in terms of token usage for model training, as the scratchpad methods, although more sample-efficient, require more tokens per sample.} 
\label{fig:sample_efficiency_num_tokens}
\end{wrapfigure}

Figure~\ref{fig:nanogpt_sample_efficiency} demonstrates that more detailed training data leads to improved sample efficiency. However, this comparison does not account for the cost associated with training and inference. To address this, we conduct a cost analysis based on the number of ``unique'' tokens encountered during training. Each data sample is treated as a set of unique tokens, and the number of unique tokens is derived by \emph{multiplying the number of samples with the tokens per sample}. For instance, the mean token count for a single training example in a $3$-digit addition task is $13$ for plain format, $15$ for reverse format, $64$ for simplified scratchpad format, and $281$ for detailed scratchpad format. Note that this calculation does not evaluate uniqueness of tokens across samples \ie if the first sample is $\textsc{``112 + 129 = 241''}$ and the second sample is $\textsc{``112 + 128 = 240''}$, we will still consider that the model has seen $26$ unique tokens even though only two tokens differ across samples. This approach ensures our cost calculation accounts for a vanilla implementation of attention with no additional optimizations~\citep{pope2023efficiently}. Table~\ref{tab:num_tokens_per_format} presents the number of tokens required for prompting and completion in each data format, per example. Evidently, the detailed scratchpad method uses considerably more tokens compared to other techniques.

The result in Figure~\ref{fig:sample_efficiency_num_tokens} indicates that reverse formatting is the most token-efficient approach. While detailed scratchpad training is more sample efficient, it necessitates a larger number of tokens per sample, both during training and inference. Given that the inference cost for commercial models is determined by the number of tokens utilized per inference call (sum of prompting and completion tokens), abundant use of models trained on detailed scratchpad formats may escalate overall costs. Furthermore, since the cost of a single forward pass is cubic in the number of tokens, this is important to consider. Therefore, for practical usage, it is crucial to evaluate both the number of samples needed for achieving the desired performance and the actual token demands during training and inference.

\begin{table}[th!]
\vspace{-3mm}
\center
  \caption{
  Token requirements for prompting and completion per single example of 3-digit addition.
  }
  \label{tab:num_tokens_per_format}
  \vspace{1mm}
\centering
\small
\begin{tabular}{lcccc}
\toprule
           & Plain & Reverse & Simplified Scratchpad & Detailed Scratchpad \\ \midrule
Prompt     & 8     & 9       & 23                    & 23                  \\
Completion & 5     & 6       & 41                    & 258                 \\ 
\textbf{Total}      & \textbf{13}    & \textbf{15}      & \textbf{64}                    & \textbf{281}                 \\ \bottomrule
\end{tabular}
\vspace{-3mm}
\end{table}

\section{Length Generalization}\label{sec:appendix_lengthgeneralization}

In this section, we present results from experiments conducted to assess the model's ability to generalize across different digit lengths. Initially, we exclude training examples featuring 2-digit operands from the 10,000-sample addition dataset, yielding a reduced dataset of 7,655 samples, consisting solely of 1 or 3-digit operands. The model is trained with reverse format and its performance is evaluated on test dataset containing 100 random samples of 1-digit, 2-digit, 3-digit, and 4-digit additions. The results in Figure~\ref{fig:length_generalization} demonstrate that the NanoGPT model is incapable of performing 2-digit and 4-digit additions. This suggests an inherent necessity for exposure to all digit combinations to perform accurate calculations and lacks generalization capabilities for unseen digit lengths.

Additionally, we investigate the model's ability to extrapolate over larger digit lengths. The model is trained on 7-digit plain-formatted additions (each digit addition comprises 16650 samples, except 1-digit addition, which is trained on 100 samples). Its ability to add add 8-digit numbers is then put to test. The results in Figure~\ref{fig:length_generalization} show that the model is unable to generalize to a greater number of digits beyond what it has been trained on. Similarly, when training the model on 10-digit binary numbers, it fails to generalize to 11-digit binary additions, further confirming its limited ability to handle unseen digit combinations.

\begin{figure}[ht!] 
\centering
\subfloat[{Trained on 1 and 3 digit addition}]{\includegraphics[width=0.45\textwidth]{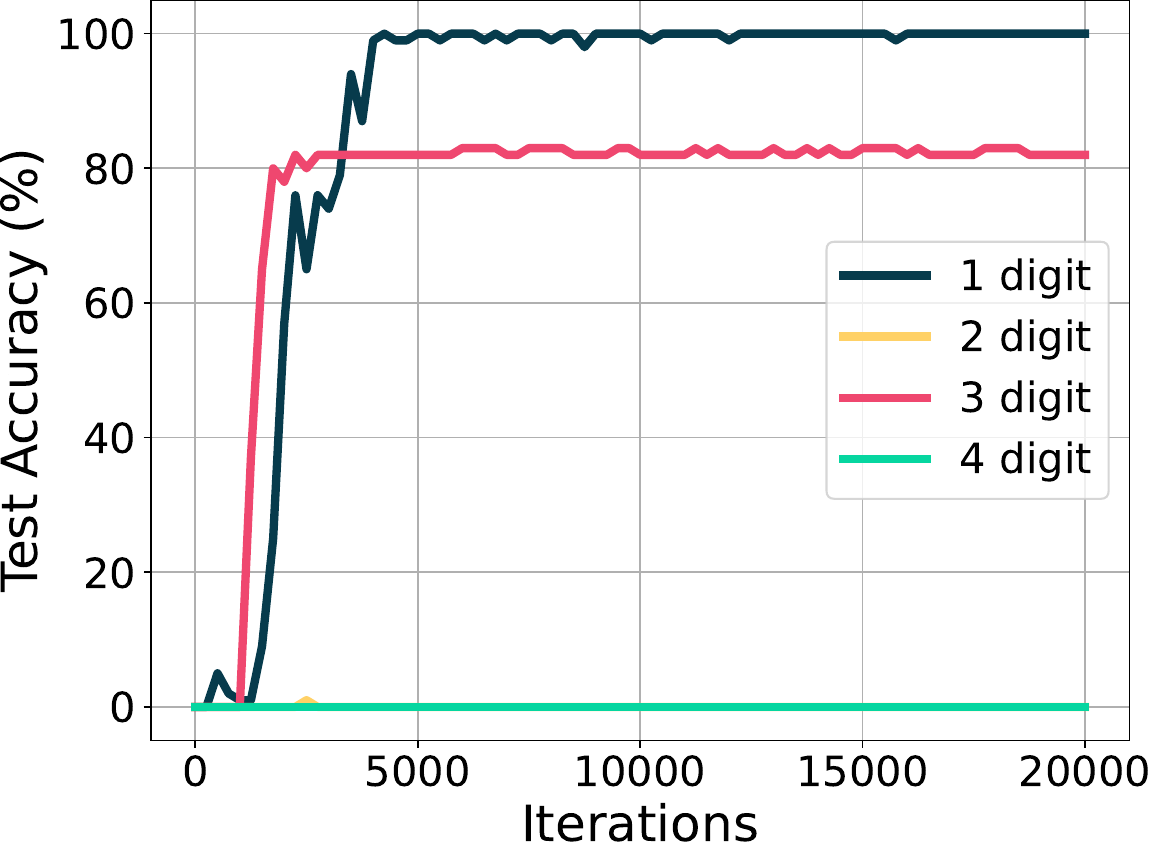}}
\hspace{0.5cm}
\subfloat[{Trained on 1 -- 7 digit addition}]{\includegraphics[width=0.45\textwidth]{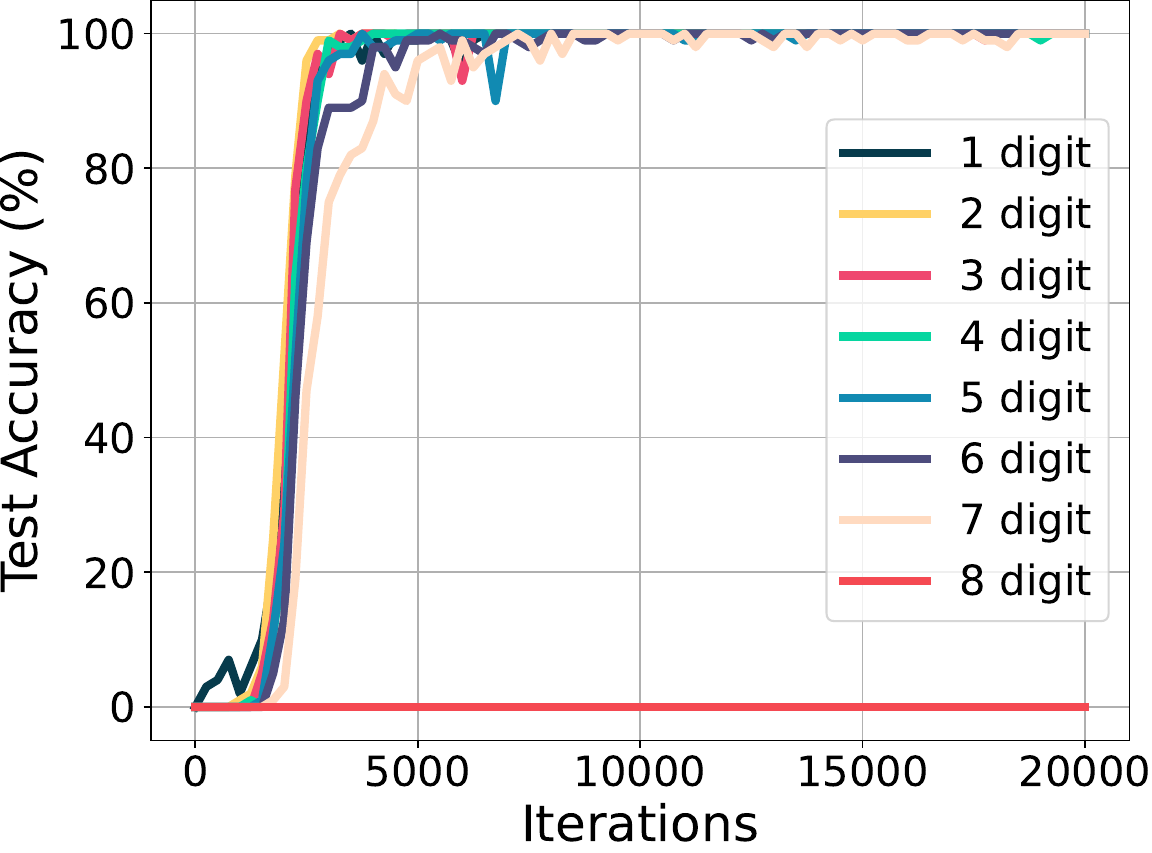}}
    \caption{Generalization experiments testing NanoGPT's performance on unseen numbers of digits in addition tasks. (Left): NanoGPT trained on reverse formatted addition with 1 and 3 digits, and tested on additions ranging from 1 to 4 digits. (Right): NanoGPT trained on up to 7-digit plain formatted addition and tested on additions ranging from 1 to 8 digits. In both cases, NanoGPT exhibits an inability to perform addition on digits it has not been exposed to.} 
\label{fig:length_generalization}
\end{figure}

We further explore the impact of detailed scratchpad formatting. The model trained on additions of up to 3 digits, struggles to generalize to 4-digit additions. Notably, it randomly drops a single digit from the 4-digit number, erroneously perceiving it as a 3-digit number. We illustrate this difficulty in Figure~\ref{fig:length_generalization_examples} through multiple detailed error cases, ranging from instances in which only the test query is provided (Case 1) to scenarios where all intermediate steps are provided except only the final answer (Case 5). The prompts are highlighted in light grey and the responses generated by our trained NanoGPT model are highlighted in light green. These cases emphasize the model's shortcomings in accurately managing larger digit lengths.

\begin{AIbox}[breakable]{\bf{\large Examples for length generalization prompts}}
\vspace{5mm}
{\tt \footnotesize Results obtained by prompting the NanoGPT model with larger digits than those it was trained on. The model is trained using detailed scratchpad formats with 3-digit numbers. We evaluate its performance on 4-digit numbers, with varying levels of provided information. The prompt input is highlighted in a light blue box, while the model's output is highlighted in a light green box.
\\}

\textbf{Case 1: Just asking the question}
\begin{lstlisting}[language=markdown]
Input:
8465+3541
Target:
(*@\highlighttext{<scratch>}@*)
(*@\highlighttext{[8,4,6] has 3 digits.}@*)
(*@\highlighttext{[3,5,1] has 3 digits.}@*)
(*@\highlighttext{[8,4,6] + [3,5,1] , A=[] , C=0 , 6+1+0=7 , A->7 , C->0}@*)
(*@\highlighttext{[8,4] + [3,5] , A=[7] , C=0 , 4+5+0=9 , A->9 , C->0}@*)
(*@\highlighttext{[8] + [3] , A=[9,7] , C=0 , 8+3+0=11 , A->1 , C->1}@*)
(*@\highlighttext{[] + [] , A=[1,9,7] C=1 , END}@*)
(*@\highlighttext{</scratch>}@*)
(*@\highlighttext{1 1 9 7}@*)
\end{lstlisting}

\textbf{Case 2: Giving number of digits}
\begin{lstlisting}[language=markdown]
Input:
8465+3541
Target:
<scratch>
[8,4,6,5] has 4 digits.
[3,5,4,1] has 4 digits.
(*@\highlighttext{[8,4,6] + [3,5,4] , A=[] , C=0 , 6+4+0=10 , A->0 , C->1}@*)
(*@\highlighttext{[8,4] + [3,5] , A=[0] , C=1 , 4+5+1=10 , A->0 , C->1}@*)
(*@\highlighttext{[8] + [3] , A=[0,0] , C=1 , 8+3+1=12 , A->2 , C->1}@*)
(*@\highlighttext{[] + [] , A=[2,0,0] C=1 , END}@*)
(*@\highlighttext{</scratch>}@*)
(*@\highlighttext{1 2 0 0}@*)
\end{lstlisting}

\textbf{Case 3: Giving 1 intermediate steps}
\begin{lstlisting}[language=markdown]
Input:
8465+3541
Target:
<scratch>
[8,4,6,5] has 4 digits.
[3,5,4,1] has 4 digits.
[8,4,6,5] + [3,5,4,1] , A=[] , C=0 , 5+1+0=6 , A->6 , C->0
(*@\highlighttext{[8,4] + [3,5] , A=[6] , C=0 , 4+5+0=9 , A->9 , C->0}@*)
(*@\highlighttext{[8] + [3] , A=[9,6] , C=0 , 8+3+0=11 , A->1 , C->1}@*)
(*@\highlighttext{[] + [] , A=[1,9,6] C=1 , END}@*)
(*@\highlighttext{</scratch>}@*)
(*@\highlighttext{1 1 9 6}@*)
\end{lstlisting}

\textbf{Case 4: Giving all but one intermediate steps}
\begin{lstlisting}[language=markdown]
Input:
8465+3541
Target:
<scratch>
[8,4,6,5] has 4 digits.
[3,5,4,1] has 4 digits.
[8,4,6,5] + [3,5,4,1] , A=[] , C=0 , 5+1+0=6 , A->6 , C->0
[8,4,6] + [3,5,4] , A=[6] , C=0 , 6+4+0=10 , A->0 , C->1
[8,4] + [3,5] , A=[0,6] , C=1 , 4+5+1=10 , A->0 , C->1
[8] + [3] , A=[0,0,6] , C=1 , 8+3+1=12 , A->2 , C->1
(*@\highlighttext{[] + [] , A=[2,0,6] C=1 END}@*)
(*@\highlighttext{</scratch>}@*)
(*@\highlighttext{1 0 0 6}@*)
\end{lstlisting}

\textbf{Case 5: Giving all intermediate steps, all it has to do is aggregate the answer}
\begin{lstlisting}[language=markdown]
Input:
8465+3541
Target:
<scratch>
[8,4,6,5] has 4 digits.
[3,5,4,1] has 4 digits.
[8,4,6,5] + [3,5,4,1] , A=[] , C=0 , 5+1+0=6 , A->6 , C->0
[8,4,6] + [3,5,4] , A=[6] , C=0 , 6+4+0=10 , A->0 , C->1
[8,4] + [3,5] , A=[0,6] , C=1 , 4+5+1=10 , A->0 , C->1
[8] + [3] , A=[0,0,6] , C=1 , 8+3+1=12 , A->2 , C->1
[] + [] , A=[2,0,0,6] C=1 END
</scratch>
(*@\highlighttext{1 2 0 6}@*)
\end{lstlisting}

\textbf{Case 6: Giving 1 in-context example}
\begin{lstlisting}[language=markdown]
Input:
8465+3541
Target:
<scratch>
[8,4,6,5] has 4 digits.
[3,5,4,1] has 4 digits.
[8,4,6,5] + [3,5,4,1] , A=[] , C=0 , 5+1+0=6 , A->6 , C->0
[8,4,6] + [3,5,4] , A=[6] , C=0 , 6+4+0=10 , A->0 , C->1
[8,4] + [3,5] , A=[0,6] , C=1 , 4+5+1=10 , A->0 , C->1
[8] + [3] , A=[0,0,6] , C=1 , 8+3+1=12 , A->2 , C->1
[] + [] , A=[2,0,0,6] C=1 , END
</scratch>
1 2 0 0 6
Input:
1946+3598
Target:
(*@\highlighttext{<scratch>}@*)
(*@\highlighttext{[1,9,4] has 3 digits.}@*)
(*@\highlighttext{[3,5,8] has 3 digits.}@*)
(*@\highlighttext{[1,9,4] + [3,5,8] , A=[] , C=0 , 4+8+0=12 , A->2 , C->1}@*)
(*@\highlighttext{[1,9] + [3,5] , A=[2] , C=1 , 9+5+1=15 , A->5 , C->1}@*)
(*@\highlighttext{[1] + [3] , A=[5,2] , C=1 , 1+3+1=5 , A->5 , C->0}@*)
(*@\highlighttext{[] + [] , A=[5,5,2] C=0 , END}@*)
(*@\highlighttext{</scratch>}@*)
(*@\highlighttext{5 5 2}@*)
\end{lstlisting}

\textbf{Case 7: Giving 1 In-context example, and all intermediate steps}
\begin{lstlisting}[language=markdown]
Input:
8465+3541
Target:
<scratch>
[8,4,6,5] has 4 digits.
[3,5,4,1] has 4 digits.
[8,4,6,5] + [3,5,4,1] , A=[] , C=0 , 5+1+0=6 , A->6 , C->0
[8,4,6] + [3,5,4] , A=[6] , C=0 , 6+4+0=10 , A->0 , C->1
[8,4] + [3,5] , A=[0,6] , C=1 , 4+5+1=10 , A->0 , C->1
[8] + [3] , A=[0,0,6] , C=1 , 8+3+1=12 , A->2 , C->1
[] + [] , A=[2,0,0,6] C=1 , END
</scratch>
1 2 0 0 6
Input:
1946+3598
Target:
<scratch>
[1,9,4,6] has 4 digits.
[3,5,9,8] has 4 digits.
[1,9,4,6] + [3,5,9,8] , A=[] , C=0 , 6+8+0=14 , A->4 , C->1
[1,9,4] + [3,5,9] , A=[4] , C=1 , 4+9+1=14 , A->4 , C->1
[1,9] + [3,5] , A=[4,4] , C=1 , 9+5+1=15 , A->5 , C->1
[1] + [3] , A=[5,4,4] , C=1 , 1+3+1=5 , A->5 , C->0
[] + [] , A=[5,5,4,4] C=0 , END
</scratch>
(*@\highlighttext{5 5 4}@*)
\end{lstlisting}

\end{AIbox}
\noindent\begin{minipage}{\textwidth}
\captionsetup{type=figure}
\captionof{figure}{Example results on the model's output when prompted with a larger number of digits than those it was trained on.}\label{fig:length_generalization_examples}
\end{minipage}

\section{Limitations}\label{sec:discussion}

\textbf{Length generalization. } In our experiments, we did not observe any instances where the model could predict beyond the number of digits it had been trained on (see Section~\ref{sec:appendix_lengthgeneralization}). This finding is consistent with previous literature that suggests length generalization is a challenging task. For instance, \citet{shaw2018self, sun2022length} reported similar difficulties and proposed approaches such as relative positional encodings. \citet{anil2022exploring} suggests that models can only perform out-of-distribution tasks by combining fine-tuning, prompting, and scratchpad techniques. Nonetheless, there have been cases where length generalization was observed. \citet{nye2021show} demonstrated length generalization but only for models with more than $10^8$ parameters.

\textbf{Model/Data scale. } Due to the smaller scale of our experiments, we were able to thoroughly examine the impact of individual components on the model's arithmetic learning capabilities. Our model was limited to a GPT-type decoder-only architecture, primarily focusing on character-level tokenization. Although we have obtained some preliminary results on scaling up and incorporating BPE-based tokenization, it remains uncertain if all our findings can be generalized to the scale of LLMs being used in practice today.

\textbf{Beyond elementary arithmetic. } We choose to analyze simple arithmetic operations in order to carefully isolate factors that contribute to emergence. While the existing literature has already demonstrated the emergence of complicated abilities in practice, our work seeks to provide a better understanding of this behavior.

\section{Conclusion}\label{sec:conclusion}

In this work, we examine the problem of teaching small randomly initialized transformers arithmetic operations and elementary mathematical functions using the next-token prediction objective. We carefully ablate different aspects of the training data so as to isolate the factors that contribute to the emergence of arithmetic capabilities. Our results reveal that traditional training data is sub-optimal for learning arithmetic, and training on detailed, instructive data with intermediate steps or even simply reversing the output improves accuracy and sample complexity. We consider both scenarios with only arithmetic data as well as those with text data, and comprehensively analyze the effect of few-shot prompting, pretraining, and model scale. We find that while detailed, chain-of-thought style data improves sample complexity, it may not be efficient in terms of training and inference costs since it requires training with much more tokens. Furthermore, we find that while the model generalizes to unseen examples of the same number of digits, the problem of length generalization is quite difficult. We attribute this to the model's inability to truly ``learn'' the underlying arithmetic operation in all generality. It remains an open problem how to curate the training data to ensure that the model learns a particular algorithm as opposed to just learning an approximate function map. It is also unclear what the correct way to learn multiple operations is. It seems plausible that learning them in increasing order of complexity is beneficial if one can circumvent the problem of catastrophic forgetting.
Our findings emphasize the significance of high-quality, instructive data for the emergence of arithmetic capabilities in transformers. We anticipate this research will contribute to a more nuanced understanding of the mechanisms by which transformers acquire arithmetic operations. %

\bibliography{ref}

\begin{thebibliography}{71}
\providecommand{\natexlab}[1]{#1}
\providecommand{\url}[1]{\texttt{#1}}
\expandafter\ifx\csname urlstyle\endcsname\relax
  \providecommand{\doi}[1]{doi: #1}\else
  \providecommand{\doi}{doi: \begingroup \urlstyle{rm}\Url}\fi

\bibitem[Anil et~al.(2022)Anil, Wu, Andreassen, Lewkowycz, Misra, Ramasesh,
  Slone, Gur-Ari, Dyer, and Neyshabur]{anil2022exploring}
Anil, C., Wu, Y., Andreassen, A., Lewkowycz, A., Misra, V., Ramasesh, V.,
  Slone, A., Gur-Ari, G., Dyer, E., and Neyshabur, B.
\newblock Exploring length generalization in large language models.
\newblock \emph{arXiv preprint arXiv:2207.04901}, 2022.

\bibitem[Bowman(2013)]{bowman2013can}
Bowman, S.~R.
\newblock Can recursive neural tensor networks learn logical reasoning?
\newblock \emph{arXiv preprint arXiv:1312.6192}, 2013.

\bibitem[Bowman et~al.(2014)Bowman, Potts, and Manning]{bowman2014recursive}
Bowman, S.~R., Potts, C., and Manning, C.~D.
\newblock Recursive neural networks for learning logical semantics.
\newblock \emph{CoRR, abs/1406.1827}, 5, 2014.

\bibitem[Brown et~al.(2020)Brown, Mann, Ryder, Subbiah, Kaplan, Dhariwal,
  Neelakantan, Shyam, Sastry, Askell, et~al.]{brown2020language}
Brown, T., Mann, B., Ryder, N., Subbiah, M., Kaplan, J.~D., Dhariwal, P.,
  Neelakantan, A., Shyam, P., Sastry, G., Askell, A., et~al.
\newblock Language models are few-shot learners.
\newblock \emph{Advances in neural information processing systems},
  33:\penalty0 1877--1901, 2020.

\bibitem[Bubeck et~al.(2023)Bubeck, Chandrasekaran, Eldan, Gehrke, Horvitz,
  Kamar, Lee, Lee, Li, Lundberg, et~al.]{bubeck2023sparks}
Bubeck, S., Chandrasekaran, V., Eldan, R., Gehrke, J., Horvitz, E., Kamar, E.,
  Lee, P., Lee, Y.~T., Li, Y., Lundberg, S., et~al.
\newblock Sparks of artificial general intelligence: Early experiments with
  gpt-4.
\newblock \emph{arXiv preprint arXiv:2303.12712}, 2023.

\bibitem[Cai et~al.(2017)Cai, Shin, and Song]{cai2017making}
Cai, J., Shin, R., and Song, D.
\newblock Making neural programming architectures generalize via recursion.
\newblock \emph{arXiv preprint arXiv:1704.06611}, 2017.

\bibitem[Charton(2021)]{charton2021linear}
Charton, F.
\newblock Linear algebra with transformers.
\newblock \emph{arXiv preprint arXiv:2112.01898}, 2021.

\bibitem[Charton(2022)]{charton2022my}
Charton, F.
\newblock What is my math transformer doing?--three results on interpretability
  and generalization.
\newblock \emph{arXiv preprint arXiv:2211.00170}, 2022.

\bibitem[Chen et~al.(2017)Chen, Liu, and Song]{chen2017towards}
Chen, X., Liu, C., and Song, D.
\newblock Towards synthesizing complex programs from input-output examples.
\newblock \emph{arXiv preprint arXiv:1706.01284}, 2017.

\bibitem[Chen et~al.(2023)Chen, Lin, Sch{\"a}rli, and Zhou]{chen2023teaching}
Chen, X., Lin, M., Sch{\"a}rli, N., and Zhou, D.
\newblock Teaching large language models to self-debug.
\newblock \emph{arXiv preprint arXiv:2304.05128}, 2023.

\bibitem[Chowdhery et~al.(2022)Chowdhery, Narang, Devlin, Bosma, Mishra,
  Roberts, Barham, Chung, Sutton, Gehrmann, et~al.]{chowdhery2022palm}
Chowdhery, A., Narang, S., Devlin, J., Bosma, M., Mishra, G., Roberts, A.,
  Barham, P., Chung, H.~W., Sutton, C., Gehrmann, S., et~al.
\newblock Palm: Scaling language modeling with pathways.
\newblock \emph{arXiv preprint arXiv:2204.02311}, 2022.

\bibitem[Chung et~al.(2022)Chung, Hou, Longpre, Zoph, Tay, Fedus, Li, Wang,
  Dehghani, Brahma, et~al.]{chung2022scaling}
Chung, H.~W., Hou, L., Longpre, S., Zoph, B., Tay, Y., Fedus, W., Li, E., Wang,
  X., Dehghani, M., Brahma, S., et~al.
\newblock Scaling instruction-finetuned language models.
\newblock \emph{arXiv preprint arXiv:2210.11416}, 2022.

\bibitem[Cobbe et~al.(2021)Cobbe, Kosaraju, Bavarian, Chen, Jun, Kaiser,
  Plappert, Tworek, Hilton, Nakano, et~al.]{cobbe2021training}
Cobbe, K., Kosaraju, V., Bavarian, M., Chen, M., Jun, H., Kaiser, L., Plappert,
  M., Tworek, J., Hilton, J., Nakano, R., et~al.
\newblock Training verifiers to solve math word problems.
\newblock \emph{arXiv preprint arXiv:2110.14168}, 2021.

\bibitem[Dehghani et~al.(2018)Dehghani, Gouws, Vinyals, Uszkoreit, and
  Kaiser]{dehghani2018universal}
Dehghani, M., Gouws, S., Vinyals, O., Uszkoreit, J., and Kaiser, {\L}.
\newblock Universal transformers.
\newblock \emph{arXiv preprint arXiv:1807.03819}, 2018.

\bibitem[Drozdov et~al.(2022)Drozdov, Sch{\"a}rli, Aky{\"u}rek, Scales, Song,
  Chen, Bousquet, and Zhou]{drozdov2022compositional}
Drozdov, A., Sch{\"a}rli, N., Aky{\"u}rek, E., Scales, N., Song, X., Chen, X.,
  Bousquet, O., and Zhou, D.
\newblock Compositional semantic parsing with large language models.
\newblock \emph{arXiv preprint arXiv:2209.15003}, 2022.

\bibitem[Gadre et~al.(2023)Gadre, Ilharco, Fang, Hayase, Smyrnis, Nguyen,
  Marten, Wortsman, Ghosh, Zhang, et~al.]{gadre2023datacomp}
Gadre, S.~Y., Ilharco, G., Fang, A., Hayase, J., Smyrnis, G., Nguyen, T.,
  Marten, R., Wortsman, M., Ghosh, D., Zhang, J., et~al.
\newblock Datacomp: In search of the next generation of multimodal datasets.
\newblock \emph{arXiv preprint arXiv:2304.14108}, 2023.

\bibitem[Giannou et~al.(2023)Giannou, Rajput, Sohn, Lee, Lee, and
  Papailiopoulos]{giannou2023looped}
Giannou, A., Rajput, S., Sohn, J.-y., Lee, K., Lee, J.~D., and Papailiopoulos,
  D.
\newblock Looped transformers as programmable computers.
\newblock \emph{arXiv preprint arXiv:2301.13196}, 2023.

\bibitem[Hajij et~al.(2021)Hajij, Zamzmi, Ramamurthy, and Saenz]{hajij2021data}
Hajij, M., Zamzmi, G., Ramamurthy, K.~N., and Saenz, A.~G.
\newblock Data-centric ai requires rethinking data notion.
\newblock \emph{arXiv preprint arXiv:2110.02491}, 2021.

\bibitem[Hanna et~al.(2023)Hanna, Liu, and Variengien]{hanna2023does}
Hanna, M., Liu, O., and Variengien, A.
\newblock How does gpt-2 compute greater-than?: Interpreting mathematical
  abilities in a pre-trained language model.
\newblock \emph{arXiv preprint arXiv:2305.00586}, 2023.

\bibitem[Huang et~al.(2022)Huang, Gu, Hou, Wu, Wang, Yu, and
  Han]{huang2022large}
Huang, J., Gu, S.~S., Hou, L., Wu, Y., Wang, X., Yu, H., and Han, J.
\newblock Large language models can self-improve.
\newblock \emph{arXiv preprint arXiv:2210.11610}, 2022.

\bibitem[Kaiser \& Sutskever(2015)Kaiser and Sutskever]{kaiser2015neural}
Kaiser, {\L}. and Sutskever, I.
\newblock Neural gpus learn algorithms.
\newblock \emph{arXiv preprint arXiv:1511.08228}, 2015.

\bibitem[Karpathy(2015)]{shakespeare}
Karpathy, A.
\newblock char-rnn.
\newblock \url{https://github.com/karpathy/char-rnn}, 2015.

\bibitem[Karpathy(2022)]{nanogpt}
Karpathy, A.
\newblock Andrej karpathy's lightweight implementation of medium-sized gpts.
\newblock \emph{GitHub}, 2022.
\newblock URL \url{https://github.com/karpathy/nanoGPT}.

\bibitem[Kim et~al.(2021)Kim, Hong, Kim, Kang, and Myaeng]{kim2021have}
Kim, J., Hong, G., Kim, K.-m., Kang, J., and Myaeng, S.-H.
\newblock Have you seen that number? investigating extrapolation in question
  answering models.
\newblock In \emph{Proceedings of the 2021 Conference on Empirical Methods in
  Natural Language Processing}, pp.\  7031--7037, 2021.

\bibitem[Kir{\'a}ly et~al.(2015)Kir{\'a}ly, Theran, and
  Tomioka]{kiraly2015algebraic}
Kir{\'a}ly, F.~J., Theran, L., and Tomioka, R.
\newblock The algebraic combinatorial approach for low-rank matrix completion.
\newblock \emph{J. Mach. Learn. Res.}, 16\penalty0 (1):\penalty0 1391--1436,
  2015.

\bibitem[Kojima et~al.(2022)Kojima, Gu, Reid, Matsuo, and
  Iwasawa]{kojima2022large}
Kojima, T., Gu, S.~S., Reid, M., Matsuo, Y., and Iwasawa, Y.
\newblock Large language models are zero-shot reasoners.
\newblock \emph{arXiv preprint arXiv:2205.11916}, 2022.

\bibitem[Lake \& Baroni(2018)Lake and Baroni]{lake2018generalization}
Lake, B. and Baroni, M.
\newblock Generalization without systematicity: On the compositional skills of
  sequence-to-sequence recurrent networks.
\newblock In \emph{International conference on machine learning}, pp.\
  2873--2882. PMLR, 2018.

\bibitem[Lightman et~al.(2023)Lightman, Kosaraju, Burda, Edwards, Baker, Lee,
  Leike, Schulman, Sutskever, and Cobbe]{lightman2023let}
Lightman, H., Kosaraju, V., Burda, Y., Edwards, H., Baker, B., Lee, T., Leike,
  J., Schulman, J., Sutskever, I., and Cobbe, K.
\newblock Let's verify step by step.
\newblock \emph{arXiv preprint arXiv:2305.20050}, 2023.

\bibitem[Ling et~al.(2017)Ling, Yogatama, Dyer, and Blunsom]{ling2017program}
Ling, W., Yogatama, D., Dyer, C., and Blunsom, P.
\newblock Program induction by rationale generation: Learning to solve and
  explain algebraic word problems.
\newblock \emph{arXiv preprint arXiv:1705.04146}, 2017.

\bibitem[Liu et~al.(2023)Liu, Ash, Goel, Krishnamurthy, and
  Zhang]{liu2023exposing}
Liu, B., Ash, J.~T., Goel, S., Krishnamurthy, A., and Zhang, C.
\newblock Exposing attention glitches with flip-flop language modeling.
\newblock \emph{arXiv preprint arXiv:2306.00946}, 2023.

\bibitem[Min et~al.(2022)Min, Lyu, Holtzman, Artetxe, Lewis, Hajishirzi, and
  Zettlemoyer]{min2022rethinking}
Min, S., Lyu, X., Holtzman, A., Artetxe, M., Lewis, M., Hajishirzi, H., and
  Zettlemoyer, L.
\newblock Rethinking the role of demonstrations: What makes in-context learning
  work?
\newblock \emph{arXiv preprint arXiv:2202.12837}, 2022.

\bibitem[MosaicML(2023)]{mosaicml2023Introducing}
MosaicML.
\newblock Introducing mpt-7b: A new standard for open source, commercially
  usable llms, 2023.
\newblock URL \url{www.mosaicml.com/blog/mpt-7b}.
\newblock Accessed: 2023-05-05.

\bibitem[Motamedi et~al.(2021)Motamedi, Sakharnykh, and
  Kaldewey]{motamedi2021data}
Motamedi, M., Sakharnykh, N., and Kaldewey, T.
\newblock A data-centric approach for training deep neural networks with less
  data.
\newblock \emph{arXiv preprint arXiv:2110.03613}, 2021.

\bibitem[Nogueira et~al.(2021)Nogueira, Jiang, and
  Lin]{nogueira2021investigating}
Nogueira, R., Jiang, Z., and Lin, J.
\newblock Investigating the limitations of transformers with simple arithmetic
  tasks.
\newblock \emph{arXiv preprint arXiv:2102.13019}, 2021.

\bibitem[Nye et~al.(2021)Nye, Andreassen, Gur-Ari, Michalewski, Austin, Bieber,
  Dohan, Lewkowycz, Bosma, Luan, et~al.]{nye2021show}
Nye, M., Andreassen, A.~J., Gur-Ari, G., Michalewski, H., Austin, J., Bieber,
  D., Dohan, D., Lewkowycz, A., Bosma, M., Luan, D., et~al.
\newblock Show your work: Scratchpads for intermediate computation with
  language models.
\newblock \emph{arXiv preprint arXiv:2112.00114}, 2021.

\bibitem[Ontan{\'o}n et~al.(2021)Ontan{\'o}n, Ainslie, Cvicek, and
  Fisher]{ontanon2021making}
Ontan{\'o}n, S., Ainslie, J., Cvicek, V., and Fisher, Z.
\newblock Making transformers solve compositional tasks.
\newblock \emph{arXiv preprint arXiv:2108.04378}, 2021.

\bibitem[P{\'e}rez et~al.(2021)P{\'e}rez, Barcel{\'o}, and
  Marinkovic]{perez2021attention}
P{\'e}rez, J., Barcel{\'o}, P., and Marinkovic, J.
\newblock Attention is turing complete.
\newblock \emph{The Journal of Machine Learning Research}, 22\penalty0
  (1):\penalty0 3463--3497, 2021.

\bibitem[Peterson et~al.(2019)Peterson, Meylan, and Bourgin]{openwebtext}
Peterson, J., Meylan, S., and Bourgin, D.
\newblock Open clone of openai's unreleased webtext dataset scraper.
\newblock \emph{GitHub}, 2019.
\newblock URL \url{https://github.com/jcpeterson/openwebtext}.

\bibitem[Pope et~al.(2023)Pope, Douglas, Chowdhery, Devlin, Bradbury, Heek,
  Xiao, Agrawal, and Dean]{pope2023efficiently}
Pope, R., Douglas, S., Chowdhery, A., Devlin, J., Bradbury, J., Heek, J., Xiao,
  K., Agrawal, S., and Dean, J.
\newblock Efficiently scaling transformer inference.
\newblock \emph{Proceedings of Machine Learning and Systems}, 5, 2023.

\bibitem[Qian et~al.(2022)Qian, Wang, Li, Li, and Yan]{qian2022limitations}
Qian, J., Wang, H., Li, Z., Li, S., and Yan, X.
\newblock Limitations of language models in arithmetic and symbolic induction.
\newblock \emph{arXiv preprint arXiv:2208.05051}, 2022.

\bibitem[Radford \& Narasimhan(2018)Radford and
  Narasimhan]{Radford2018ImprovingLU}
Radford, A. and Narasimhan, K.
\newblock Improving language understanding by generative pre-training.
\newblock 2018.

\bibitem[Rajani et~al.(2019)Rajani, McCann, Xiong, and
  Socher]{rajani2019explain}
Rajani, N.~F., McCann, B., Xiong, C., and Socher, R.
\newblock Explain yourself! leveraging language models for commonsense
  reasoning.
\newblock \emph{arXiv preprint arXiv:1906.02361}, 2019.

\bibitem[Razeghi et~al.(2022)Razeghi, Logan~IV, Gardner, and
  Singh]{razeghi2022impact}
Razeghi, Y., Logan~IV, R.~L., Gardner, M., and Singh, S.
\newblock Impact of pretraining term frequencies on few-shot reasoning.
\newblock \emph{arXiv preprint arXiv:2202.07206}, 2022.

\bibitem[Recht(2011)]{recht2011simpler}
Recht, B.
\newblock A simpler approach to matrix completion.
\newblock \emph{Journal of Machine Learning Research}, 12\penalty0 (12), 2011.

\bibitem[Reed \& De~Freitas(2015)Reed and De~Freitas]{reed2015neural}
Reed, S. and De~Freitas, N.
\newblock Neural programmer-interpreters.
\newblock \emph{arXiv preprint arXiv:1511.06279}, 2015.

\bibitem[Roy \& Roth(2016)Roy and Roth]{roy2016solving}
Roy, S. and Roth, D.
\newblock Solving general arithmetic word problems.
\newblock \emph{arXiv preprint arXiv:1608.01413}, 2016.

\bibitem[Shaw et~al.(2018)Shaw, Uszkoreit, and Vaswani]{shaw2018self}
Shaw, P., Uszkoreit, J., and Vaswani, A.
\newblock Self-attention with relative position representations.
\newblock \emph{arXiv preprint arXiv:1803.02155}, 2018.

\bibitem[Shi et~al.(2022)Shi, Suzgun, Freitag, Wang, Srivats, Vosoughi, Chung,
  Tay, Ruder, Zhou, et~al.]{shi2022language}
Shi, F., Suzgun, M., Freitag, M., Wang, X., Srivats, S., Vosoughi, S., Chung,
  H.~W., Tay, Y., Ruder, S., Zhou, D., et~al.
\newblock Language models are multilingual chain-of-thought reasoners.
\newblock \emph{arXiv preprint arXiv:2210.03057}, 2022.

\bibitem[Srivastava et~al.(2022)Srivastava, Rastogi, Rao, Shoeb, Abid, Fisch,
  Brown, Santoro, Gupta, Garriga-Alonso, et~al.]{srivastava2022beyond}
Srivastava, A., Rastogi, A., Rao, A., Shoeb, A. A.~M., Abid, A., Fisch, A.,
  Brown, A.~R., Santoro, A., Gupta, A., Garriga-Alonso, A., et~al.
\newblock Beyond the imitation game: Quantifying and extrapolating the
  capabilities of language models.
\newblock \emph{arXiv preprint arXiv:2206.04615}, 2022.

\bibitem[Sun et~al.(2022)Sun, Dong, Patra, Ma, Huang, Benhaim, Chaudhary, Song,
  and Wei]{sun2022length}
Sun, Y., Dong, L., Patra, B., Ma, S., Huang, S., Benhaim, A., Chaudhary, V.,
  Song, X., and Wei, F.
\newblock A length-extrapolatable transformer.
\newblock \emph{arXiv preprint arXiv:2212.10554}, 2022.

\bibitem[Sutskever et~al.(2014)Sutskever, Vinyals, and
  Le]{sutskever2014sequence}
Sutskever, I., Vinyals, O., and Le, Q.~V.
\newblock Sequence to sequence learning with neural networks.
\newblock \emph{Advances in neural information processing systems}, 27, 2014.

\bibitem[Talmor et~al.(2020)Talmor, Tafjord, Clark, Goldberg, and
  Berant]{talmor2020leap}
Talmor, A., Tafjord, O., Clark, P., Goldberg, Y., and Berant, J.
\newblock Leap-of-thought: Teaching pre-trained models to systematically reason
  over implicit knowledge.
\newblock \emph{Advances in Neural Information Processing Systems},
  33:\penalty0 20227--20237, 2020.

\bibitem[Tay et~al.(2022)Tay, Wei, Chung, Tran, So, Shakeri, Garcia, Zheng,
  Rao, Chowdhery, et~al.]{tay2022transcending}
Tay, Y., Wei, J., Chung, H.~W., Tran, V.~Q., So, D.~R., Shakeri, S., Garcia,
  X., Zheng, H.~S., Rao, J., Chowdhery, A., et~al.
\newblock Transcending scaling laws with 0.1\% extra compute.
\newblock \emph{arXiv preprint arXiv:2210.11399}, 2022.

\bibitem[Thoppilan et~al.(2022)Thoppilan, De~Freitas, Hall, Shazeer,
  Kulshreshtha, Cheng, Jin, Bos, Baker, Du, et~al.]{thoppilan2022lamda}
Thoppilan, R., De~Freitas, D., Hall, J., Shazeer, N., Kulshreshtha, A., Cheng,
  H.-T., Jin, A., Bos, T., Baker, L., Du, Y., et~al.
\newblock Lamda: Language models for dialog applications.
\newblock \emph{arXiv preprint arXiv:2201.08239}, 2022.

\bibitem[Touvron et~al.(2023)Touvron, Lavril, Izacard, Martinet, Lachaux,
  Lacroix, Rozière, Goyal, Hambro, Azhar, Rodriguez, Joulin, Grave, and
  Lample]{touvron2023llama}
Touvron, H., Lavril, T., Izacard, G., Martinet, X., Lachaux, M.-A., Lacroix,
  T., Rozière, B., Goyal, N., Hambro, E., Azhar, F., Rodriguez, A., Joulin,
  A., Grave, E., and Lample, G.
\newblock Llama: Open and efficient foundation language models, 2023.

\bibitem[Uesato et~al.(2022)Uesato, Kushman, Kumar, Song, Siegel, Wang,
  Creswell, Irving, and Higgins]{uesato2022solving}
Uesato, J., Kushman, N., Kumar, R., Song, F., Siegel, N., Wang, L., Creswell,
  A., Irving, G., and Higgins, I.
\newblock Solving math word problems with process-and outcome-based feedback.
\newblock \emph{arXiv preprint arXiv:2211.14275}, 2022.

\bibitem[Vaswani et~al.(2017)Vaswani, Shazeer, Parmar, Uszkoreit, Jones, Gomez,
  Kaiser, and Polosukhin]{vaswani2017attention}
Vaswani, A., Shazeer, N., Parmar, N., Uszkoreit, J., Jones, L., Gomez, A.~N.,
  Kaiser, {\L}., and Polosukhin, I.
\newblock Attention is all you need.
\newblock \emph{Advances in neural information processing systems}, 30, 2017.

\bibitem[Wallace et~al.(2019)Wallace, Wang, Li, Singh, and
  Gardner]{wallace2019nlp}
Wallace, E., Wang, Y., Li, S., Singh, S., and Gardner, M.
\newblock Do nlp models know numbers? probing numeracy in embeddings.
\newblock \emph{arXiv preprint arXiv:1909.07940}, 2019.

\bibitem[Wang et~al.(2021)Wang, Zheng, Niu, and Zhang]{wang2021exploring}
Wang, C., Zheng, B., Niu, Y., and Zhang, Y.
\newblock Exploring generalization ability of pretrained language models on
  arithmetic and logical reasoning.
\newblock In \emph{Natural Language Processing and Chinese Computing: 10th CCF
  International Conference, NLPCC 2021, Qingdao, China, October 13--17, 2021,
  Proceedings, Part I 10}, pp.\  758--769. Springer, 2021.

\bibitem[Wang et~al.(2022)Wang, Wei, Schuurmans, Le, Chi, and
  Zhou]{wang2022self}
Wang, X., Wei, J., Schuurmans, D., Le, Q., Chi, E., and Zhou, D.
\newblock Self-consistency improves chain of thought reasoning in language
  models.
\newblock \emph{arXiv preprint arXiv:2203.11171}, 2022.

\bibitem[Webb et~al.(2022)Webb, Holyoak, and Lu]{webb2022emergent}
Webb, T., Holyoak, K.~J., and Lu, H.
\newblock Emergent analogical reasoning in large language models.
\newblock \emph{arXiv preprint arXiv:2212.09196}, 2022.

\bibitem[Wei et~al.(2022{\natexlab{a}})Wei, Chen, and Ma]{wei2022statistically}
Wei, C., Chen, Y., and Ma, T.
\newblock Statistically meaningful approximation: a case study on approximating
  turing machines with transformers.
\newblock \emph{Advances in Neural Information Processing Systems},
  35:\penalty0 12071--12083, 2022{\natexlab{a}}.

\bibitem[Wei et~al.(2022{\natexlab{b}})Wei, Tay, Bommasani, Raffel, Zoph,
  Borgeaud, Yogatama, Bosma, Zhou, Metzler, et~al.]{wei2022emergent}
Wei, J., Tay, Y., Bommasani, R., Raffel, C., Zoph, B., Borgeaud, S., Yogatama,
  D., Bosma, M., Zhou, D., Metzler, D., et~al.
\newblock Emergent abilities of large language models.
\newblock \emph{arXiv preprint arXiv:2206.07682}, 2022{\natexlab{b}}.

\bibitem[Wei et~al.(2022{\natexlab{c}})Wei, Wang, Schuurmans, Bosma, Chi, Le,
  and Zhou]{wei2022chain}
Wei, J., Wang, X., Schuurmans, D., Bosma, M., Chi, E., Le, Q., and Zhou, D.
\newblock Chain of thought prompting elicits reasoning in large language
  models.
\newblock \emph{arXiv preprint arXiv:2201.11903}, 2022{\natexlab{c}}.

\bibitem[Yuan et~al.(2023)Yuan, Yuan, Tan, Wang, and Huang]{yuan2023well}
Yuan, Z., Yuan, H., Tan, C., Wang, W., and Huang, S.
\newblock How well do large language models perform in arithmetic tasks?
\newblock \emph{arXiv preprint arXiv:2304.02015}, 2023.

\bibitem[Yun et~al.(2019)Yun, Bhojanapalli, Rawat, Reddi, and
  Kumar]{yun2019transformers}
Yun, C., Bhojanapalli, S., Rawat, A.~S., Reddi, S.~J., and Kumar, S.
\newblock Are transformers universal approximators of sequence-to-sequence
  functions?
\newblock \emph{arXiv preprint arXiv:1912.10077}, 2019.

\bibitem[Zaremba \& Sutskever(2014)Zaremba and
  Sutskever]{zaremba2014learningtoexec}
Zaremba, W. and Sutskever, I.
\newblock Learning to execute.
\newblock \emph{arXiv preprint arXiv:1410.4615}, 2014.

\bibitem[Zaremba et~al.(2014)Zaremba, Kurach, and Fergus]{zaremba2014learning}
Zaremba, W., Kurach, K., and Fergus, R.
\newblock Learning to discover efficient mathematical identities.
\newblock \emph{Advances in Neural Information Processing Systems}, 27, 2014.

\bibitem[Zelikman et~al.(2022)Zelikman, Mu, Goodman, and Wu]{zelikman2022star}
Zelikman, E., Mu, J., Goodman, N.~D., and Wu, Y.~T.
\newblock Star: Self-taught reasoner bootstrapping reasoning with reasoning.
\newblock 2022.

\bibitem[Zhou et~al.(2022{\natexlab{a}})Zhou, Sch{\"a}rli, Hou, Wei, Scales,
  Wang, Schuurmans, Bousquet, Le, and Chi]{zhou2022least}
Zhou, D., Sch{\"a}rli, N., Hou, L., Wei, J., Scales, N., Wang, X., Schuurmans,
  D., Bousquet, O., Le, Q., and Chi, E.
\newblock Least-to-most prompting enables complex reasoning in large language
  models.
\newblock \emph{arXiv preprint arXiv:2205.10625}, 2022{\natexlab{a}}.

\bibitem[Zhou et~al.(2022{\natexlab{b}})Zhou, Nova, Larochelle, Courville,
  Neyshabur, and Sedghi]{zhou2022teaching}
Zhou, H., Nova, A., Larochelle, H., Courville, A., Neyshabur, B., and Sedghi,
  H.
\newblock Teaching algorithmic reasoning via in-context learning.
\newblock \emph{arXiv preprint arXiv:2211.09066}, 2022{\natexlab{b}}.

\end{thebibliography}
\bibliographystyle{icml2023}

\appendix
\addcontentsline{toc}{section}{Appendix} %
\part{Appendix}
\parttoc %

\newpage
\section{Proofs}
Here, we present the proofs of Lemma~\ref{lemma:left-to-right} and \ref{lemma:right-to-left}.

\lemmaOne*

\begin{proof}
We begin by assuming for contradiction that there does exist an algorithm $\texttt{Algo}$ that does not have access to all digits of $A$ and $B$ and still outputs $C=A+B$ correctly for all $n-$ digit numbers $A, B$. Without loss of generality, say $\texttt{Algo}$ does not have access to the $k-$th digit of $A$ where $k \in [n]$ represents the position counting from the least significant digit. Then consider the example $B=(10^n - 1)$ and $(A=000\dots A_k 00 \dots 0)$ where $B$ is just the integer with $n$ $9$'s and $A$ is just $0$'s with $A_k$ in the $k$th position. If $A_k=0$, then $C_{n+1}=0$, but if $A_k=1$, then $C_{n+1}=1$. Therefore, without access to the $k-$th digit of $A$, there exist examples where the algorithm will surely make a mistake. Therefore, by contradiction such an \texttt{Algo} cannot exist.
\end{proof}

\lemmaTwo*
\begin{proof}
First note that the trivial algorithm for addition is exactly the proof of this Lemma. However, we present a more formal argument below for completeness. Let $A$, $B$, and $C$ be $n-$digit numbers such that $C = A + B$. Define the digits of $A, B$, and $C$ as $A_i$, $B_i$, and $C_i$, respectively, for $i \in [n]$ counting from the least significant digit once again. Then, the addition can be performed using the following steps. First, $C_i = (A_i + B_i + carry_i)\mod{10}$ where $carry_i$ is the carry-on from the addition of digits at position $i-1$. If there is no carry from the previous position, then $carry_i = 0$. The carry for the next position is then calculated as $carry_{i+1} = \left\lfloor \frac{A_i + B_i + carry_i}{10} \right\rfloor$.

Putting this together, the algorithm for addition can be described as follows:

\textbf{Step 1:} Set $carry_1 = 0$. \textbf{Repeat} for $i=\{1, \dots, n\}$: \{\textbf{Step 2:} Compute $C_i = (A_i + B_i + carry_i) \mod 10$ and $carry_{i+1} = \left\lfloor \frac{A_i + B_i + carry_i}{10} \right\rfloor$, \textbf{Step 3:} Output $C_i$\}.

It is easy to see that this algorithm computes the digits of the sum $C$ correctly and requires only the individual digits at position $i$ and the carry from the previous position. Therefore, this algorithm satisfies the conditions of the lemma.
\end{proof}

\newpage
\section{Additional Experiments}

\subsection{Zero-Padding and Symbol Wrapping}\label{sec:appdx_zeropad_delimiter}

As discussed briefly in Section~\ref{sec:exp}, we found a significant benefit to using padding for multi-digit addition. Throughout our experiments, we use the plain format without any such padding (denoted as ``vanilla'' below) as the default baseline representing the conventional data format used in training.
Nonetheless, we explore modifications to this plain format to enhance performance; zero-padding, and wrapping with a single symbol. Zero-padding ensures a fixed length for operands and the output. In the case of $3$-digit addition, this means $3$-digit operands and a $4$-digit output.  For example, `$\mathsf{112+29=141}$' becomes `$\mathsf{112+029=0141}$'. As shown in Table~\ref{tab:zeropad_dollar}. this modification significantly improves model performance.
Next, we wrap each sample using the `\$' symbol as in '$\mathsf{\$112+29=141 \$}$'. We found this performs on par with zero-padding.

As a result, we adopt the `\$' symbol for efficient data delimiter, extending its use to the reverse format.  Figure~\ref{fig:dollar_pad} shows `\$'-wrapping also enhances the performance of the reverse format.  Despite the plain format being improved with the `\$' delimiter, it remains short of the reverse format's accuracy and sample efficiency.
We continue to maintain the original plain format as a baseline since it not only exemplifies conventional data but further emphasizes the need for improved data formatting to ensure efficient training. As such, for the reverse format, we have incorporated the `\$' delimiter in our formatting modifications. 

\begin{table}[th!]
\center
  \caption{
Test accuracy of NanoGPT model on 3-digit addition trained on $10,000$ samples of plain format data, comparing (i) vanilla format without modifications, (ii) Zero-padding format, and (iii) `\$'-wrapped format. The results show significant performance enhancement through zero-padding for fixed length and similar improvements when deploying a single-symbol wrapping.
}
  \label{tab:zeropad_dollar}
  \vspace{1mm}
\centering
\begin{tabular}{cccc}
\toprule
 Vanilla & Zero-pad  & `\$'-Wrapped \\ \midrule
 88.17\%                       &  97.74\%  &   97.76\%      \\ \bottomrule
\end{tabular}
\end{table}

\begin{figure}[ht]
\centering
\includegraphics[width=0.45\textwidth]{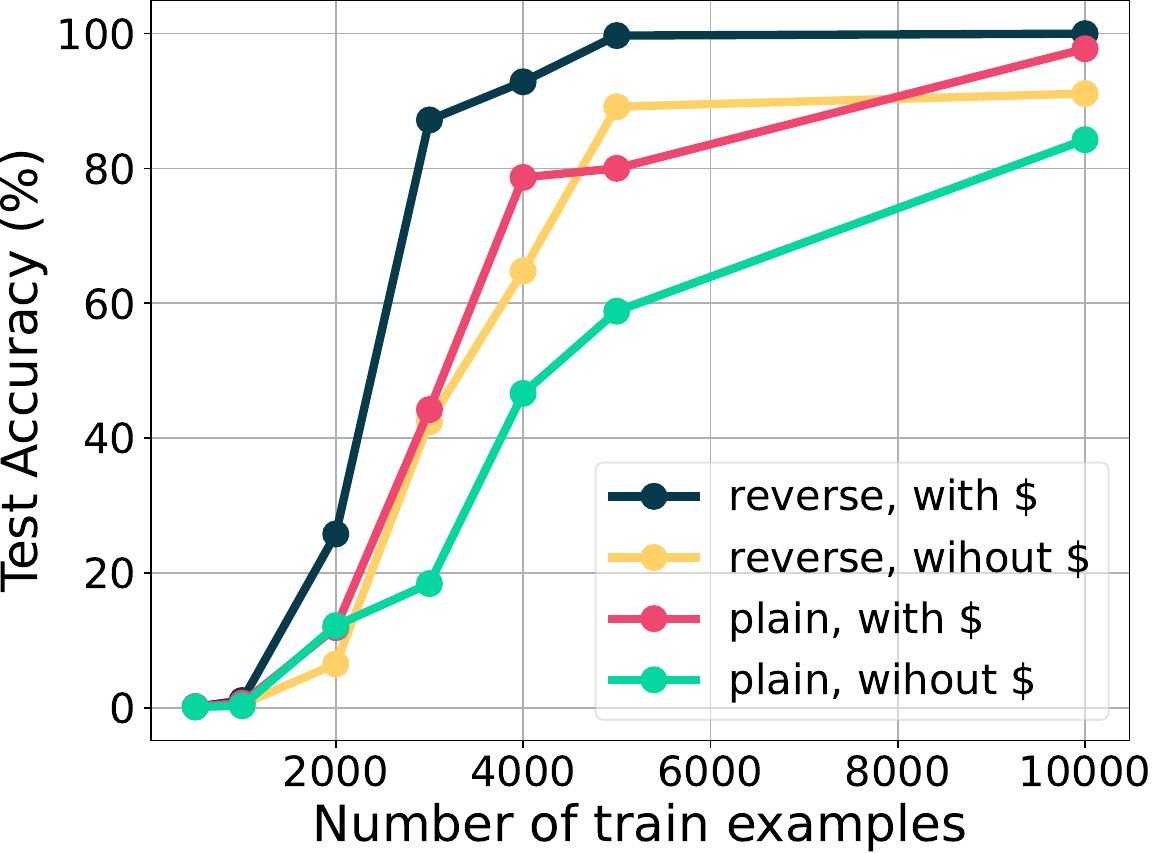}
    \caption{Performance of NanoGPT model on $3$-digit addition using plain and reverse format, both with and without `\$' delimiter.  The addition of the `\$' symbol noticeably enhances performance in both formats. Nevertheless, the plain format underperforms compared to the reverse format, particularly in terms of sample efficiency. While we maintain the original plain format as a baseline -- emphasizing the necessity for improved data formatting for efficient emergence -- we incorporate the `\$' wrapping in our modified reverse format.  
}
\label{fig:dollar_pad}
\end{figure}

\newpage
\subsection{Low-Rank Matrix Completion}\label{sec:appdx_lrmc}
In our Low-Rank Matrix Completion experiment for the addition matrix (which is of rank-2), we employ an iterative algorithm proposed by \citet{kiraly2015algebraic}. This algorithm systematically searches for a $2 \times 2$ submatrix in which three entries are known and one entry is unknown. It then fills the unknown entry to ensure that the determinant of the $2 \times 2$ submatrix becomes zero, where the solution is known to be optimal. We present the full pseudo-code in Algorithm~\ref{alg:iterative_2x2_mc}.

To assess the performance of the algorithm, we generate $n \times n$ addition matrices for various values of $n$ (e.g., 20, 50, 100, 500). We vary the number of revealed entries, randomly sampling a sparse matrix where only a specified number of entries between $n$ and $n \times n$ are known, while the remaining entries are set to zero. We repeat this process 100 times for each number of revealed entries, tracking the algorithm's success or failure in finding the solution. We calculate the average success rate across the trials and present the success probabilities in Figure~\ref{fig:matrix_completion}, where we observe a sharp phase transition when $\gO(n)$ entries are observed, as expected.

\SetKwComment{Comment}{/* }{ */}
\SetKw{Continue}{continue}

\begin{algorithm}
\footnotesize
\caption{Iterative $2\times2$ Matrix Completion Algorithm}\label{alg:iterative_2x2_mc}
\DontPrintSemicolon
\KwData{Data Matrix $\mM \in \mathbb{R}^{n\times n}$ with partially revealed entries. Assumed to be of Rank $2$.}
\KwResult{$\widehat{\mM} \in \mathbb{R}^{n\times n}$, \texttt{Success/Fail}.}
$n_1 \gets 1$ represents number of resolved submatrices.\;
$n_2 \gets 0$ represents number of unresolved submatrices.\;
$\widehat{\mM} \gets \mM$\;
\While{$n_1 \ge 1$}{ 
\Comment*[l]{As long as we resolved at least one submatrix in the previous iteration}
$n_1 \gets 0$\;
$n_2 \gets 0$\;
\For{$i=1$ \KwTo $n$}{
    \For{$j=1$ \KwTo $n$}{
        \Comment*[l]{do something}\;
        \If{$\widehat{\mM}_{i,j}$ is not revealed \textbf{and} all its neighbors are revealed}{
            $\widehat{\mM}_{i, j} = \frac{\widehat{\mM}_{i+1, j} \times \widehat{\mM}_{i, j+1}}{\widehat{\mM}_{i+1, j+1}}$\;
            $n_1 \gets n_1 + 1$\;
            }
        \If{$\widehat{\mM}_{i+1,j}$ is not revealed \textbf{and} all its neighbors are revealed}{
         $\widehat{\mM}_{i+1, j} = \frac{\widehat{\mM}_{i, j} \times \widehat{\mM}_{i+1, j+1}}{\widehat{\mM}_{i+1, j}}$\;
         $n_1 \gets n_1 + 1$\;
        }
        \If{$\widehat{\mM}_{i+1,j+1}$ is not revealed \textbf{and} all its neighbors are revealed}{
         $\widehat{\mM}_{i+1, j+1} = \frac{\widehat{\mM}_{i+1, j} \times \widehat{\mM}_{i, j+1}}{\widehat{\mM}_{i, j}}$\;
         $n_1 \gets n_1 + 1$\;
        }
        \If{$\widehat{\mM}_{i,j}, \widehat{\mM}_{i+1,j}, \widehat{\mM}_{i,j+1}, \widehat{\mM}_{i+1,j+1}$ are all revealed}{
         \Continue
        }
        \Else{
        $n_2 \gets n_2 + 1$\;
        }
        }
    }
}
\If{$n_2 > 0$}{
    \Return $\widehat{\mM}$, \texttt{Fail} \;
}
\Else{
    \Return $\widehat{\mM}$, \texttt{Success} \;
}
\end{algorithm}

\newpage
\subsection{Prompting with Text}\label{sec:appendix_text_prmopt}
To extend on the few-shot prompting experiments from Section~\ref{sec:joint_arithmetic}, we also evaluate the effect of prompting the model with pure-text prompts. If few-shot prompting with addition samples improves accuracy through in-context learning, we expect few-shot prompting with text to hurt accuracy since the text exemplars are \emph{out-of-context}.
We use five different types of text exemplars: (i) \textbf{Prompt1:} a short text prompt that is not present in the Shakespeare dataset, (ii) \textbf{Prompt2:} a short text prompt extracted from within Shakespeare dataset, (iii) \textbf{Prompt3:} a longer form text prompt extracted from within the Shakespeare dataset, (iv) \textbf{Prompt4:} a prompt that includes numbers, and (v) \textbf{Prompt5:} a long text prompt that is not present in the Shakespeare dataset. More details on the text prompts can be found in Figure~\ref{fig:text_exemplars}.

\begin{AIbox}[breakable]{\bf{\large Text prompts for few-shot experiments}}
\vspace{5mm}
{\tt \footnotesize Examples of the different text prompts used in the few-shot experiment. Each exemplar is separated by `-{}-{}-'. %
\\}

\begin{minipage}[t]{0.45\linewidth}
\centering
\textbf{Prompt 1. Short, $\notin$ Shakespeare}
\begin{lstlisting}[language=markdown]
et tu brute
---
hello, world
---
how are you doing?
---
agi is coming
---
boom! stability
\end{lstlisting}

\textbf{Prompt 2. Short, $\in$ Shakespeare}
\begin{lstlisting}[language=markdown]
JULIET:
Romeo!
---
All:
Resolved. resolved.
---
VOLUMNIA:
Why, I pray you?
---
CORIOLANUS:
Nay! prithee, woman,--
---
MENENIUS:
I mean, thy general.
\end{lstlisting}
\end{minipage}
\begin{minipage}[t]{0.55\linewidth}
\centering
\textbf{Prompt 3. Long, $\in$ Shakespeare}
\begin{lstlisting}[language=markdown]
JULIET:
Romeo!
ROMEO:
My dear?
---
MENENIUS:
This is good news:
I will go meet the ladies. This Volumnia
Is worth of consuls, senators, patricians,
---
LADY ANNE:
Foul devil, for God's sake, hence, and trouble us not;
For thou hast made the happy earth thy hell,
Fill'd it with cursing cries and deep exclaims.
---
BUCKINGHAM:
I fear he will.
How now, Catesby, what says your lord?
---
CATESBY:
Bad news, my lord: Ely is fled to Richmond;
And Buckingham, back'd with the hardy Welshmen,
Is in the field, and still his power increaseth.
\end{lstlisting}

\end{minipage}

\centering
\textbf{Prompt 4. Has number, $\notin$ Shakespeare}
\begin{lstlisting}[language=markdown]
I go 16-12
That's the code to my heart, ah
I go 1-6-1-2
Star
---
Like a river flows 17-23
Surely to the sea 15-22
Darling, so it goes 46-92
Some things are meant to be
---
I got my first real 6-string
Bought it at the five and dime
Played it 'til my fingers bled
Was the summer of '69
---
I think someday I might just 5-3-2-1 get a real job
I spent half of my life 1-2-3 in a bus or on a flight
I'm getting off 17-36-8-2 the road and in a real job
---
Every time that 27-67-29 I look in the mirror
All these lines on my 1-3-92-5 face getting clearer
The past 45-5-3 is gone
\end{lstlisting}

\textbf{Prompt 5. Long, $\notin$ Shakespeare}
\begin{lstlisting}[language=markdown]
Is this the real life? Is this just fantasy? Caught in a landside, no escape from reality.
Open your eyes, look up to the skies and see.
I'm just a poor boy, I need no sympathy. Because I'm easy come, easy go,
Little high, little low,
Any way the wind blows doesn't really matter to me, to me.
---
It's my life
And it's now or never
I ain't gonna live forever
I just want to live while I'm alive
My heart is like an open highway
Like Frankie said, I did it my way
---
Destruction leads to a very rough road but it also breeds creation
And earthquakes are to a girl's guitar, they're just another good vibration
And tidal waves couldn't save the world from Californication
---
I want to stay
But I need to go
I want to be the best for you
But I just don't know what to do
'Cause baby, say I've cried for you
The time we have spent together
Riding through this English whether
---
Lorem ipsum dolor sit amet, consectetur adipiscing elit. Vestibulum mattis in leo vel gravida.
Pellentesque libero elit, scelerisque varius vehicula a, hendrerit et tellus.
Proin convallis neque nisl, nec lobortis est scelerisque tincidunt. 
Nunc venenatis auctor urna.
Class aptent taciti sociosqu ad litora torquent per conubia nostra.
\end{lstlisting}

\end{AIbox}

\noindent\begin{minipage}{\textwidth}
\captionsetup{type=figure}
\captionof{figure}{Text prompt exemplars for few-shot experiments.}\label{fig:text_exemplars}
\end{minipage}

\begin{figure}[ht] 
\centering
\vspace{-4mm}
\subfloat[\scriptsize{NanoGPT, Test accuracy on plain addition}]{\includegraphics[width=0.38\textwidth]{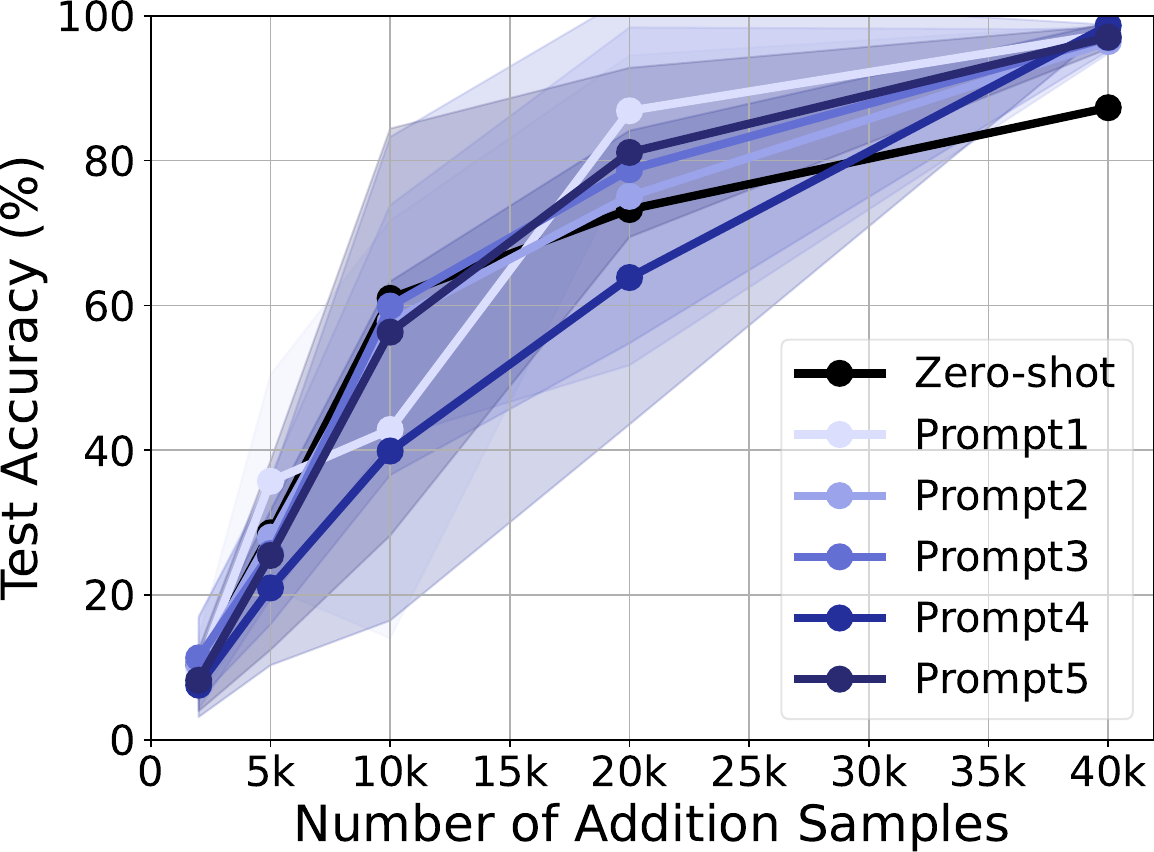}}
\hspace{0.5cm}
\subfloat[\scriptsize{GPT-2, Test accuracy on plain addition}]{\includegraphics[width=0.38\textwidth]{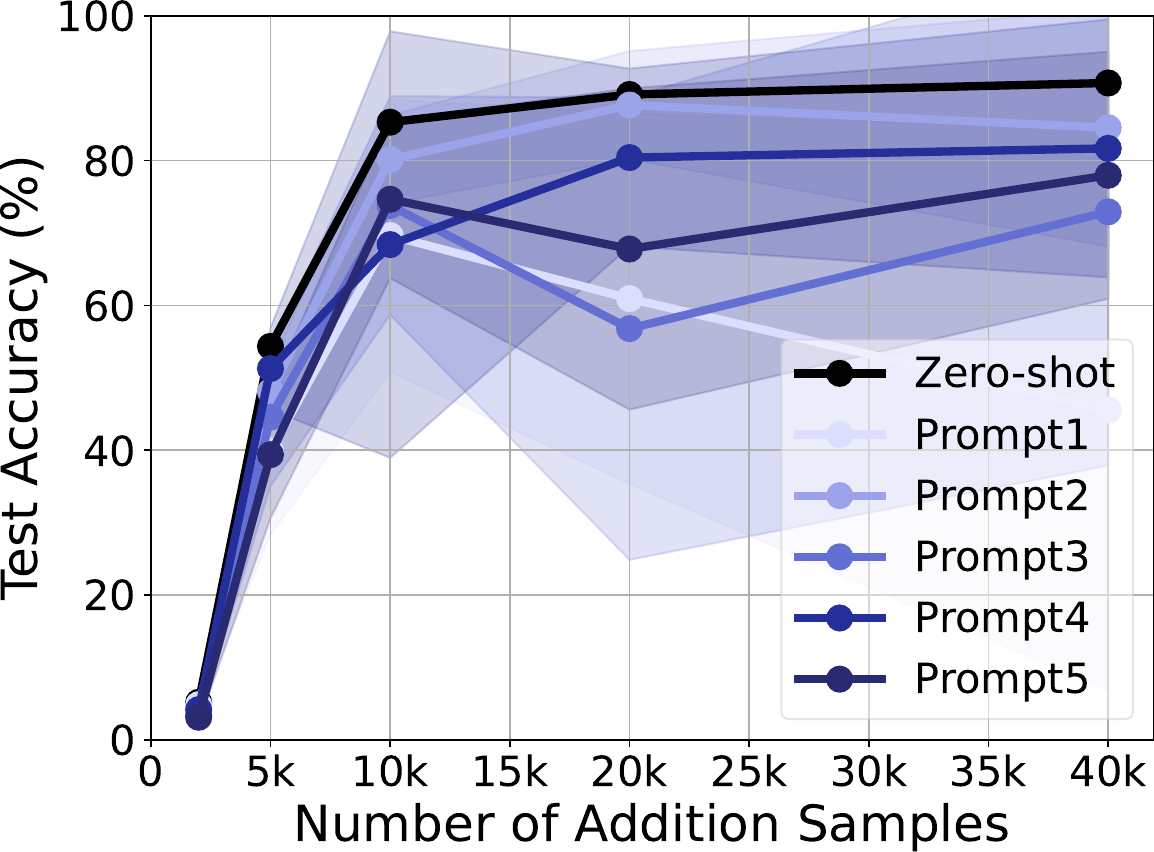}}\\
\vspace{-0.1mm}
\subfloat[\scriptsize{NanoGPT, Test accuracy on detailed scratchpad}]{\includegraphics[width=0.38\textwidth]{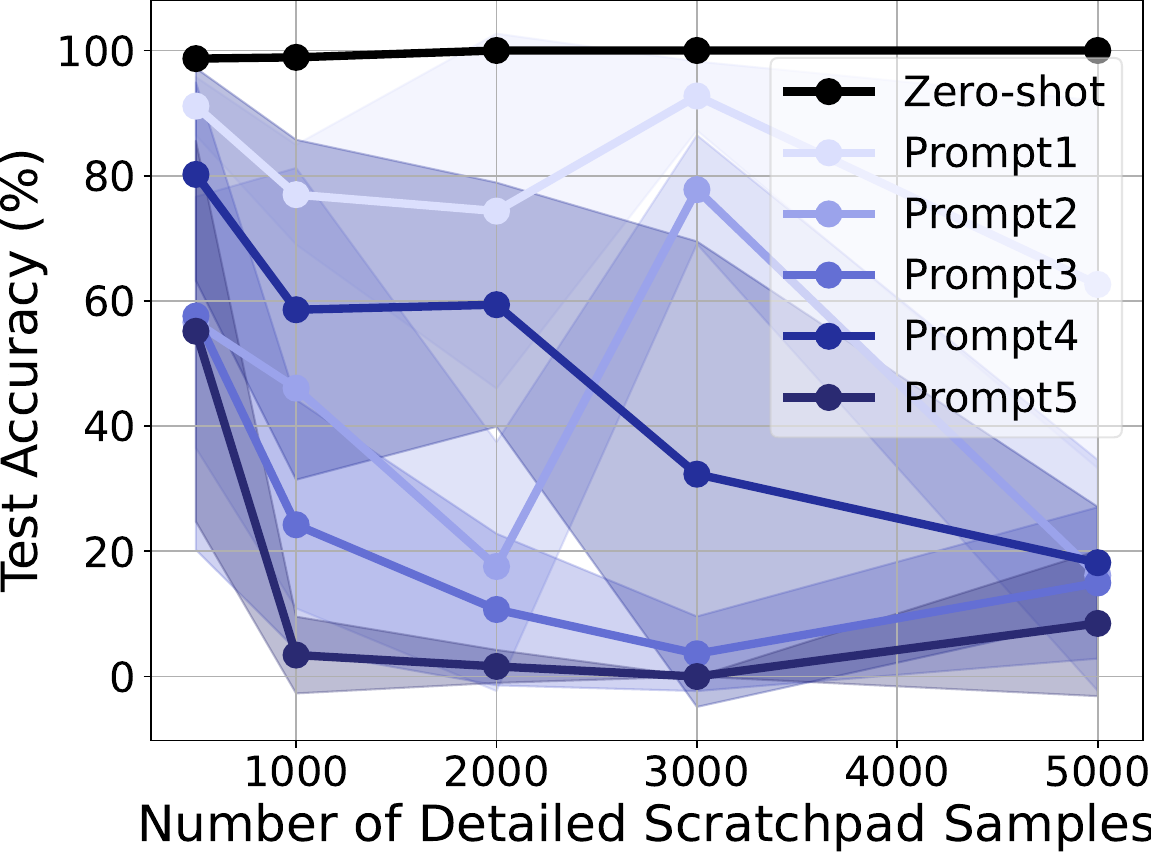}}
\hspace{0.5cm}
\subfloat[\scriptsize{GPT-2, Test accuracy on detailed scratchpad}]{\includegraphics[width=0.38\textwidth]{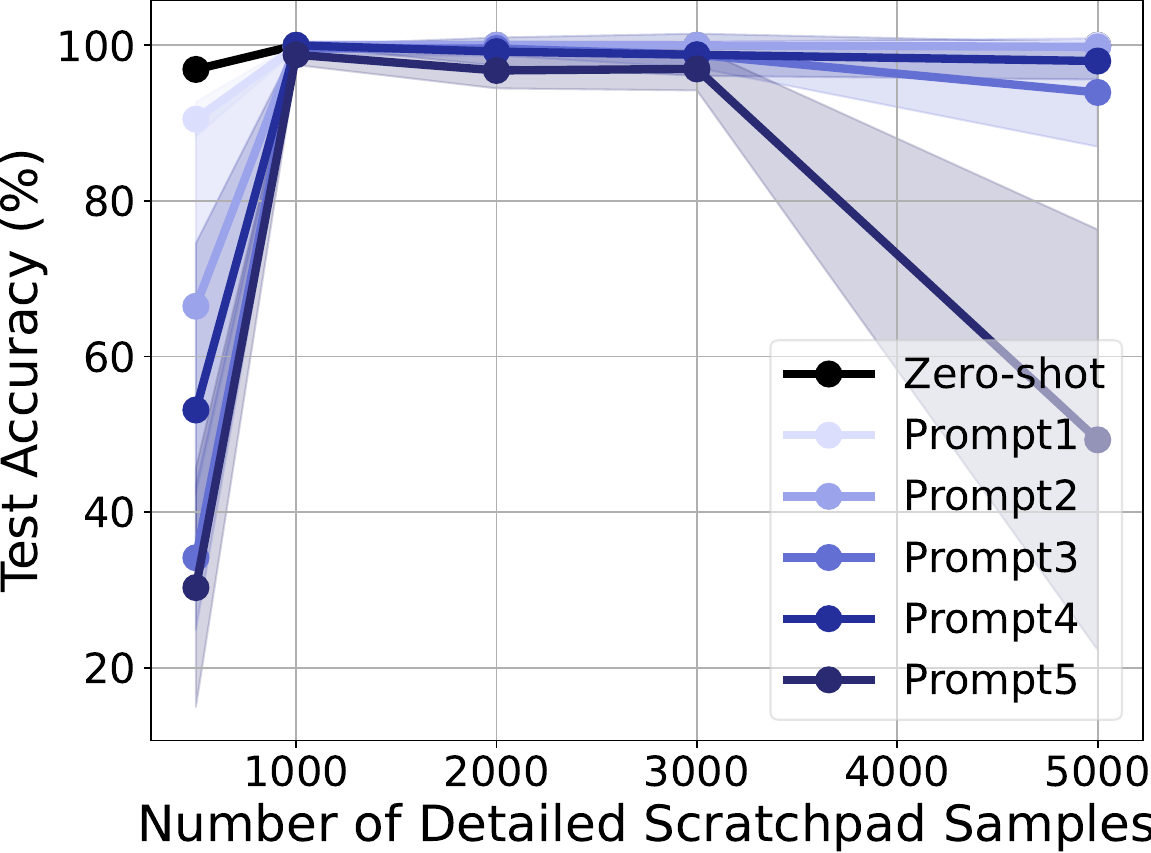}}
    \caption{Experiments on few-shot prompting with different text prompts: (i) Prompt1: short text not in Shakespeare dataset (ii) Prompt2: short text within Shakespeare dataset (iii) Prompt3: long text within Shakespeare dataset (iv) Prompt4: text with numbers (v) Prompt5: long text not in the Shakespeare dataset. Each prompt (Prompt 1-5) consists of five distinct exemplars. The solid lines represent the mean performance across the five exemplars, while the shaded area indicates the standard deviation. We observe that the effectiveness of text prompts varies greatly depending on the exemplars used. } 
\label{fig:mixed_text_prompt_performance}
\vspace{-2mm}
\end{figure}

The results presented in Figure~\ref{fig:mixed_text_prompt_performance} show notable variations in evaluation accuracy for addition, depending on the chosen text prompts. Longer text prompts (Prompt 5) typically result in a more significant decline in performance. With the exception of NanoGPT trained on plain addition, the result in Figure~\ref{fig:mixed_performance} indicates that employing text prompts followed by test addition queries tends to have an adverse impact on the overall model performance, whereas incorporating relevant few-shot exemplars (1/2/3-shot) is beneficial. This aligns well with our intuition on the benefits on in-context learning. %

\begin{figure}[ht] 
\centering
\subfloat[\scriptsize{NanoGPT, Test accuracy on plain addition}]{\includegraphics[width=0.38\textwidth]{fig/few_shot/nanogpt_mixed_add_accuracy2.pdf}}
\hspace{0.5cm}
\subfloat[\scriptsize{GPT-2, Test accuracy on plain addition}]{\includegraphics[width=0.38\textwidth]{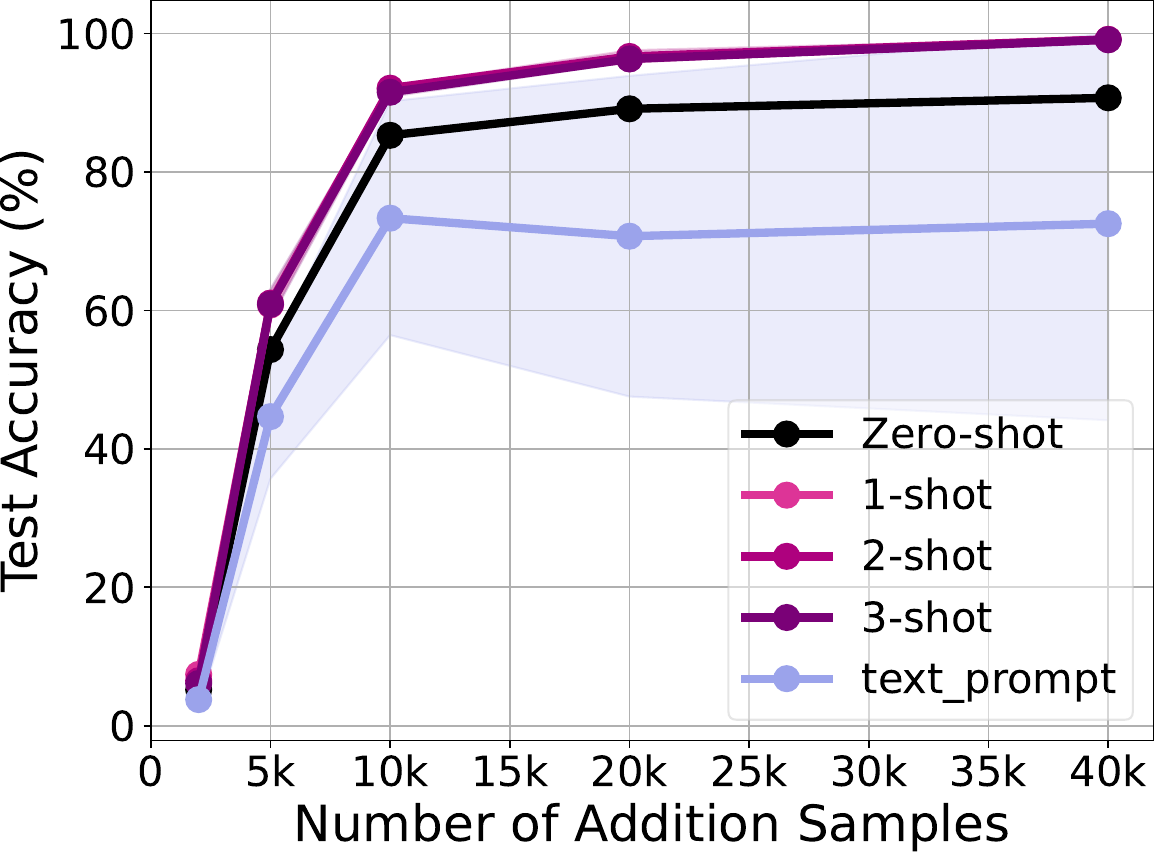}}\\
\vspace{-0.1mm}
\subfloat[\scriptsize{NanoGPT, Test accuracy on detailed scratchpad}]{\includegraphics[width=0.38\textwidth]{fig/few_shot/nanogpt_mixed_ar_accuracy2.pdf}}
\hspace{0.5cm}
\subfloat[\scriptsize{GPT-2, Test accuracy on detailed scratchpad}]{\includegraphics[width=0.38\textwidth]{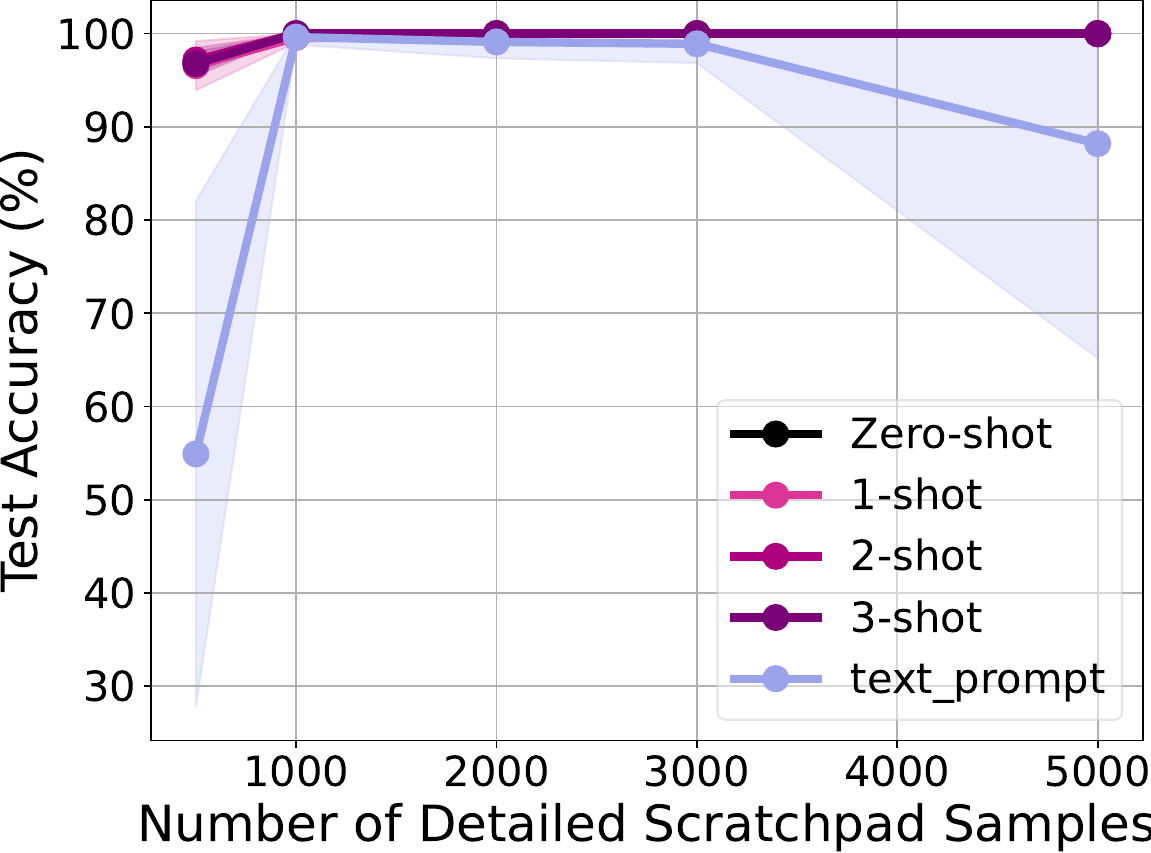}}
    \caption{Performance of NanoGPT and GPT-2 model trained with entire Shakespeare dataset and a varying number of samples of plain addition, and addition with detailed scratchpad dataset. Performance is evaluated on test prompts formatted as plain addition and detailed scratchpad. Few-shot experiments are based on an average of 5 exemplars, while text prompts involve an average of 25 exemplars. The shaded area represents the standard deviation. Our observations indicate that few-shot prompting consistently improves performance, whereas test prompts generally have a negative impact.} 
\label{fig:mixed_performance}
\end{figure}

\newpage
\subsection{Analyzing the results on Sine/Sqrt}\label{sec:sin_sqrt_analysis}
Since sine and sqrt are arguably more complicated functions than the remaining arithmetic tasks, we decided to more carefully analyze their performance. As shown in Figure~\ref{fig:analyzing_sin_sqrt}, $\sin$ shows excellent performance across all data formats around $\sin(x) = 0$. We conjecture that this is because $\sin(x) \approx x$ for $x \approx 0$, which is easy to learn. We also note that accuracy once again improves close to $\pm1$ potentially for similar reasons.

\begin{figure}[ht] 
\centering
\vspace{-6mm}
\subfloat[{Test accuracy on Sine}]{\includegraphics[width=0.40\textwidth]{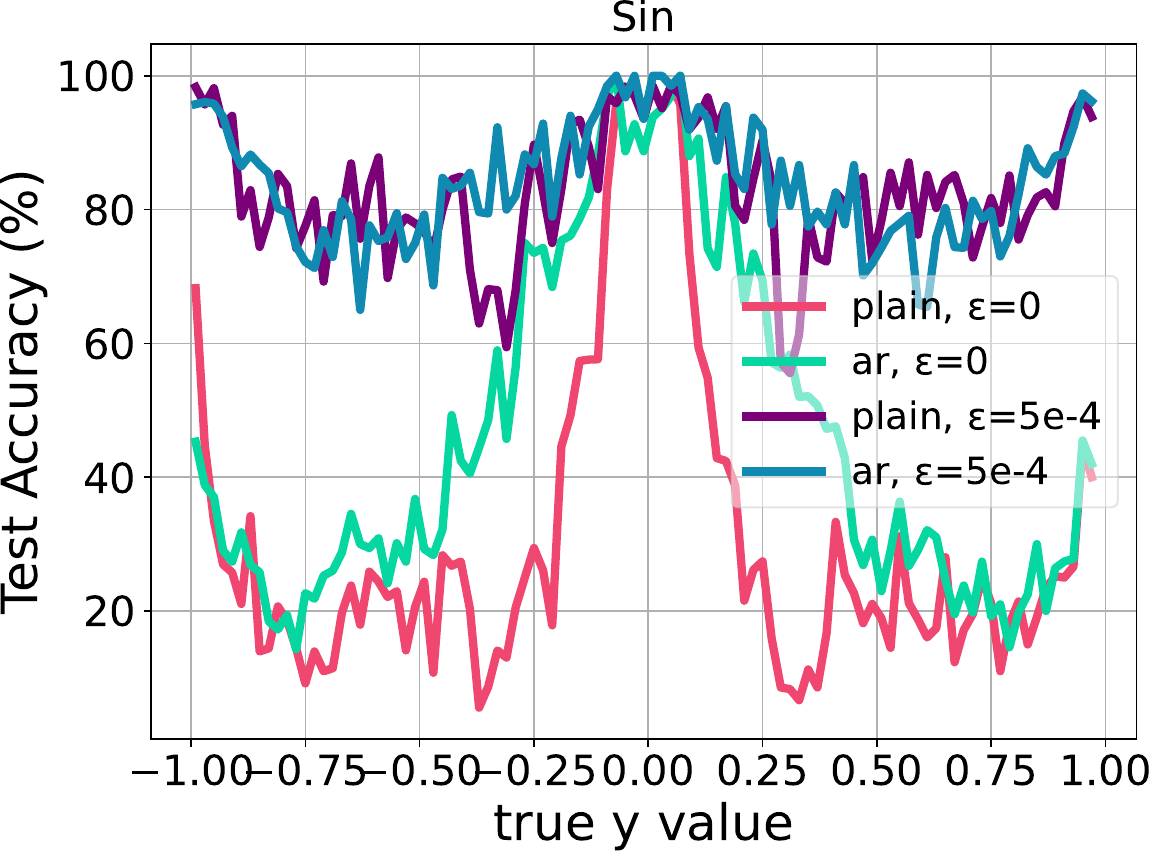}}
\hspace{0.5cm}
\subfloat[{Test accuracy on Square root}]{\includegraphics[width=0.40\textwidth]{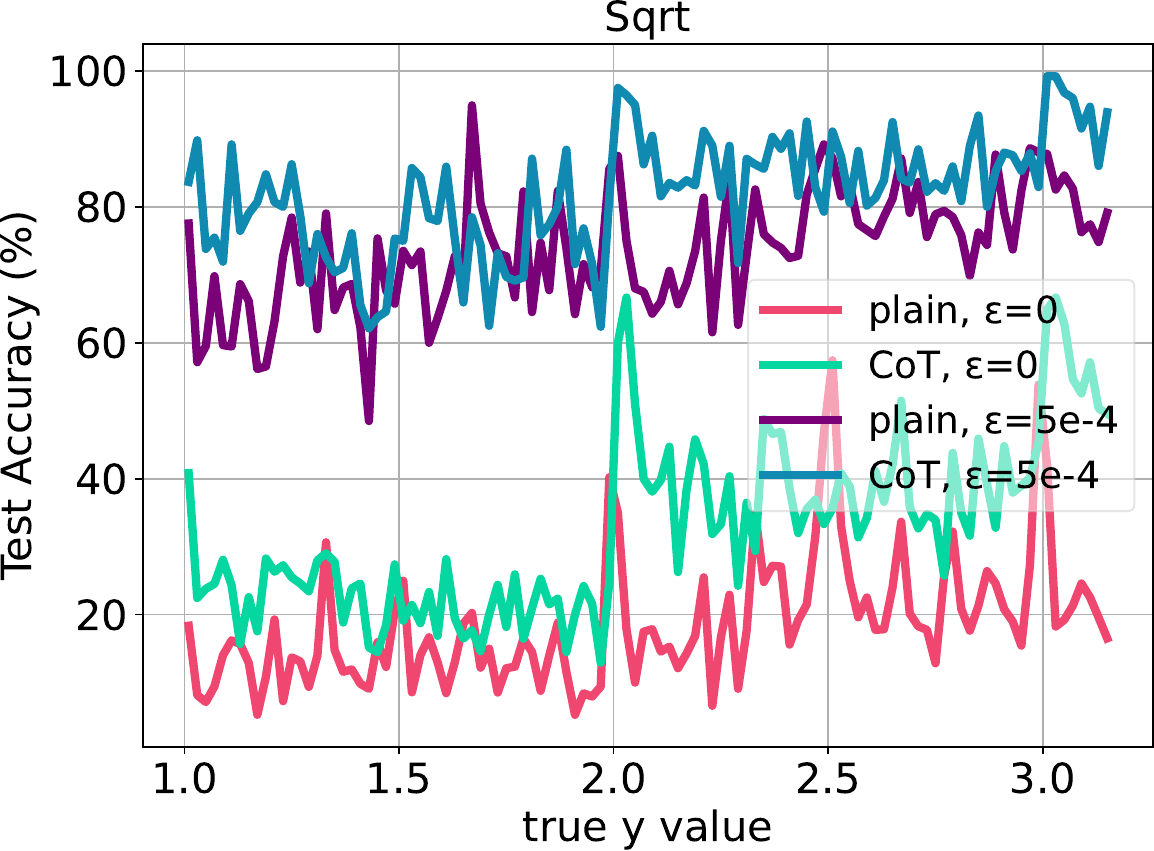}}
\vspace{0.1mm}
    \caption{Error analysis of sine and square root functions, considering varying error tolerance (eps) thresholds to determine correct output. The sine function demonstrates excellent performance across all data formats, particularly around $\sin(x) = 0$, where $\sin(x) \approx x$ for $x \approx 0$. Additionally, we observe improved accuracy near $\pm1$.} 
\label{fig:analyzing_sin_sqrt}
\vspace{-2mm}
\end{figure}

\section{Experimental Setup}\label{sec:exp_setup}
In this section, we summarize the datasets, models and hyperparameters used for experiments. All of our experiments on NanoGPT and GPT-2 models are run using PyTorch 2.1 and CUDA 11.7 on Nvidia 2808 TIs and NVIDIA 3090s. Detailed dependencies are provided on our github repository\footnote{\url{https://github.com/lee-ny/teaching_arithmetic}}.

\subsection{Dataset}\label{appdx:dataset}
In this section, we explain the details of the datasets used for our experiments. For arithmetic tasks, we construct our own datasets as described below while we use the standard shakespeare~\citep{shakespeare} dataset for text.

\paragraph{Arithmetic Tasks} 
As mentioned above, for all arithmetic tasks, we prepare our own datasets. We refer to the training dataset for a binary operator $f(\cdot)$ as $\gD_\text{train}=\{(x^{1}_i, x^{2}_i), y_i\}_{i=1}^{N}$ where $y_i = f(x^1_i, x^2_i)$. Similarly, the test dataset $\gD_\text{test}$ is constructed by randomly sampling pairs of operands that do not appear in $\gD_\text{train}$.
During both training and inference, we then apply different formatting techniques (see Section~\ref{sec:exp}), to construct the final sequence that is input to the model. 
We would like to repeat that both the careful choice of samples in the training dataset as well as their formatting play a crucial role in the final performance of the model.

\paragraph{Text} 
For text data, we use the Shakespeare dataset which was introduced by \citet{shakespeare} originally featured in the blog post ``The Unreasonable Effectiveness of Recurrent Neural Networks''. It consists of 40,000 lines of dialogue carefully curated from  William Shakespeare's plays. The dataset comprises of a total of 1,115,394 characters and 64 unique tokens(when using the character-level tokenizer that we employed in all NanoGPT experiments).

\subsubsection{Data Balancing}
As mentioned in Section~\ref{sec:data_sampling}, we carefully sample our data to ensure that they are ``\emph{balanced}'' with respect to the number of carries and number of digits. As mentioned earlier, sampling the operands uniformly at random would lead to an extremely skewed dataset. To avoid this, we try to \textbf{(i) Balance digits} by sampling lower-digit numbers with higher weights and \textbf{(ii) Balance carry-ons} by sampling such that we have equal number of examples with 0, 1, 2 and 3 carry-on operations. 

Specifically, we create a balanced dataset of $10,000$ samples. This dataset includes all $100$ 1-digit additions and a random sampling of $900$ 2-digit additions (including both $(2+1)$ and $(1+2)$ digit additions) and $9,000$ 3-digit additions. For the 3-digit addition samples, we employ rejection sampling to ensure an equal distribution of carry-ons $(0, 1, 2, \text{or } 3)$. For the test dataset, we uniformly sample $10,000$ addition examples that do not overlap with the train dataset. Results in Figure~\ref{fig:data_sampling} and Table~\ref{tab:data_sampling} demonstrate a clear advantage of the employed data balancing methods. 

For the train dataset, we follow a specific approach based on the number of examples. For sample sizes smaller than $10,000$ (\eg $500, 1,000, 2,000, 3,000, 4,000, 5,000$), we include all 1-digit additions and a proportionate number of 2-digit samples (\eg for a total of $5,000$ samples, we include $900 \times 5,000/10,000 = 450$ two-digit additions). The remaining samples are filled with 3-digit additions from the constructed train dataset of 10,000 samples. For sample sizes larger than 10,000 (\eg 20,000, 40,000), we include all examples from the 10,000-sample train dataset and then add additional samples as needed. Similar to before, we perform rejection sampling to maintain an equal number of carry operations. Table~\ref{table:num_examples}. provides detailed information on the number of samples with 1-digit, 2-digit, and 3-digit additions, as well as the number of carry-ons.

For the other arithmetic operations (subtraction, multiplication, sine, and square root), we construct the train dataset using the following approach:
(i) For subtraction, we use the same pairs of operands that were used for addition.
(ii) For multiplication, we include all 100 cases of a 1-digit number multiplied by a 1-digit number. Additionally, we randomly sample multiplications involving operands of up to 2 digits.
(iii) For sine, we sample a random number in $[\pi/2, \pi/2]$ and truncate it to $4$ decimal places.
(iv) For square root, we sample a random number between $[1, 10]$ and truncate it to $4$ decimal places.
For the test dataset, we sample $10,000$ data points ($7,000$ for multiplication) that do not overlap with the train dataset.

\begin{table}[ht!]
\center
\caption{Performance of addition on various data sampling methods used: (i) Random - uniform sampling of operands; (ii) Balanced digits - sampling more 1 and 2-digit operations ; (iii) Balanced carry - balancing the dataset to contain an equal number of carry-on operations. Experiments on addition with zero-padding each operand and output to have $3$ and $4$ digits, respectively. We observe that balancing the dataset can significantly improve the performance or arithmetic operations.}
\vspace{2mm}
\label{tab:data_sampling}
\centering
\small
\setlength{\tabcolsep}{4pt}
\begin{tabular}{lrrrrrrr}
\toprule
Data Sampling   & \multicolumn{1}{l}{Overall} & \multicolumn{1}{l}{1-digit} & \multicolumn{1}{l}{2-digit} & \multicolumn{1}{l}{Carry-0} & \multicolumn{1}{l}{Carry-1} & \multicolumn{1}{l}{Carry-2} & \multicolumn{1}{l}{Carry-3} \\ \midrule
Random          & 97.74                       & 98.00                       & 96.20                       & 95.88                       & 98.61                       & 98.74                       & 94.98                       \\
Balanced Digits & 98.13                       & \textbf{100.00}             & \textbf{99.70}              & \textbf{98.87}              & \textbf{98.64}              & 98.13                       & 95.93                       \\
Balanced Carry-Ons  & \textbf{98.29}              & \textbf{100.00}             & \textbf{99.70}              & 98.38                       & 97.56                       & \textbf{99.02}              & \textbf{98.22}              \\ \hline
\end{tabular}
\end{table}

\begin{table}[ht!]
\caption{Number of examples of digit $1/2/3$ and $0/1/2/3$ carry-ons for NanoGPT experiments on addition for different number of samples varying from $500$ to $40,000$. %
}
\footnotesize
\vspace{1mm}
\centering
\small
\setlength{\tabcolsep}{4pt}
\begin{tabular}{c|ccc|cccc}
\toprule
Total number & 1-digit & 2-digit & 3-digit & 0-carry-ons & 1-carry-ons & 2-carry-ons & 3-carry-ons \\
\midrule 
500 & 100 &  45 &  355 &  163 & 141 & 97 & 99 \\
1000 & 100 & 90 & 810 & 283 & 268 & 236 & 213 \\
2000 & 100 & 180 & 1720 & 535 & 502 & 481 & 482 \\
3000 & 100 & 270 & 2630 & 781 & 782 & 748 & 689 \\
4000 & 100 & 360 & 3540 & 1020 & 1016 & 958 & 1006 \\
5000 & 100 & 450 & 4450 & 1279 & 1271 & 1229 & 1221 \\ 
\textbf{10000} &  \textbf{100} &  \textbf{900} &  \textbf{9000} &  \textbf{2500} & \textbf{2500} & \textbf{2500} & \textbf{2500} \\
20000 & 121 & 1937 & 17942 & 5000 & 5000 & 5000 & 5000\\
40000 & 132 & 3939 & 35929 & 10000 & 10000 & 10000 & 10000 \\
\bottomrule
\end{tabular}
\label{table:num_examples}
\end{table}

\subsubsection{Data Formatting}\label{sec:appendix-data-formatting}
For each of the four formatting techniques, as applied to each arithmetic operation we provide the details below. \textbf{(i) Plain} refers to the simplest formatting where we simply create a sequence as the mathematical representation of the corresponding operation (\eg $\mathsf{A_3A_2A_1+B_3B_1B_1=C_3C_2C_1}$). For \textbf{(ii) Reverse}, we simply reverse the digits of the output so that they appear in increasing order from LSB to MSB (\eg $\mathsf{\$A_3A_2A_1+B_3B_1B_1=C_1C_2C_3\$}$). \textbf{(iii) Simplified Scratchpad} and \textbf{(iv) Detailed Scratchpad} provide algorithmic reasoning steps like \citep{nye2021show, zhou2022teaching} so as to help the model get more ``information'' per sample. Our intuition is that this approach nudges the model towards actually learning the algorithm of addition or subtraction rather than merely trying to fit the training examples. Refer to Appendix~\ref{sec:prompt_examples} for detailed examples of data formatting for each arithmetic operation.

\paragraph{Addition} We focus on additions of positive numbers up to 3-digits, in which the plain formatting would look like $\mathsf{A_3A_2A_1+B_3B_1B_1=C_3C_2C_1}$. 
For experiments on comparing data sampling presented in Figure~\ref{fig:data_sampling}, we pad the two operands and the output with zero, to be of length 3 and 4 respectively. For all other experiments, we \textbf{do not utilize zero-padding. }
For Scratchpad-based methods \textbf{(iii, iv)}, 
we provide the digit-wise addition (denoted as $A$) and carry-on (denoted as $C$) information for intermediate steps from the least significant bit (LSB) to the most significant bit (MSB).

\paragraph{Subtraction} We consider subtraction of positive numbers up to $3$ digits, written as $\mathsf{A_3A_2A_1 - B_3B_2B_1 = C_3C_2C_1}$ for plain formatting. 
As with addition, Scratchpad-based methods \textbf{(iii, iv)}, present the intermediate steps of digit-wise subtraction and carry-ons\footnote{As explained in Section~\ref{sec:data_sampling}, we use the term ``carry-on" to refer to the ``borrow" operation}. These steps are performed from the least significant bit (LSB) to the most significant bit (MSB). If the final result after computing all the digit-wise subtractions is negative, we subtract the number in the most significant bit (MSB) position multiplied by 10 to the power of (number of digits in the output - 1) from the remaining digits in the output. In Section~\ref{sec:appendix_subtraction_detailed_scratchpad}, we present an alternative version of the detailed scratchpad formatting for subtraction. 

\paragraph{Multiplication} We consider multiplication of positive numbers only up to 2-digits. Examples with (i) plain formatting look like: $\mathsf{A_2A_1 * B_2B_1 = C_4C_3C_2C_1}$ while (ii) reverse is formatted as $\mathsf{A_2A_1 * B_2B_1 = C_1C_2C_3C_4}$. For (iv) detailed scratchpad method, we simplify each intermediate step by performing a series of multiplications between the first operand and each digit of the second operand, starting from the least significant bit (LSB) and moving towards the most significant bit (MSB). For each step, we multiply the result by an exponentiation of 10 corresponding to the relative digit position.

\paragraph{Sine} We consider decimal numbers in the range of $[-\pi/2, \pi/2]$, truncated to 4-digits of precision with (i) plain formatting: $\sin(\mathsf{A_0.A_1A_2A_3A_4})=\mathsf{B_0.B_1B_2B_3B_4}$. For (iv) detailed scratchpad, we include the individual steps of the Taylor series expansion for sine, which is represented as $\sin(x) = x - \frac{1}{3!}x^3 + \frac{1}{5!}x^5 - \frac{1}{7!}x^7 + \cdots $. It is important to note that these intermediate steps involve exponentiation, which may not be any easier to compute than the sine operation itself.

\paragraph{Square Root}  We consider decimal numbers in the range of $[1, 10)$, truncated to 4-digits of precision with the format, with (i) plain formatting: $\text{sqrt}(\mathsf{A_0.A_1A_2A_3A_4})=\mathsf{B_0.B_1B_2B_3B_4}$. For (iv) detailed scratchpad, We present each step of Newton's method for computing the square root function. The iterative formula is given by $x_n = \frac{1}{2} (x_{n-1} + \frac{x}{x_{n-1}})$, where $x_0$ is initialized as the floor of the square root value of the operand $x$. It is important to note that these intermediate steps involve a division operation, which can be as complex as the square root operation itself.

\subsection{Model}\label{appdx:model}

For all experiments, we use a Decoder-only Transformer architecture. Specifically, we primarily use the NanoGPT model, a scaled-down variant of the GPT-2 model with half the number of self-attention layers, heads, and embedding dimension. Note that we use character-level tokenization instead of using the OpenAI's BPE tokenizer (Tiktoken) of vocabulary size $50257$, making the vocabulary size significantly smaller. We use a learnable absolute positional embedding initialized randomly, following the GPT-2 model. Are results are generated using a temperature of 0.8.

In the case of arithmetic tasks performed on plain and reverse formatting, we set a context length of $256$ for NanoGPT experiments. The length of a single train example falls within the range of $13$ to $15$, approximately. However, when conducting experiments on scratchpad formatting, we increase the context length to 1024. This adjustment allows us to accommodate more examples per batch. In the case of simplified scratchpad, the length of each train example is approximately 64, while the detailed scratchpad has a length of approximately 281. For GPT-2 experiments we fix the context length to 1024 for all experiments. See Table~\ref{table:model_config} for details on model configuration. 

For experiments on fine-tuning a pretrained large language model, we use OpenAI's GPT-3 model - Ada, Curie, and Davinci. 

\begin{figure}[h]
    \centering
    \includegraphics[width=0.25\textwidth]{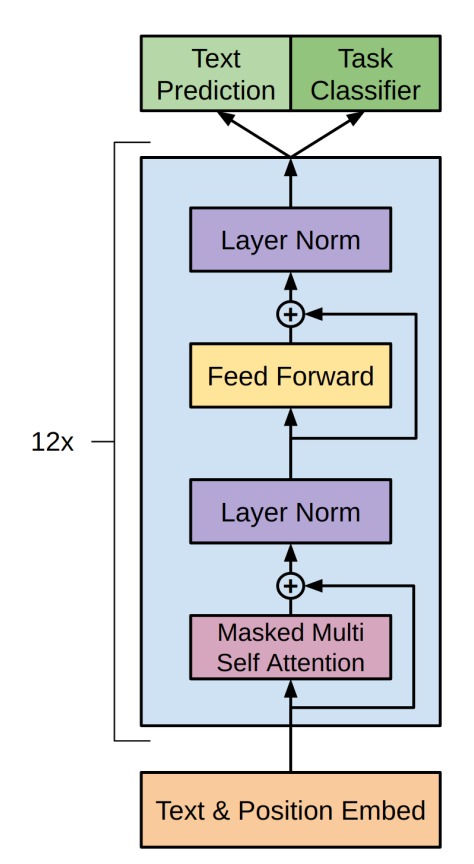}
    \caption{The GPT-2 Architecture. Image from ~\citep{Radford2018ImprovingLU}. NanoGPT model is a smaller model with half the number of self-attention layers, multi-heads, and embedding dimensions.}
\label{fig:gpt2_arch}
\end{figure}

\begin{table}[ht!]
\caption{NanoGPT and GPT-2 model configuration}
\footnotesize
\vspace{1mm}
\centering
\small
\setlength{\tabcolsep}{4pt} %
\renewcommand{\arraystretch}{0.5}
     {
        \begin{tabular}{cccccc} %
        \toprule
        Model & Input Formatting & Context Length & Self-Attn Layers & Num Heads & Embedding Dim \\
        \midrule 
        \multirow{2}{*}{NanoGPT} & Plain, Reverse &  256 &  6 &  6 &  384 \\
        & Scratchpad & 1024 &  6 &  6 &  384 \\
        \multirow{2}{*}{GPT-2} & Plain, Reverse & 1024  &  12 &  12 &  768 \\
        & Scratchpad & 1024 &  12 &  12 &  768 \\
        \bottomrule
        \end{tabular}
    }
\label{table:model_config}
\end{table}

\subsection{Hyperparameter Configurations}\label{appdx:config}

In this section, we provide a detailed overview of the hyperparameter configuration used in our experiments in Table~\ref{table:hyperparam_nanogpt} and \ref{table:hyperparam_gpt2}. To enhance memory efficiency and training speed, we employ flash attention. For most experiments, we utilize the bfloat16 data type. However, when working with Nvidia 2080 GPUs, which do not support bfloat16, we switch to float16. It is worth noting that we did not observe significant differences in training and evaluation performance between the two data types.

For the GPT-2 experimentation, we reduced the batch size to 8 to accommodate the GPU memory limitations. However, to mitigate the impact of the smaller batch size, we employed gradient accumulation steps. This approach involves taking multiple steps between gradient updates, effectively increasing the \textit{effective} batch size to 64. For specific hyperparameter details, please refer to Table~\ref{table:hyperparam_gpt2}.

\begin{table}[ht]
\caption{Hyper Parameters used for NanoGPT experiments on arithmetic tasks}
\footnotesize
\vspace{1mm}
\centering
\small
\setlength{\tabcolsep}{4pt} %
\renewcommand{\arraystretch}{0.5}
{
\begin{tabular}{ccccccccc}
\toprule
Input Format & Batch Size & Optimizer & LR & Betas & Iterations & Warmup Iter & Wt decay & Dropout \\
\midrule 
Plain, Reverse &  256 &  AdamW &  0.001 &  $(0.9,0.99)$ & 5000 & 100 & 0.1 & 0.2 \\
Scratchpad & 16 &  AdamW &  0.001 &  $(0.9,0.99)$ & 50000 & 0 & 0.1 & 0.2 \\
\bottomrule
\end{tabular}
}
\label{table:hyperparam_nanogpt}
\end{table}

\begin{table}[ht]
\caption{Hyper Parameters used for GPT-2 experiments on arithmetic tasks}
\footnotesize
\vspace{1mm}
\centering
\small
\setlength{\tabcolsep}{4pt}
\begin{tabular}{ccccccccc}
\toprule
Input Format & Batch Size & Optimizer & LR & Betas & Iterations & Warmup Iter & Wt decay & Dropout \\
\midrule 
Plain, Reverse &  64 &  AdamW &  0.0005 &  $(0.9,0.99)$ & 5000 & 100 & 0.1 & 0.2 \\
Scratchpad & 64 &  AdamW &  0.0005 &  $(0.9,0.99)$ & 20000 & 0 & 0.1 & 0.2 \\
\bottomrule
\end{tabular}
\label{table:hyperparam_gpt2}

\end{table}

\begin{table}[ht!]
\caption{Hyper Parameters used for tandem training experiments in Section~\ref{sec:exp3}. }
\footnotesize
\vspace{1mm}
\centering
\small
\setlength{\tabcolsep}{4pt}
\begin{tabular}{ccccccccc}
\toprule
Model & Batch Size & Optimizer & LR & Betas & Iterations & Warmup Iter & Wt decay & Dropout \\
\midrule 
NanoGPT &  16 &  AdamW &  0.001 &  $(0.9,0.99)$ & 5000 & 0 & 0.1 & 0.2 \\
GPT-2 & 40 &  AdamW &  0.0006 &  $(0.9,0.95)$ & 50000 & 2000 & 0.1 & 0.2 \\
\bottomrule
\end{tabular}
\label{table:hyperparam_tandem}
\end{table}

\begin{figure}[h!] 
\centering
\includegraphics[width=1\textwidth]{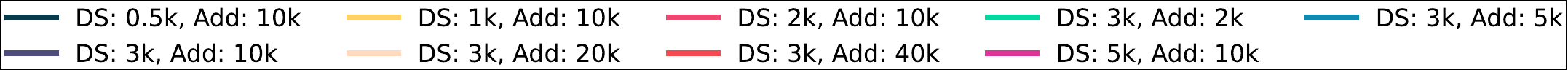}
\subfloat[\scriptsize{NanoGPT, plain addition}]{\includegraphics[width=0.32\textwidth]{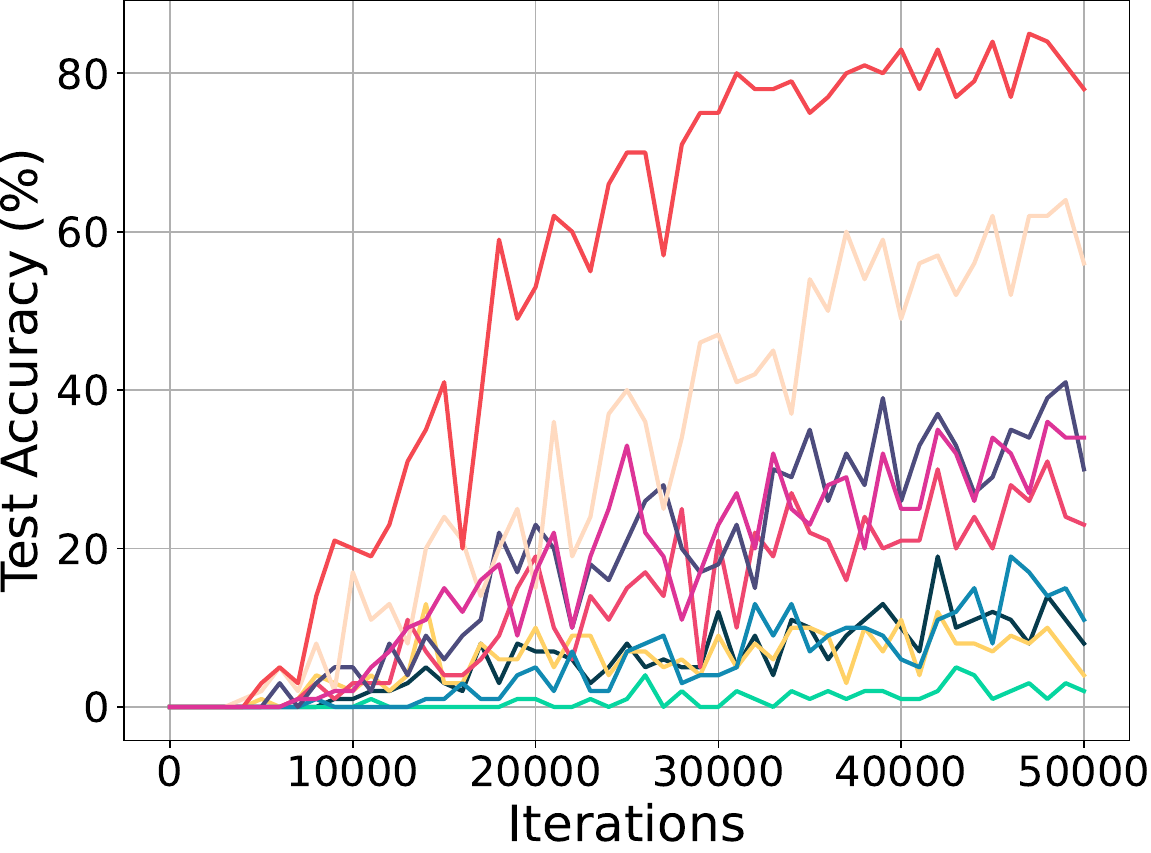}}
\hspace{0.1mm}
\subfloat[\scriptsize{NanoGPT, detailed scratchpad addition}]{\includegraphics[width=0.32\textwidth]{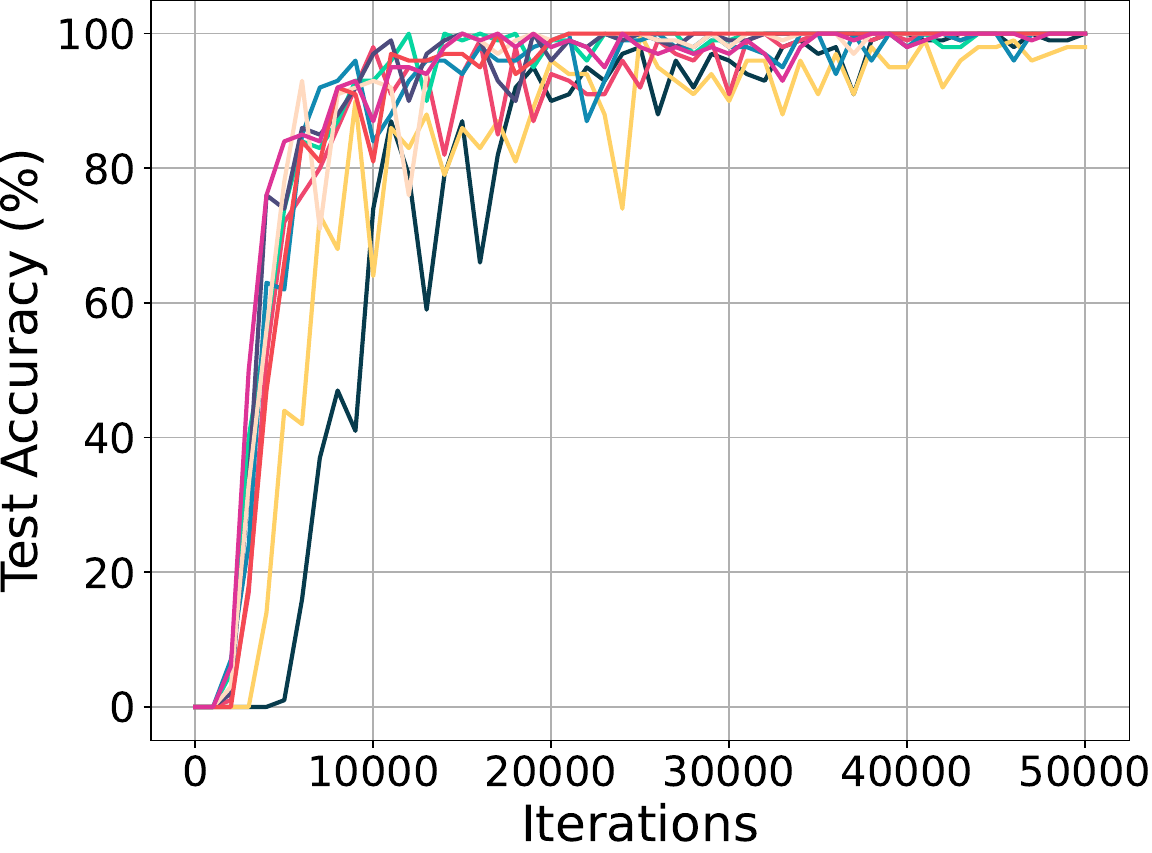}}
\hspace{0.1mm}
\subfloat[\scriptsize{NanoGPT, Perplexity}]{\includegraphics[width=0.32\textwidth]{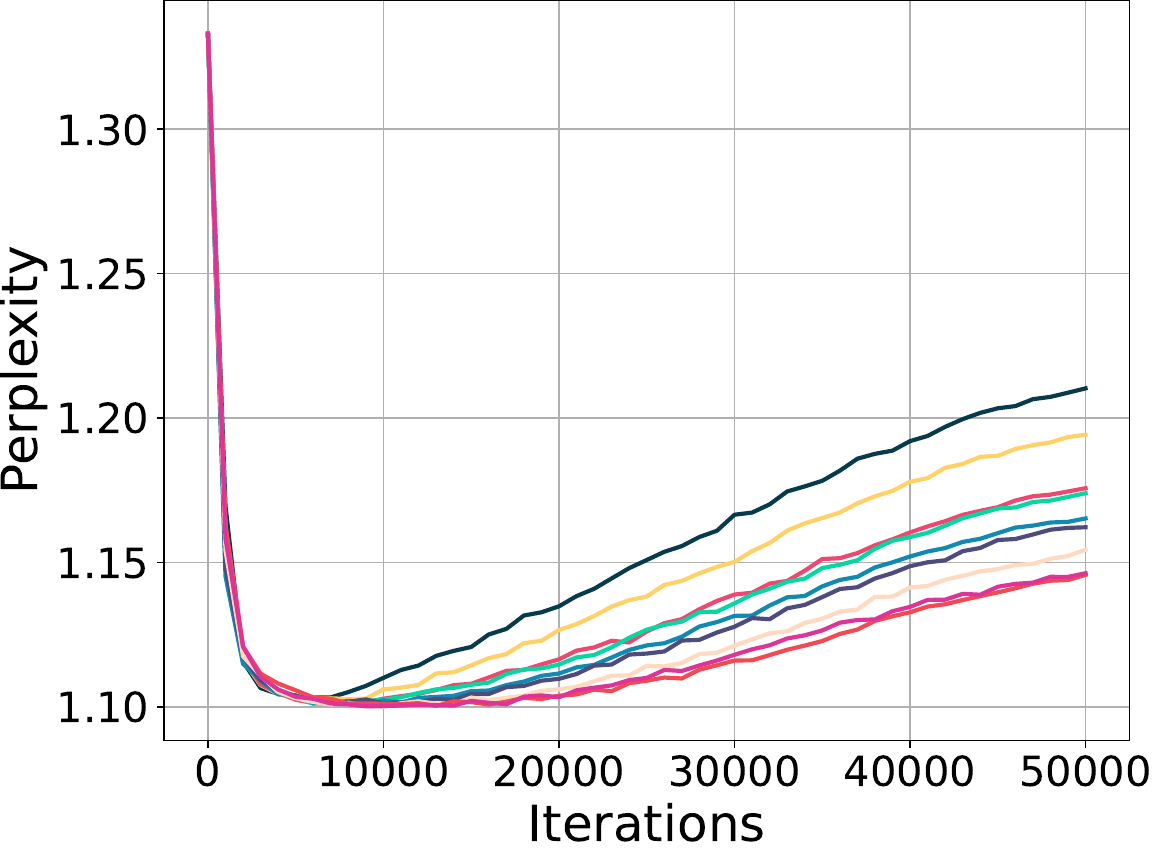}}
\vspace{-0.5mm}
\subfloat[\scriptsize{GPT-2, plain addition}]{\includegraphics[width=0.32\textwidth]{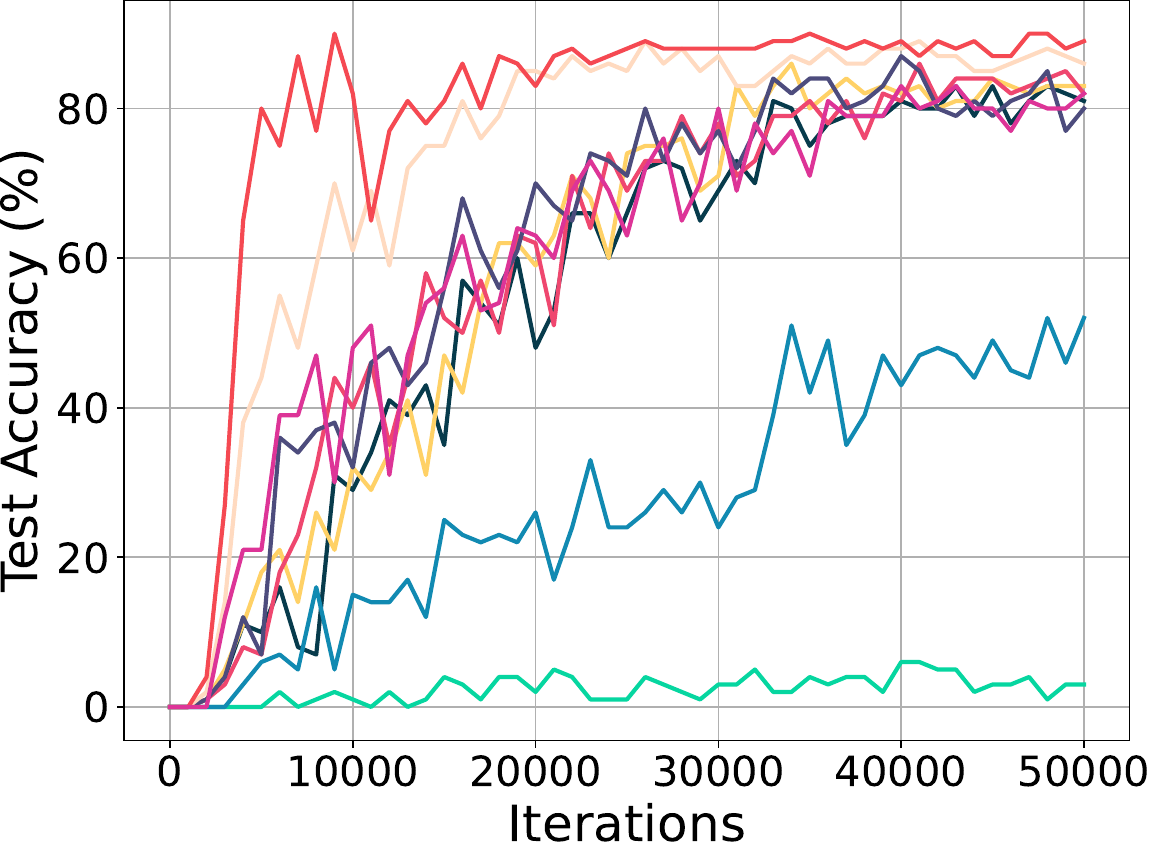}}
\hspace{0.1mm}
\subfloat[\scriptsize{GPT-2, detailed scratchpad addition}]{\includegraphics[width=0.32\textwidth]{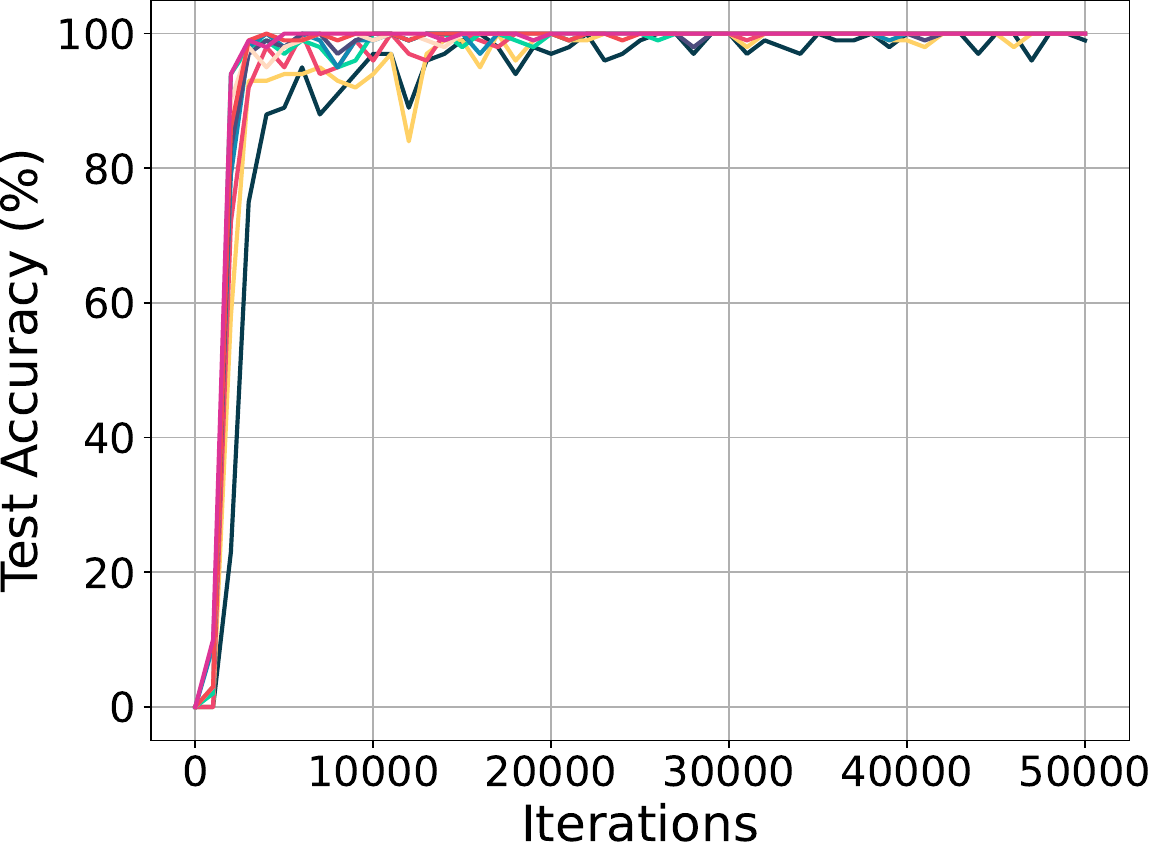}}
\hspace{0.1mm}
\subfloat[\scriptsize{GPT-2, Perplexity}]{\includegraphics[width=0.32\textwidth]{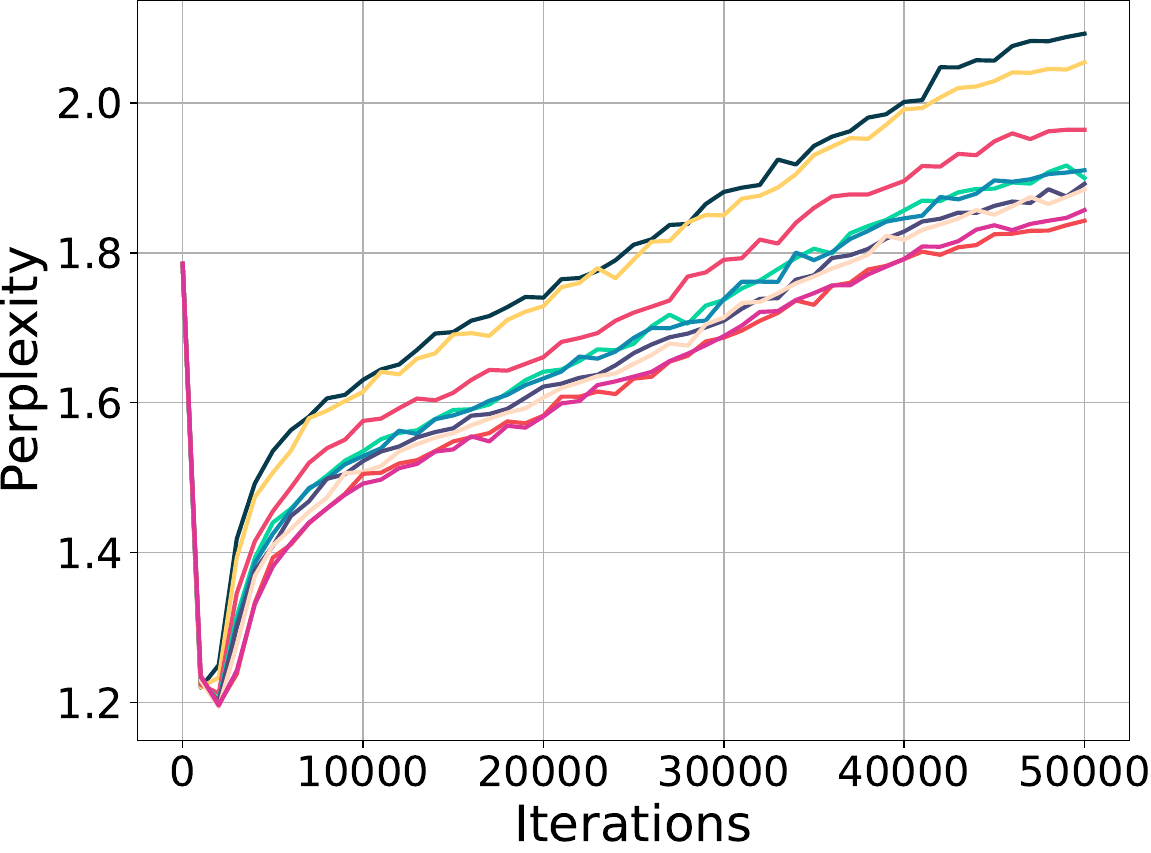}}
    \caption{Training loss curves for NanoGPT and GPT-2 trained with varying numbers of plain (Add) and detailed scratchpad (DS) samples as well as the shakespeare dataset as described in Section~\ref{sec:exp3}. As we can see, the model continues to improve in addition accuracy as the number of iterations increases. However, the training perplexity on Shakespeare also tends to increase, which indicates some overfitting. However, we note that the model still outputs ``reasonable'' text when prompted with shakespeare text.} 
\label{fig:train_curve_nanogpt}
\vspace{10ex}
\end{figure}

\newpage
\section{Prompt Examples}\label{sec:prompt_examples}
In this section, we provide three examples of each formatting (plain, reverse, simplified scratchpad, detailed scratchpad) of arithmetic operations ($+,-,\times,\sin{},\sqrt{})$. 

\subsection{Addition}
\begin{AIbox}[breakable]{\bf{\large Addition Examples}}
\vspace{5mm}
\begin{minipage}[t]{0.35\linewidth}
\centering
\textbf{Plain}
\begin{lstlisting}[language=markdown]
266+738=(*@\highlighttext{1004}@*)
980+743=(*@\highlighttext{1723}@*)
41+34=(*@\highlighttext{75}@*)
\end{lstlisting}

\vspace{2ex}
\textbf{Reverse}
\begin{lstlisting}[language=markdown]
$913+524=(*@\highlighttext{1437\$}@*)
$226+598=(*@\highlighttext{824\$}@*)
$35+58=(*@\highlighttext{93\$}@*)
\end{lstlisting}

\vspace{2ex}
\textbf{Simplified Scratchpad}
\begin{lstlisting}[language=markdown]
Input:
922+244
Target:
(*@\highlighttext{A->6 , C->0}@*)
(*@\highlighttext{A->6 , C->0}@*)
(*@\highlighttext{A->1 , C->1.}@*)
(*@\highlighttext{1166}@*)
Input:
285+43
Target:
(*@\highlighttext{A->8 , C->0}@*)
(*@\highlighttext{A->2 , C->1}@*)
(*@\highlighttext{A->3 , C->0.}@*)
(*@\highlighttext{328}@*)
Input:
993+849
Target:
(*@\highlighttext{A->2 , C->1}@*)
(*@\highlighttext{A->4 , C->1}@*)
(*@\highlighttext{A->8 , C->1.}@*)
(*@\highlighttext{1842}@*)
\end{lstlisting}

\end{minipage}
\begin{minipage}[t]{0.65\linewidth}
\centering
\textbf{Detailed Scratchpad}
\begin{lstlisting}[language=markdown]
Input:
396+262
Target:
(*@\highlighttext{<scratch>}@*)
(*@\highlighttext{[3,9,6] has 3 digits.}@*)
(*@\highlighttext{[2,6,2] has 3 digits.}@*)
(*@\highlighttext{[3,9,6] + [2,6,2] , A=[] , C=0 , 6+2+0=8 , A->8 , C->0}@*)
(*@\highlighttext{[3,9] + [2,6] , A=[8] , C=0 , 9+6+0=15 , A->5 , C->1}@*)
(*@\highlighttext{[3] + [2] , A=[5,8] , C=1 , 3+2+1=6 , A->6 , C->0}@*)
(*@\highlighttext{[] + [] , A=[6,5,8] C=0 , END}@*)
(*@\highlighttext{</scratch>}@*)
(*@\highlighttext{6 5 8}@*)
Input:
796+890
Target:
(*@\highlighttext{<scratch>}@*)
(*@\highlighttext{[7,9,6] has 3 digits.}@*)
(*@\highlighttext{[8,9,0] has 3 digits.}@*)
(*@\highlighttext{[7,9,6] + [8,9,0] , A=[] , C=0 , 6+0+0=6 , A->6 , C->0}@*)
(*@\highlighttext{[7,9] + [8,9] , A=[6] , C=0 , 9+9+0=18 , A->8 , C->1}@*)
(*@\highlighttext{[7] + [8] , A=[8,6] , C=1 , 7+8+1=16 , A->6 , C->1}@*)
(*@\highlighttext{[] + [] , A=[6,8,6] C=1 , END}@*)
(*@\highlighttext{</scratch>}@*)
(*@\highlighttext{1 6 8 6}@*)
Input:
788+989
Target:
(*@\highlighttext{<scratch>}@*)
(*@\highlighttext{[7,8,8] has 3 digits.}@*)
(*@\highlighttext{[9,8,9] has 3 digits.}@*)
(*@\highlighttext{[7,8,8] + [9,8,9] , A=[] , C=0 , 8+9+0=17 , A->7 , C->1}@*)
(*@\highlighttext{[7,8] + [9,8] , A=[7] , C=1 , 8+8+1=17 , A->7 , C->1}@*)
(*@\highlighttext{[7] + [9] , A=[7,7] , C=1 , 7+9+1=17 , A->7 , C->1}@*)
(*@\highlighttext{[] + [] , A=[7,7,7] C=1 , END}@*)
(*@\highlighttext{</scratch>}@*)
(*@\highlighttext{1 7 7 7}@*)
\end{lstlisting}
\end{minipage}
\end{AIbox}

\newpage
\subsection{Subtraction}
\begin{AIbox}[breakable]{\bf{\large Subtraction Examples}}
\vspace{5mm}
\begin{minipage}[t]{0.35\linewidth}
\centering
\textbf{Plain}
\begin{lstlisting}[language=markdown]
266-738=(*@\highlighttext{-472}@*)
980-743=(*@\highlighttext{237}@*)
41-34=(*@\highlighttext{7}@*)
\end{lstlisting}

\vspace{2ex}
\textbf{Reverse}
\begin{lstlisting}[language=markdown]
$913-524=(*@\highlighttext{983\$}@*)
$226-598=(*@\highlighttext{273-\$}@*)
$35-58=(*@\highlighttext{32-\$}@*)
\end{lstlisting}

\vspace{2ex}
\textbf{Simplified Scratchpad}
\begin{lstlisting}[language=markdown]
Input:
396-262
Target:
(*@\highlighttext{A->4 , C->0}@*)
(*@\highlighttext{A->3 , C->0}@*)
(*@\highlighttext{A->1 , C->0}@*)
(*@\highlighttext{100+34=134.}@*)
(*@\highlighttext{134}@*)
Input:
796-890
Target:
(*@\highlighttext{A->6 , C->0}@*)
(*@\highlighttext{A->0 , C->0}@*)
(*@\highlighttext{A->-1 , C->-1}@*)
(*@\highlighttext{-100+6=-94.}@*)
(*@\highlighttext{-94}@*)
Input:
788-989
Target:
(*@\highlighttext{A->9 , C->-1}@*)
(*@\highlighttext{A->9 , C->-1}@*)
(*@\highlighttext{A->-3 , C->-1}@*)
(*@\highlighttext{-300+99=-201.}@*)
(*@\highlighttext{-201}@*)
\end{lstlisting}
\begin{minted}[escapeinside=||]{markdown}
||
\end{minted}
\end{minipage}
\begin{minipage}[t]{0.66\linewidth}
\centering
\textbf{Detailed Scratchpad}
\begin{lstlisting}[language=markdown]
Input:
396-262
Target:
(*@\highlighttext{<scratch>}@*)
(*@\highlighttext{[3,9,6] has 3 digits.}@*)
(*@\highlighttext{[2,6,2] has 3 digits.}@*)
(*@\highlighttext{[3,9,6] - [2,6,2] , A=[] , C=0 , 6-2-0=4 , A->4 , C->0}@*)
(*@\highlighttext{[3,9] - [2,6] , A=[4] , C=0 , 9-6-0=3 , A->3 , C->0}@*)
(*@\highlighttext{[3] - [2] , A=[3,4] , C=0 , 3-2-0=1 , A->1 , C->0}@*)
(*@\highlighttext{[] - [] , A=[1,3,4]}@*)
(*@\highlighttext{100+34=134 , END}@*)
(*@\highlighttext{</scratch>}@*)
(*@\highlighttext{1 3 4}@*)
Input:
796-890
Target:
(*@\highlighttext{<scratch>}@*)
(*@\highlighttext{[7,9,6] has 3 digits.}@*)
(*@\highlighttext{[8,9,0] has 3 digits.}@*)
(*@\highlighttext{[7,9,6] - [8,9,0] , A=[] , C=0 , 6-0-0=6 , A->6 , C->0}@*)
(*@\highlighttext{[7,9] - [8,9] , A=[6] , C=0 , 9-9-0=0 , A->0 , C->0}@*)
(*@\highlighttext{[7] - [8] , A=[0,6] , C=0 , 7-8-0=-1 , A->-1 , C->-1}@*)
(*@\highlighttext{[] - [] , A=[-1,0,6]}@*)
(*@\highlighttext{</scratch>}@*)
(*@\highlighttext{-9 4}@*)
Input:
788-989
Target:
(*@\highlighttext{<scratch>}@*)
(*@\highlighttext{[7,8,8] has 3 digits.}@*)
(*@\highlighttext{[9,8,9] has 3 digits.}@*)
(*@\highlighttext{[7,8,8] - [9,8,9] , A=[] , C=0 , 8-9-0+10=9 , A->9 , C->-1}@*)
(*@\highlighttext{[7,8] - [9,8] , A=[9] , C=-1 , 8-8-1+10=9 , A->9 , C->-1}@*)
(*@\highlighttext{[7] - [9] , A=[9,9] , C=-1 , 7-9-1=-3 , A->-3 , C->-1}@*)
(*@\highlighttext{[] - [] , A=[-3,9,9]}@*)
(*@\highlighttext{-300+99=-201 , END}@*)
(*@\highlighttext{</scratch>}@*)
(*@\highlighttext{-2 0 1}@*)
\end{lstlisting}
\end{minipage}
\end{AIbox}

\newpage
\subsection{Multiplication}
\vspace{-2mm}
\begin{AIbox}[breakable]{\bf{\large Multiplication Examples}}
\vspace{5mm}
\begin{minipage}[t]{0.22\linewidth}
\centering
\textbf{Plain}
\begin{lstlisting}[language=markdown]
5*32=(*@\highlighttext{160}@*)
66*76=(*@\highlighttext{5016}@*)
67*74=(*@\highlighttext{4958}@*)
\end{lstlisting}

\vspace{2ex}
\textbf{Reverse}
\begin{lstlisting}[language=markdown]
$5*32=(*@\highlighttext{061\$}@*)
$66*76=(*@\highlighttext{6105\$}@*)
$67*74=(*@\highlighttext{8594\$}@*)
\end{lstlisting}
\end{minipage}
\begin{minipage}[t]{0.78\linewidth}
\centering
\textbf{Detailed Scratchpad}
\begin{lstlisting}[language=markdown]
Input:
22*52
Target:
(*@\highlighttext{<scratch>}@*)
(*@\highlighttext{[2,2] has 2 digits.}@*)
(*@\highlighttext{[5,2] has 2 digits.}@*)
(*@\highlighttext{[2,2] * 2 , A=[4,4] , k=1 , B=[4,4] , C=0+44=44}@*)
(*@\highlighttext{[2,2] * 5 , A=[1,1,0] , k=10 , B=[1,1,0,0] , C=44+1100=1144 , END}@*)
(*@\highlighttext{</scratch>}@*)
(*@\highlighttext{1 1 4 4}@*)
Input:
8*69
Target:
(*@\highlighttext{<scratch>}@*)
(*@\highlighttext{[8] has 1 digits.}@*)
(*@\highlighttext{[6,9] has 2 digits.}@*)
(*@\highlighttext{[8] * 9 , A=[7,2] , k=1 , B=[7,2] , C=0+72=72}@*)
(*@\highlighttext{[8] * 6 , A=[4,8] , k=10 , B=[4,8,0] , C=72+480=552 , END}@*)
(*@\highlighttext{</scratch>}@*)
(*@\highlighttext{5 5 2}@*)
Input:
52*34
Target:
(*@\highlighttext{<scratch>}@*)
(*@\highlighttext{[5,2] has 2 digits.}@*)
(*@\highlighttext{[3,4] has 2 digits.}@*)
(*@\highlighttext{[5,2] * 4 , A=[2,0,8] , k=1 , B=[2,0,8] , C=0+208=208}@*)
(*@\highlighttext{[5,2] * 3 , A=[1,5,6] , k=10 , B=[1,5,6,0] , C=208+1560=1768 , END}@*)
(*@\highlighttext{</scratch>}@*)
(*@\highlighttext{1 7 6 8}@*)
\end{lstlisting}
\end{minipage}
\end{AIbox}

\vspace{-3mm}
\subsection{Sine}
\vspace{-2mm}
\begin{AIbox}[breakable]{\bf{\large Sine Examples}}
\vspace{5mm}
\begin{minipage}[t]{0.33\linewidth}
\textbf{Plain}
\begin{lstlisting}[language=markdown]
sin(1.0313)=(*@\highlighttext{0.8579}@*)
sin(-0.6909)=(*@\highlighttext{-0.6373}@*)
sin(-0.5719)=(*@\highlighttext{-0.5413}@*)
\end{lstlisting}
\end{minipage}
\begin{minipage}[t]{0.66\linewidth}
\textbf{Detailed Scratchpad}
\begin{lstlisting}[language=markdown]
Input:
sin(1.0313)
Target:
(*@\highlighttext{<scratch>}@*)
(*@\highlighttext{x\_0=1.0313}@*)
(*@\highlighttext{x\_1: x\_0 - 1/3! * (x\^{}3) , x\_1=0.8484}@*)
(*@\highlighttext{x\_2: x\_1 + 1/5! * (x\^{}5) , x\_2=0.8581}@*)
(*@\highlighttext{x\_3: x\_2 - 1/7! * (x\^{}7) , x\_3=0.8578}@*)
(*@\highlighttext{x\_4: x\_3 + 1/9! * (x\^{}9) , x\_4=0.8578 , END}@*)
(*@\highlighttext{</scratch>}@*)
(*@\highlighttext{0.8578}@*)
Input:
sin(-0.6909)
Target:
(*@\highlighttext{<scratch>}@*)
(*@\highlighttext{x\_0=-0.6909}@*)
(*@\highlighttext{x\_1: x\_0 - 1/3! * (x\^{}3) , x\_1=-0.636}@*)
(*@\highlighttext{x\_2: x\_1 + 1/5! * (x\^{}5) , x\_2=-0.6374}@*)
(*@\highlighttext{x\_3: x\_2 - 1/7! * (x\^{}7) , x\_3=-0.6374}@*)
(*@\highlighttext{x\_4: x\_3 + 1/9! * (x\^{}9) , x\_4=-0.6375 , END}@*)
(*@\highlighttext{</scratch>}@*)
(*@\highlighttext{-0.6375}@*)
Input:
sin(-0.5719)
Target:
(*@\highlighttext{<scratch>}@*)
(*@\highlighttext{x\_0=-0.5719}@*)
(*@\highlighttext{x\_1: x\_0 - 1/3! * (x\^{}3) , x\_1=-0.5408}@*)
(*@\highlighttext{x\_2: x\_1 + 1/5! * (x\^{}5) , x\_2=-0.5414}@*)
(*@\highlighttext{x\_3: x\_2 - 1/7! * (x\^{}7) , x\_3=-0.5414}@*)
(*@\highlighttext{x\_4: x\_3 + 1/9! * (x\^{}9) , x\_4=-0.5415 , END}@*)
(*@\highlighttext{</scratch>}@*)
(*@\highlighttext{-0.5415}@*)
\end{lstlisting}
\end{minipage}
\end{AIbox}

\subsection{Square Root}

\begin{AIbox}[breakable]{\bf{\large Square Root Examples}}
\vspace{5mm}
\begin{minipage}[t]{0.33\linewidth}
\textbf{Plain}
\begin{lstlisting}[language=markdown]
sqrt(7.2726)=(*@\highlighttext{2.6967}@*)
sqrt(3.6224)=(*@\highlighttext{1.9032}@*)
sqrt(1.0895)=(*@\highlighttext{1.0437}@*)
\end{lstlisting}
\end{minipage}
\begin{minipage}[t]{0.66\linewidth}
\textbf{Detailed Scratchpad}
\begin{lstlisting}[language=markdown]
Input:
sqrt(7.1042)
Target:
(*@\highlighttext{<scratch>}@*)
(*@\highlighttext{x\_0=2}@*)
(*@\highlighttext{x\_1: 1/2*(2+7.1042/2)=2.776, x\_1=2.776}@*)
(*@\highlighttext{x\_2: 1/2*(2.776+7.1042/2.776)=2.6675, x\_2=2.6675}@*)
(*@\highlighttext{x\_3: 1/2*(2.6675+7.1042/2.6675)=2.6653, x\_3=2.6653}@*)
(*@\highlighttext{x\_4: 1/2*(2.6653+7.1042/2.6653)=2.6653, x\_4=2.6653 , END}@*)
(*@\highlighttext{</scratch>}@*)
(*@\highlighttext{2.6653}@*)
Input:
sqrt(6.2668)
Target:
(*@\highlighttext{<scratch>}@*)
(*@\highlighttext{x\_0=2}@*)
(*@\highlighttext{x\_1: 1/2*(2+6.2668/2)=2.5667, x\_1=2.5667}@*)
(*@\highlighttext{x\_2: 1/2*(2.5667+6.2668/2.5667)=2.5041, x\_2=2.5041}@*)
(*@\highlighttext{x\_3: 1/2*(2.5041+6.2668/2.5041)=2.5033, x\_3=2.5033}@*)
(*@\highlighttext{x\_4: 1/2*(2.5033+6.2668/2.5033)=2.5033, x\_4=2.5033 , END}@*)
(*@\highlighttext{</scratch>}@*)
(*@\highlighttext{2.5033}@*)
Input:
sqrt(8.3216)
Target:
(*@\highlighttext{<scratch>}@*)
(*@\highlighttext{x\_0=2}@*)
(*@\highlighttext{x\_1: 1/2*(2+8.3216/2)=3.0804, x\_1=3.0804}@*)
(*@\highlighttext{x\_2: 1/2*(3.0804+8.3216/3.0804)=2.8909, x\_2=2.8909}@*)
(*@\highlighttext{x\_3: 1/2*(2.8909+8.3216/2.8909)=2.8847, x\_3=2.8847}@*)
(*@\highlighttext{x\_4: 1/2*(2.8847+8.3216/2.8847)=2.8847, x\_4=2.8847 , END}@*)
(*@\highlighttext{</scratch>}@*)
(*@\highlighttext{2.8847}@*)
\end{lstlisting}
\end{minipage}
\end{AIbox}

\subsection{Noisy Simple Scratchpad}
We provide one example for each case of adding noise in the simplified scratchpad experiments discussed in Section~\ref{sec:ablation_noisy}.

\begin{AIbox}[breakable]{\bf{\large Noisy Simple Scratchpad Examples}}
\vspace{5mm}
{\tt \footnotesize We provide one example for each case of adding noise in the simplified scratchpad experiments discussed in Section~\ref{sec:ablation_noisy}. The input prompt is highlighted in light blue, while the remaining part is highlighted in light green. We construct the dataset to have either correct or random digit-sum A and carry information C. For all cases, the final answer remains accurate.\\\\}
\textbf{Prompt:}
\begin{lstlisting}[language=markdown]
Input:
686+886
Target:
\end{lstlisting}
\vspace{2ex}
\begin{minipage}[t]{0.25\linewidth}
\textbf{Correct A \& C}
\begin{lstlisting}[language=markdown]
(*@\highlighttext{A->2 , C->1}@*)
(*@\highlighttext{A->7 , C->1}@*)
(*@\highlighttext{A->5 , C->1.}@*)
(*@\highlighttext{1572}@*)
\end{lstlisting}
\end{minipage}
\begin{minipage}[t]{0.25\linewidth}
\textbf{Random C}
\begin{lstlisting}[language=markdown]
(*@\highlighttext{A->2 , C->0}@*)
(*@\highlighttext{A->7 , C->0}@*)
(*@\highlighttext{A->5 , C->1.}@*)
(*@\highlighttext{1572}@*)
\end{lstlisting}\end{minipage}
\begin{minipage}[t]{0.24\linewidth}
\textbf{Random A}
\begin{lstlisting}[language=markdown]
(*@\highlighttext{A->0 , C->1}@*)
(*@\highlighttext{A->9 , C->1}@*)
(*@\highlighttext{A->9 , C->1.}@*)
(*@\highlighttext{1572}@*)
\end{lstlisting}
\end{minipage}
\begin{minipage}[t]{0.24\linewidth}
\textbf{Random A \& C}
\begin{lstlisting}[language=markdown]
(*@\highlighttext{A->8 , C->1}@*)
(*@\highlighttext{A->1 , C->0}@*)
(*@\highlighttext{A->2 , C->1.}@*)
(*@\highlighttext{1572}@*)
\end{lstlisting}
\end{minipage}
\end{AIbox}

\subsection{Example data for GPT-3 fine-tuning}
We provide an example from the training dataset consisting of one prompt-completion pair used for fine-tuning the GPT-3 model using OpenAI's API. The prompt is highlighted in light grey, while the completion is highlighted in light green. Note that for plain and reverse formatting, we include spacing between digits to ensure consistent tokenization of numbers. ``\#\#\#'' is used as the stop sequence for generation.

\subsubsection{Addition}
\begin{AIbox}[breakable]{\bf{\large Addition Examples}}
\vspace{5mm}
\begin{minipage}[t]{0.35\linewidth}
\centering
\textbf{Plain}
\begin{lstlisting}[language=markdown]
6 7 7 + 8 9 8 =(*@\highlighttext{1 5 7 5\#\#\#}@*)
\end{lstlisting}

\vspace{2ex}
\textbf{Reverse}
\begin{lstlisting}[language=markdown]
7 4 9 + 7 8 5 =(*@\highlighttext{ 4 3 5 1\#\#\#}@*)
\end{lstlisting}

\vspace{2ex}
\textbf{Simplified Scratchpad}
\begin{lstlisting}[language=markdown]
Input:
32+981
Target:
(*@\highlighttext{A->3 , C->0}@*)
(*@\highlighttext{A->2 , C->1}@*)
(*@\highlighttext{A->0 , C->1.}@*)
(*@\highlighttext{1013\#\#\#}@*)
\end{lstlisting}
\end{minipage}
\begin{minipage}[t]{0.66\linewidth}
\centering

\textbf{Detailed Scratchpad}
\begin{lstlisting}[language=markdown]
Input:
356+787
Target:
(*@\highlighttext{<scratch>}@*)
(*@\highlighttext{[3,5,6] has 3 digits.}@*)
(*@\highlighttext{[7,8,7] has 3 digits.}@*)
(*@\highlighttext{[3,5,6] + [7,8,7] , A=[] , C=0 , 6+7+0=13 , A->3 , C->1}@*)
(*@\highlighttext{[3,5] + [7,8] , A=[3] , C=1 , 5+8+1=14 , A->4 , C->1}@*)
(*@\highlighttext{[3] + [7] , A=[4,3] , C=1 , 3+7+1=11 , A->1 , C->1}@*)
(*@\highlighttext{[] + [] , A=[1,4,3] C=1 , END}@*)
(*@\highlighttext{</scratch>}@*)
(*@\highlighttext{1 1 4 3\#\#\#}@*)
\end{lstlisting}

\end{minipage}
\end{AIbox}

\subsubsection{Subtraction}

\begin{AIbox}[breakable]{\bf{\large Subtraction Examples}}
\vspace{5mm}
\begin{minipage}[t]{0.35\linewidth}
\centering
\textbf{Plain}

\begin{lstlisting}[language=markdown]
2 0 4 - 5 0 1 =(*@\highlighttext{ - 2 9 7\#\#\#}@*)
\end{lstlisting}

\vspace{2ex}
\textbf{Reverse}
\begin{lstlisting}[language=markdown]
7 3 4 - 9 6 7 =(*@\highlighttext{ 3 3 2 -\#\#\#}@*)
\end{lstlisting}

\vspace{2ex}
\textbf{Simplified Scratchpad}
\begin{lstlisting}[language=markdown]
Input:
695-489
Target:
(*@\highlighttext{A->6 , C->-1}@*)
(*@\highlighttext{A->0 , C->0}@*)
(*@\highlighttext{A->2 , C->0}@*)
(*@\highlighttext{200+6=206.}@*)
(*@\highlighttext{206\#\#\#}@*)
\end{lstlisting}
\end{minipage}
\begin{minipage}[t]{0.66\linewidth}
\centering
\textbf{Detailed Scratchpad}
\begin{lstlisting}[language=markdown]
Input:
848-367
Target:
(*@\highlighttext{<scratch>}@*)
(*@\highlighttext{[8,4,8] has 3 digits.[3,6,7] has 3 digits.}@*)
(*@\highlighttext{[8,4,8] - [3,6,7] , A=[] , C=0 , 8-7-0=1 , A->1 , C->0}@*)
(*@\highlighttext{[8,4] - [3,6] , A=[1] , C=0 , 4-6-0+10=8 , A->8 , C->-1}@*)
(*@\highlighttext{[8] - [3] , A=[8,1] , C=-1 , 8-3-1=4 , A->4 , C->0}@*)
(*@\highlighttext{[] - [] , A=[4,8,1]}@*)
(*@\highlighttext{400+81=481 , END}@*)
(*@\highlighttext{</scratch>}@*)
(*@\highlighttext{4 8 1\#\#\#}@*)
\end{lstlisting}

\end{minipage}
\end{AIbox}

\newpage
\subsubsection{Sine}
\begin{AIbox}[breakable]{\bf{\large Sine Examples}}
\vspace{5mm}
\begin{minipage}[t]{0.35\linewidth}
\centering
\textbf{Plain}

\begin{lstlisting}[language=markdown]
sin(-0.8649)
(*@\highlighttext{ -0.7611\#\#\#}@*)
\end{lstlisting}
\end{minipage}
\begin{minipage}[t]{0.66\linewidth}
\centering
\textbf{Detailed Scratchpad}
\begin{lstlisting}[language=markdown]
Input:
sin(-1.3516)
Target:
(*@\highlighttext{x\_0=-1.3516}@*)
(*@\highlighttext{x\_1: -1.3516 - 1/3! * (x*x*x) , x\_1=-0.9401}@*)
(*@\highlighttext{x\_2: -0.9401 + 1/5! * (x*x*x*x*x) , x\_2=-0.9777}@*)
(*@\highlighttext{x\_3: -0.9777 - 1/7! * (x*x*x*x*x*x*x) , x\_3=-0.9761}@*)
(*@\highlighttext{x\_4: -0.9761 + 1/9! * (x*x*x*x*x*x*x*x*x) , x\_4=-0.9762 , END}@*)
(*@\highlighttext{</scratch>}@*)
(*@\highlighttext{-0.9762\#\#\#}@*)
\end{lstlisting}

\end{minipage}
\end{AIbox}

\subsubsection{Square Root}
\begin{AIbox}[breakable]{\bf{\large Square Root Examples}}
\vspace{5mm}
\begin{minipage}[t]{0.35\linewidth}
\centering
\textbf{Plain}

\begin{lstlisting}[language=markdown]
sqrt(1.2178)
(*@\highlighttext{ 1.1035\#\#\#}@*)
\end{lstlisting}
\end{minipage}
\begin{minipage}[t]{0.66\linewidth}
\centering
\textbf{Detailed Scratchpad}
\begin{lstlisting}[language=markdown]
Input:
sqrt(5.5808)
Target:
(*@\highlighttext{<scratch>}@*)
(*@\highlighttext{x\_0=2}@*)
(*@\highlighttext{x\_1: 1/2*(2+5.5808/2)=2.3952, x\_1=2.3952}@*)
(*@\highlighttext{x\_2: 1/2*(2.3952+5.5808/2.3952)=2.3625, x\_2=2.3625}@*)
(*@\highlighttext{x\_3: 1/2*(2.3625+5.5808/2.3625)=2.3623, x\_3=2.3623}@*)
(*@\highlighttext{x\_4: 1/2*(2.3623+5.5808/2.3623)=2.3623, x\_4=2.3623 , END}@*)
(*@\highlighttext{</scratch>}@*)
(*@\highlighttext{2.3623\#\#\#}@*)
 \end{lstlisting}

\end{minipage}
\end{AIbox}

\end{document}